\documentclass[10pt]{article}
\usepackage[preprint]{tmlr}

\usepackage{amsmath,amssymb,amsthm,mathtools}
\usepackage{graphicx}
\usepackage{subcaption}
\usepackage{caption}
\usepackage{url}
\usepackage[ruled,vlined,linesnumbered]{algorithm2e}
\usepackage[most]{tcolorbox}
\usepackage{rotating}
\usepackage[table]{xcolor}
\usepackage{colortbl}
\usepackage{siunitx}
\usepackage{array}
\usepackage{booktabs}
\usepackage{multirow}
\usepackage{makecell}
\usepackage{enumitem}
\usepackage{thmtools}
\usepackage{thm-restate}
\usepackage{hyperref}
\usepackage{cleveref}
\usepackage{makecell}
\usepackage{MnSymbol}
\usepackage{tikz}
\usetikzlibrary{positioning, arrows.meta, shapes.geometric, fit, backgrounds, calc}

\definecolor{PCD}{HTML}{6A1B9A}
\definecolor{WS}{HTML}{EF6C00}
\definecolor{CAGrad}{HTML}{E53935}
\definecolor{PCGrad}{HTML}{6D4C41}
\definecolor{FAMO}{HTML}{1565C0}
\definecolor{MGDA}{HTML}{2E7D32}
\definecolor{AuxiNash}{HTML}{D66F97}

\renewcommand{\thefootnote}{\fnsymbol{footnote}}

\definecolor{pcdhighlight}{RGB}{230,245,255}
\definecolor{headergray}{RGB}{240,240,240}
 
\newcolumntype{C}[1]{>{\centering\arraybackslash}p{#1}}
\newcolumntype{L}[1]{>{\raggedright\arraybackslash}p{#1}}

\newtheorem{theorem}{Theorem}[section]
\newtheorem{lemma}[theorem]{Lemma}
\newtheorem{proposition}[theorem]{Proposition}
\newtheorem{corollary}[theorem]{Corollary}
\newtheorem{definition}[theorem]{Definition}
\newtheorem{assumption}[theorem]{Assumption}
\newtheorem{remark}[theorem]{Remark}

\newtheorem*{theorem*}{Theorem}

\newcommand{\R}{\mathbb{R}}
\newcommand{\conv}{\mathrm{conv}}
\newcommand{\cone}{\mathrm{cone}}

\newcommand{\norm}[1]{\left\|#1\right\|}
\newcommand{\T}{\top}
\newcommand{\vth}{\boldsymbol{\theta}}

\newcommand{\vd}{\mathbf{d}}
\newcommand{\vg}{\mathbf{g}}
\newcommand{\vzero}{\mathbf{0}}
\newcommand{\va}{\mathbf{a}}
\newcommand{\vb}{\mathbf{b}}

\newcommand{\vp}{\mathbf{p}}
\newcommand{\vx}{\mathbf{x}}
\newcommand{\vy}{\mathbf{y}}

\newcommand{\gt}{\tilde{\mathbf{g}}}
\newcommand{\dt}{\tilde{\mathbf{d}}}
\newcommand{\dts}{\dt^{*}}

\newtcolorbox{mybluebox}{
  colback=blue!5!white,
  colframe=blue!75!black,
  sharp corners,
  boxrule=0.5pt,
  fonttitle=\bfseries,
  enhanced
}

\title{Not All Objectives Are Born Equal: Priority-Constrained Descent for Hierarchical Multi-Objective Optimization}

\author{\name Dara Varam\textsuperscript{*,\,1,\,2} \email b00081313@aus.edu \\
      \addr $^1$ Department of Computer Science \& Engineering, American University of Sharjah \\
      $^2$ Senseable City Laboratory, Massachusetts Institute of Technology
      \AND
      \name Mohamed I. AlHajri\textsuperscript{*,\,1} \email mialhajri@aus.edu \\
      \addr $^1$ Department of Computer Science \& Engineering, American University of Sharjah
      }

\def\month{06}
\def\year{2026}
\def\openreview{\url{https://openreview.net/forum?id=XXXX}}

\begin{document}

\maketitle

\setcounter{footnote}{1}
\footnotetext{Equal contribution.}
\renewcommand{\thefootnote}{\arabic{footnote}}

\begin{abstract}

    Deep learning problems rarely involve objectives that are equal in importance. A primary objective defines the goal, whilst secondary objectives, such as sparsity, compression, or robustness constrain the solution. While existing multi-objective methods have proven effective in practice, they have a clear symmetry problem and neglect the inherent objective hierarchy built into these objective spaces. We introduce Priority-Constrained Descent (PCD), a gradient-based optimization framework designed to explicitly exploit hierarchical objective structures. PCD preserves the direction of primary descent whilst allowing for the minimal distortion necessary to guarantee progress on secondary objectives, controlled by a single $\tau \in [0,1]$ that dictates the strength of the distortion. The resulting formulation is invariant to objective scaling and admits exact closed-form solutions for problems with two and three objectives. We evaluate PCD within structured network compression settings, unstructured sparsity and low-rankness, and across a variety of synthetic experiments, showing Pareto dominance and better per-objective performance with secondary progress guarantees over existing methods, further exhibiting the interpretable trade-off that $\tau$ provides.
   
\end{abstract}

\section{Introduction}

\begin{figure*}[t]
    \centering
    \includegraphics[width=1\linewidth]{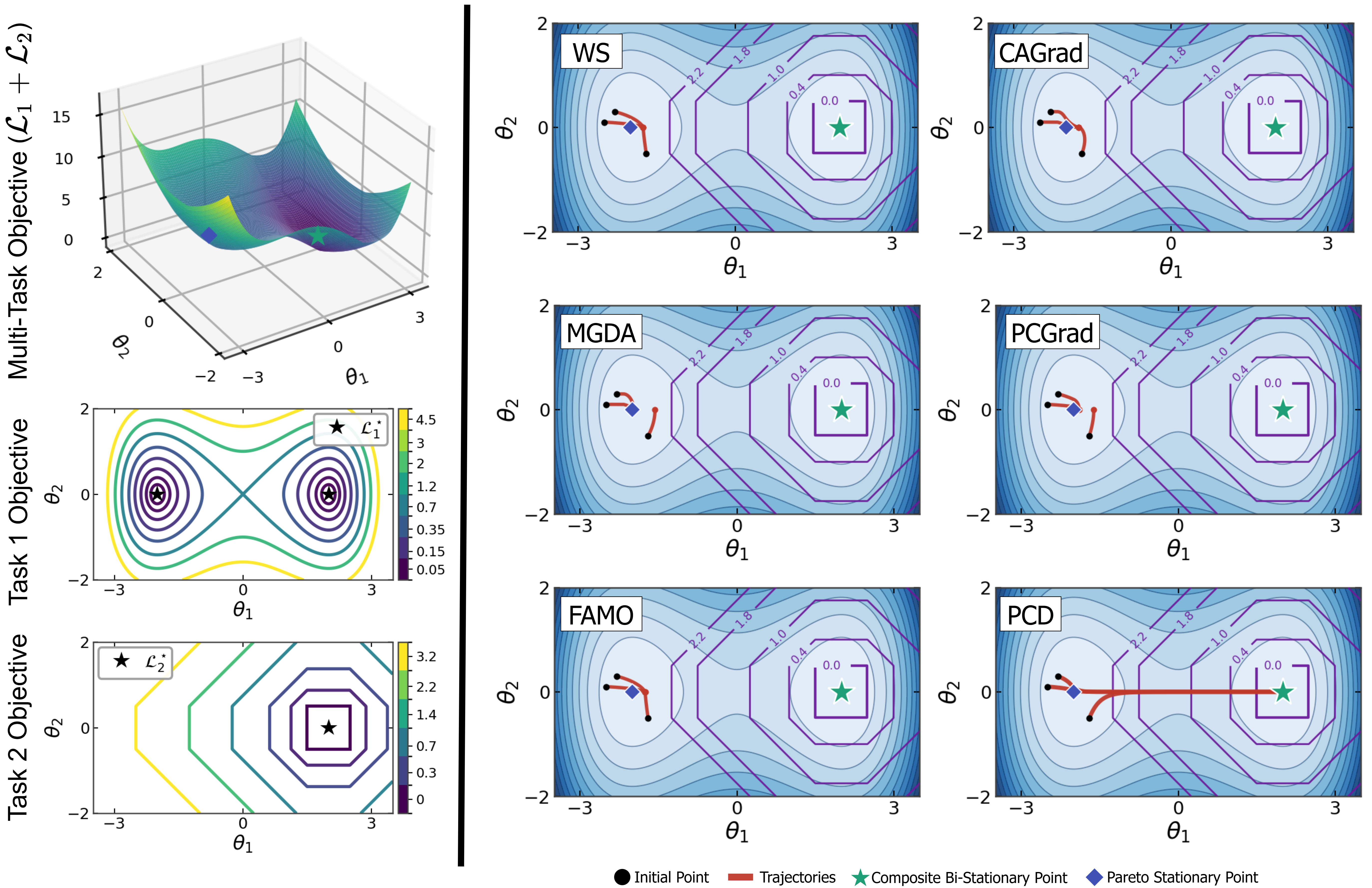}
    \caption{PCD on a 2D toy problem with two local minima of $\mathcal{L}_1$, only one of which is also a minimum of $\mathcal{L}_2$. From three initializations, weighted sum, MGDA, PCGrad, CAGrad, and FAMO all terminate at a Pareto-stationary point at which $\mathcal{L}_2$ remains non-zero (the conflict equilibrium of Sec.~\ref{sec:prelim}). From all three initializations, PCD crosses the saddle barrier in parameter space and reach the composite bi-stationary point where $\nabla \mathcal{L}_1 = \nabla \mathcal{L}_2 = 0$, which is a minimizer for both $\mathcal{L}_1$ and $\mathcal{L}_2$.}
    \label{fig:demonstration}
\end{figure*}

Optimization in modern deep learning is fundamentally hierarchical. While models are typically evaluated based on their primary objective performance (such as accuracy in the case of classification, mean squared error for regression, or fidelity for generative modeling), solutions often need to satisfy a number of \emph{secondary} conditions to be usable in practice. In the case of pruning, the primary task loss is paired with a sparsity-inducing regularizer that heavily constrains the final parameterization~\citep{he2023structured}. Similarly, low-rankness pairs the primary objective with a rank-constraining regularizer to force the network parameters into a low-dimensional subspace~\citep{idelbayev2020low}. Quantization-aware training anchors the task against a fidelity term penalizing drift from discrete values~\citep{abushahla2025neural}. Similarly, continual learning balances fitness on new data against a replay objective preserving past knowledge~\citep{wang2024comprehensive}, while federated learning manages local accuracy alongside communication constraints~\citep{zhang2021survey}. In all these examples, the structure of the optimization problem is the same: the relationship between objectives is asymmetric by design. One loss represents the core capability, while the others dictate the boundaries within which that capability must be learned. Nonetheless, standard multi-objective optimization (MOO) frameworks ignore this structural truth.

The conventional paradigm for handling multiple objectives is to collapse all objectives into a single weighted summation $\mathcal{L}=\sum_iw_i \mathcal{L}_i$~\citep{marler2010weighted}. Here, importance is encoded implicitly through the weights $w_i$, but we require fragile hyperparameter sweeps and struggle to access non-convex regions of the Pareto front. Gradient-based methods~\citep{desideri2009multiple, pcgrad, cagrad, liu2023famo} avoid these pitfalls but share the issue of treating objectives symmetrically. Consequently, they target Pareto stationarity, where no objective can be improved without harming others. For hierarchical problems, we argue that this is too restrictive of a criterion. It can lead to what we term a \emph{conflict equilibrium}, a sub-optimal state where progress on the primary objective is halted because its gradient is counterbalanced by the push-back of constraints with equal authority. This is a failure mode and not a feasible solution for practitioners. 

What hierarchical optimization problems call for is an optimizer that imposes priority at the gradient level, avoiding the issue of halting prematurely by overly enforcing secondary objectives. To this end, we introduce Priority-Constrained Descent (PCD), a framework designed to follow the primary gradient as close as possible, deflecting only as much as necessary to keep secondary objectives on track. At each step, PCD computes the minimum Euclidean deviation in parameter space that guarantees a fraction of descent on secondary objectives. A single scalar tolerance $\tau \in [0,1]$ controls this guarantee. The resulting update anchors strictly to the primary task, adding only a non-negative combination of the secondary gradients whose constraints are currently active. As a result, PCD can never be trapped in conflict equilibria. Its updates vanish only at \emph{composite multi-stationary} (CMS) points, a strictly stronger condition than Pareto stationarity where every objective is individually stationary.  

This geometric advantage is immediately visible in a 2D landscape (Fig.~\ref{fig:demonstration}). When faced with a saddle barrier separating a local and a global minimum, all symmetric baselines stall at a Pareto-stationary conflict equilibrium where $\mathcal{L}_2$ remains non-zero. PCD is the only method that pushes through the secondary constraints to reach the composite bi-stationary point where $\nabla \mathcal{L}_1 = \nabla \mathcal{L}_2 = 0$, successfully minimizing both objectives.

We summarize the core contributions of this study as follows:

\begin{itemize}
    \item Priority-Constrained Descent: A gradient-based optimizer that solves a small convex quadratic program (QP) to minimally distort the primary gradient, enabling progress for the primary objective whilst \emph{guaranteeing} progress on secondaries. We derive exact closed-form solutions for $2$ and $3$ objectives, prove invariance to per-objective scaling, and show the update reduces to standard gradient descent when secondary constraints are naturally satisfied.
    \item Structured Network Compression: We show the dominance of PCD over existing MOO frameworks across a variety of architectures and datasets for structured pruning tasks, showing minimal drop in accuracy compared to unpruned baselines. On CIFAR-100 trained on the ResNet-34 architecture, PCD maintains a 70.9\% accuracy at 90\% sparsity (compared to an unpruned initial accuracy of 73.3\%), where baselines collapse.
    \item Unstructured Pruning and Low-Rankness: PCD dominates the tri-objective space for experiments with both $\ell_1$ and nuclear norms, maintaining accuracies of 71.9\% for CIFAR-100 trained on ResNet-34 with an unstructured sparsity of 84.2\% and an effective rank of 30.4, whilst baselines once again collapse in terms of accuracy and over-compress. 
    \item Synthetic Verification: In known objective spaces, we confirm the advantages of PCD, showing scale invariance, feasibility, and scale up to higher objectives to show its ability to find CMS points without running into issues with infeasability.
    \end{itemize}

\section{Related Works}

Modern gradient-based MOO rests on a target established by~\citet{Gale1951}. A point $\theta^*$ is \emph{Pareto stationary} if the origin lies in the convex hull of the objective gradients, meaning no common descent direction exists. Traditional MOO frameworks aim at reaching a Pareto-stationary point, with more nuanced techniques allowing for some degree of preference over the Pareto front itself. In the pursuit of Pareto stationarity, the field has branched into two distinct strategies: scalarization and direct gradient manipulation. 

Classical weighted-sum scalarization \citep{marler2010weighted} collapses the objective space prior to optimization. While any stationary point with strictly positive weights (assigned to each objective) is technically Pareto stationary, this approach cannot reach non-convex regions of the Pareto front \citep{miettinen1999nonlinear} and requires brittle hyperparameter tuning. A recent study reports that a well-tuned weighted sum frequently matches the performance of more elaborate alternatives, suggesting that the gains attributed to complex MOO algorithms are often artifacts of the training process rather than fundamental algorithmic superiority~\citep{xin2022current}. Regardless, to bypass these pathologies, \citet{fliege2000steepest} proposed a second strategy: discarding scalar weights entirely to compute the steepest Pareto-descent direction directly from the gradients. Nearly every modern gradient-based MOO method inherits this direct-direction strategy. However, while they successfully escape scale sensitivity and access broader regions of the Pareto front, they preserve a structural assumption that contradicts the reality of hierarchical deep learning: that objectives are symmetric and inherently interchangeable.

\paragraph{In Pursuit of Symmetric Compromise.} The direct-direction lineage largely stems from MGDA \citep{desideri2009multiple}, which operationalizes Pareto stationarity by identifying the minimum-norm element of the gradients' convex hull. \citet{sener2018multi} scaled MGDA to deep learning via a Frank--Wolfe solver \citep{pmlr-v28-jaggi13}, while MoCo \citep{fernando2023mitigating, fernando2024variance} mitigated the gradient bias inherent to stochastic MGDA variants, recovering provable convergence in non-convex settings. In parallel, a separate family of methods takes the gradients as given and attempts to condition them before they are combined. Magnitude-based approaches balance objectives by scaling losses via learned homoscedastic noise \citep{kendall2018multi}, driving gradient norms to a common rate \citep{chen2018gradnorm}, ensuring equal gradient projections \citep{liu2021towards}, or tracking loss histories to balance step decreases \citep{liu2023famo}. Geometry-based approaches attempt to resolve gradient misalignment by projecting conflicting task gradients onto each other's normal planes \citep{pcgrad}, or by finding an optimal direction within an averaged-gradient ball \citep{cagrad, senushkin2023independent}.

However, even when methods attempt to prioritize certain outcomes, they do so symmetrically. Preference-steering methods like EPO \citep{mahapatra2021exact}, PMGDA \citep{zhang2025pmgda}, and hypernetwork adaptations \citep{navon2020learning} target specific Pareto-optimal points via controlled ascent or test-time network mapping. Yet, the preference vector enters these formulations as a symmetric annotation on the objectives, not as a structural distinction. Across all these methods, whether modifying directions, magnitudes, or geometries, the fundamental goal is compromise. As noted earlier, enforcing this symmetric compromise in hierarchical settings is precisely what traps optimizers in sub-optimal conflict equilibria.

\paragraph{Breaking The Symmetry.} The shift away from symmetric MOO has begun to surface in recent game-theoretic and constrained-optimization literature. Nash MTL \citep{navon2022multi} applies the Nash Bargaining solution to produce proportionally fair update directions. Building on this, AuxiNash \citep{shamsian2023auxiliary} marks a critical conceptual pivot: it extends Nash-MTL to asymmetric settings by learning task-specific bargaining power based on each auxiliary's contribution to the primary objective. It explicitly names the limitation that symmetric methods treat structurally unequal objectives as peers. Similarly, constrained-optimization frameworks like Cooper \citep{gallego2025cooper} establish hierarchy through Lagrangian multiplier dynamics, updating learned dual variables alongside primal parameters.

While AuxiNash and Cooper recognize the asymmetric reality of modern deep learning, they enforce this hierarchy through learned bargaining weights and Lagrangian dynamics, respectively. PCD discards the symmetry assumption at a much lower level. Rather than relying on learned coefficients, PCD embeds hierarchy directly into the geometry of the update. By calculating the closed-form Euclidean projection of the primary gradient onto the secondary-progress polyhedron, PCD yields a strict, per-step descent guarantee on secondary objectives. It fundamentally rejects the symmetric compromise of Pareto stationarity, driving the optimization instead toward the strictly stronger endpoint of composite multi-stationarity.

\section{Preliminaries and Definitions}
\label{sec:prelim}

\subsection{Problem Setup}

We consider optimization problems over a parameter space $\vth \in \R^n$ with $K \geq 2$ objectives arranged in an asymmetric hierarchy:
\begin{equation}\label{eq:objectives}
  \underbrace{\mathcal{L}_1(\vth)}_{\text{primary}}, \quad
  \underbrace{\mathcal{L}_2(\vth), \ldots, \mathcal{L}_K(\vth)}_{\text{secondary}} .
\end{equation}
We use $[K] = \{1,\ldots,K\}$ and $[K]_{-1} = \{2,\ldots,K\}$ for index sets, $\norm{\cdot}$ for the Euclidean norm, $\partial \mathcal{L}$ for the convex subdifferential, $\conv\{\cdot\}$ for the convex hull, and $\cone\{\cdot\}$ for the conic hull.

\begin{assumption}[Objective regularity]\label{ass:reg} \hfill
\begin{enumerate}[label=(\roman*),nosep,leftmargin=*]
  \item \emph{Primary:} $\mathcal{L}_1: \mathbb{R}^n \to \mathbb{R}$ is continuously differentiable with gradient $\mathbf{g}_1 = \nabla \mathcal{L}_1(\boldsymbol{\theta})$.
  \item \emph{Secondary:} For each $j \in [K]_{-1}$, $\mathcal{L}_j: \mathbb{R}^n \to \mathbb{R} \cup \{+\infty\}$ is proper, convex, and lower semi-continuous, with subgradient $\mathbf{g}_j \in \partial \mathcal{L}_j(\boldsymbol{\theta})$ available at every $\boldsymbol{\theta} \in \operatorname{dom}(\mathcal{L}_j)$.
\end{enumerate}
\end{assumption}

This regularity assumption on the secondary functions to be proper, closed, and convex is required to ensure the existence of subgradients and global minimizers, which establishes the necessary foundation for the application of dual optimization methods.

This assumption encompasses a broad class of convex objectives frequently encountered in optimization and machine learning, including both smooth and non-smooth formulations. More generally, any proper, convex, lower semi-continuous objective with accessible subgradients falls within the framework. In particular, the secondary objectives may represent structural regularizers, auxiliary convex penalties, or constraint-like terms commonly used in modern machine learning, such as the ($\ell_1$) norm, Group Lasso, nuclear norm, total variation, and hinge-type losses. The convexity assumption ensures a well-defined subgradient geometry and preserves the convex feasibility structure underlying the PCD projection step. 

\subsection{Stationarity Concepts}

\begin{definition}[Pareto stationarity]\label{def:pareto}
A point $\vth$ is \emph{Pareto stationary} if there exists no common ascent direction $\vd \in \R^n$ satisfying $\vg_i^\T \vd > 0$ for all $i \in [K]$ simultaneously, where $\vg_1 = \nabla L_1(\vth)$ and $\vg_i \in \partial L_i(\vth)$ for $i \geq 2$. Equivalently, 
\begin{equation}\label{eq:pareto-convhull}
  \vzero \in \conv\{\vg_1, \vg_2, \ldots, \vg_K\}.
\end{equation}
\end{definition}

\begin{definition}[Composite multi-stationarity]\label{def:cms}
A point $\vth$ is \emph{composite multi-stationary} (CMS) with respect to the hierarchy $(\mathcal{L}_1; \mathcal{L}_2, \ldots, \mathcal{L}_K)$ if
\begin{equation}\label{eq:cms}
  \nabla \mathcal{L}_1(\vth) = \vzero \quad\text{and}\quad \vzero \in \partial \mathcal{L}_j(\vth) \quad\text{for all } j \in [K]_{-1}.
\end{equation}
For $K=2$, we use \emph{composite bi-stationarity} (CBS).
\end{definition}

CMS asks that every objective be individually stationary at $\vth$. Pareto stationarity, by contrast, admits \emph{conflict equilibria} at which competing gradients cancel within the convex hull (Eq.~\eqref{eq:pareto-convhull}) even though no individual $\vg_i$ vanishes. Every CMS point is Pareto stationary, but not conversely: in dimension one, $L_1(\theta) = \tfrac{1}{2}\theta^2$ and $L_2(\theta) = \tfrac{1}{2}(\theta-1)^2$ at $\theta = \tfrac{1}{2}$ have gradients $\tfrac{1}{2}$ and $-\tfrac{1}{2}$ that cancel in convex combination, so the point is Pareto stationary, but $\nabla L_1(\tfrac{1}{2}) = \tfrac{1}{2} \neq 0$, so it is not CMS. Conflict equilibria of this kind are exactly the points at which symmetric MOO methods such as MGDA \citep{desideri2009multiple} can halt without making any single objective stationary, and exactly the regime PCD's QP is designed to rule out by construction (Sec.~\ref{sec:pcd-cms}).

\section{Priority-Constrained Descent}
\label{sec:pcd}

Based on the principle of anchoring to the primary gradient and admitting corrections along the secondaries to ensure progress, we now present the algorithmic framework for PCD. This framing of ``corrections'' should be carefully defined: in Euclidean distance, it is simply the resultant direction that lies as close as possible to the primary gradient vector. Coupled with the role of the tolerance parameter $\tau \in [0,1]$, we formalize the PCD update rule in this section. The development proceeds as follows: Sec.~\ref{sec:pcd-qp} states the QP and identifies its solution as the Euclidean projection of the primary gradient onto a polyhedron of secondary-progress half-spaces. Sec.~\ref{sec:pcd-geom} extracts the KKT identity that makes the asymmetric structure of the update explicit, and Sec.~\ref{sec:pcd-progress} draws the per-step secondary-progress guarantee directly from it. Sec.~\ref{sec:pcd-ema} then introduces the gradient normalization that gives $\tau$ a scale-invariant interpretation across objectives of vastly different magnitude. Sec.~\ref{sec:pcd-k2} specializes to two objectives, where the QP admits a closed-form solution and $\tau$'s geometric meaning becomes quantitative through an analytical threshold at which primary descent breaks down. and Sec.~\ref{sec:pcd-algorithm} states the full algorithm. Sec.~\ref{sec:pcd-scope} delimits the scope of the guarantees, and Sec.~\ref{sec:pcd-cms} closes with the endpoint characterization that strictly rules out conflict equilibria. Comparison with existing gradient-based MOO methods is deferred to Sec.~\ref{sec:comparison}.

\subsection{The PCD Quadratic Program}
\label{sec:pcd-qp}

\begin{mybluebox}
\begin{definition}[PCD quadratic program]\label{def:pcd-qp}
Given normalized gradients $\gt_1, \ldots, \gt_K$ and a scalar tolerance $\tau \in [0,1]$, the PCD update direction is
\begin{equation}\label{eq:pcd-qp}
  \dts \;=\; \arg\min_{\dt \in \R^n} \; \norm{\dt - \gt_1}^2 \quad\text{s.t.}\quad \gt_j^\T \dt \;\geq\; \tau \norm{\gt_j}^2 \;\;\forall\, j \in [K]_{-1}.
\end{equation}
\end{definition}
\end{mybluebox}

Among all directions guaranteeing at least a $\tau$-fraction of maximum normalized progress on every secondary, $\dts$ is the one closest in Euclidean distance to the primary gradient $\gt_1$. The secondary objectives thus act as \emph{half-space constraints} on the primary update, not as symmetric competitors. The feasible set
\begin{equation}\label{eq:feas}
  F_\tau(\vth) \;=\; \bigcap_{j=2}^{K} H_j, \qquad H_j = \{\dt \in \R^n : \gt_j^\T \dt \;\geq\; \tau \norm{\gt_j}^2\},
\end{equation}
is a closed convex polyhedron, possibly empty. For $K = 2$ with $\gt_2 \neq \vzero$, $F_\tau(\vth)$ is nonempty for every $\tau \in [0,1]$. For $K > 2$, infeasibility can occur when the secondary-progress constraints are mutually incompatible. Geometrically, when $\vzero \in \cone\{\gt_2, \ldots, \gt_K\} - \{\vzero\}$. This is an intrinsic limitation of imposing simultaneous directional-progress requirements rather than a failure of the solver. The full feasibility analysis via Farkas' lemma~\citep{dinh2014farkas}, together with a sufficient pairwise-angle condition for $K > 2$, is given in App.~\ref{app:feasibility}.

\subsection{Geometry and KKT Structure}
\label{sec:pcd-geom}

Equation~\eqref{eq:pcd-qp} is a convex QP with affine inequality constraints. Its solution admits a geometric characterization, which is locating $\dts$ in space.

\begin{mybluebox}
\begin{theorem}[Projection and KKT characterization]\label{thm:kkt}
Suppose $F_\tau(\vth) \neq \emptyset$. Then $\dts$ is the Euclidean
projection of the primary gradient onto the secondary-progress feasible
set,
\begin{equation}\label{eq:projection}
  \dts \;=\; \mathrm{Proj}_{F_\tau(\vth)}(\gt_1).
\end{equation}
Equivalently, there exists multipliers $\mu_j^* \geq 0$
($j \in [K]_{-1}$) such that
\begin{equation}\label{eq:kkt-stationarity}
  \dts \;=\; \gt_1 \;+\; \sum_{j=2}^{K} \mu_j^* \, \gt_j,
\end{equation}
with primal feasibility $\gt_j^\T \dts \geq \tau \norm{\gt_j}^2$ and
complementary slackness
$\mu_j^*\bigl(\gt_j^\T \dts - \tau \norm{\gt_j}^2\bigr) = 0$ for all
$j \in [K]_{-1}$.
\end{theorem}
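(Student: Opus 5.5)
The plan is to treat \eqref{eq:pcd-qp} as a strongly convex QP over the polyhedron $F_\tau(\vth)$ and read off both claims from standard convex-analytic facts.

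\textbf{Step 1: existence, uniqueness, and the projection identity.} The objective $\dt \mapsto \norm{\dt - \gt_1}^2$ is continuous, coercive, and $2$-strongly convex. The set $F_\tau(\vth)$ is a finite intersection of closed half-spaces, hence closed and convex, and it is nonempty by hypothesis. So a minimizer exists and is unique. By definition this minimizer is the point of $F_\tau(\vth)$ nearest to $\gt_1$, i.e.\ $\mathrm{Proj}_{F_\tau(\vth)}(\gt_1)$, which gives \eqref{eq:projection}.

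\textbf{Step 2: necessity of the KKT conditions.} I would form the Lagrangian
\[
\mathcal{L}(\dt,\mu)=\norm{\dt-\gt_1}^2-\sum_{j=2}^K \nu_j\bigl(\gt_j^\T\dt-\tau\norm{\gt_j}^2\bigr), \qquad \nu_j\ge 0.
\]
All constraints are affine, so the linearity constraint qualification holds automatically; no Slater point is needed. This matters because $F_\tau(\vth)$ could have empty interior. Hence KKT multipliers exist at $\dts$. Stationarity reads $2(\dts-\gt_1)=\sum_j \nu_j\gt_j$, and setting $\mu_j^*=\nu_j/2\ge 0$ yields \eqref{eq:kkt-stationarity}. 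Primal feasibility and complementary slackness come with the KKT system directly.

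If one wants to avoid quoting constraint qualifications, the same conclusion can be reached by hand. The projection variational inequality $(\gt_1-\dts)^\T(\dt-\dts)\le 0$ for all $\dt\in F_\tau(\vth)$ says that $\gt_1-\dts$ lies in the normal cone of the polyhedron at $\dts$. For a polyhedron $\{\dt : \gt_j^\T \dt \ge b_j\}$, Farkas' lemma (as in App.~\ref{app:feasibility}) identifies this normal cone as $\cone\{-\gt_j : j \text{ active}\}$. This again gives $\dts-\gt_1=\sum_{j\,\text{active}}\mu_j^*\gt_j$ with $\mu_j^*\ge0$, and setting $\mu_j^*=0$ for inactive $j$ delivers complementary slackness.

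\textbf{Step 3: sufficiency, for the ``equivalently''.} Suppose some $(\dt,\mu)$ satisfies stationarity, feasibility, and complementary slackness. For any feasible $\vd$,
\[
\norm{\vd-\gt_1}^2 \ge \norm{\dt-\gt_1}^2 + 2(\dt-\gt_1)^\T(\vd-\dt) = \norm{\dt-\gt_1}^2 + 2\sum_j\mu_j\,\gt_j^\T(\vd-\dt).
\]
Each term in the sum is nonnegative. For active $j$, $\gt_j^\T\vd\ge\tau\norm{\gt_j}^2=\gt_j^\T\dt$; for inactive $j$, $\mu_j=0$. So $\dt$ is optimal, and by uniqueness $\dt=\dts$.

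The main obstacle is Step 2: justifying multiplier existence without interior points. I would handle it via the polyhedral normal-cone (Farkas) argument rather than Slater's condition. The multipliers need not be unique, since active gradients may be linearly dependent. The statement only asserts existence, so this is harmless.
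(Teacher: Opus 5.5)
Your proposal is correct and follows the same route as the paper. The projection identity is read off from the definition of the QP. The multiplier representation comes from Lagrangian stationarity, together with the fact that affine constraints need no further constraint qualification; your rescaling $\mu_j^* = \nu_j/2$ just matches the paper's $\tfrac12$-scaled Lagrangian. Your extra pieces (existence and uniqueness via strong convexity, the polyhedral normal-cone/Farkas derivation of the multipliers, and the explicit sufficiency check in Step~3) make rigorous what the paper asserts only as ``standard KKT'' facts.
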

\end{mybluebox}

\begin{proof}[Proof sketch]
The projection statement is immediate: Eq.~\eqref{eq:pcd-qp} minimizes squared Euclidean distance from $\gt_1$ over the closed convex set $F_\tau(\vth)$, which is by definition the projection. The KKT form follows from the Lagrangian
\[
  \mathcal{L}(\dt, \boldsymbol{\mu}) \;=\; \tfrac{1}{2}\norm{\dt - \gt_1}^2 \;-\; \sum_{j=2}^{K} \mu_j \bigl( \gt_j^\T \dt - \tau \norm{\gt_j}^2 \bigr), \qquad \mu_j \geq 0,
\]
whose stationarity in $\dt$ gives $\dt - \gt_1 - \sum_j \mu_j \gt_j = \vzero$, hence Eq.~\eqref{eq:kkt-stationarity}. Primal and dual feasibility and
complementary slackness are the standard KKT conditions for
inequality-constrained convex QPs with affine constraints; no further constraint qualification is required. The active-set determination of $\{\mu_j^*\}$ is detailed in App.~\ref{app:active-set}.
\end{proof}

\paragraph{The Geometric Picture.} 

The projection identity of Eq.~\eqref{eq:projection} makes precise the ``minimum Euclidean distortion from the primary gradient'' framing carried throughout this work. The primary gradient $\gt_1$ sits somewhere in $\R^n$. The feasible polyhedron $F_\tau(\vth)$ is the intersection of the $K-1$ half-spaces of admissible directions, one per secondary, with boundaries determined by $\tau$. PCD locates $\dts$ as the unique point of $F_\tau(\vth)$ closest to $\gt_1$ in
Euclidean distance. Two regimes are immediate from this picture. If $\gt_1 \in F_\tau(\vth)$ already, no projection is needed and $\dts$ coincides with $\gt_1$ exactly. We record this as:

\begin{corollary}[Inactive regime]\label{cor:inactive}
If $\gt_j^\T \gt_1 \geq \tau \norm{\gt_j}^2$ for every $j \in [K]_{-1}$, then $\dts = \gt_1$, and PCD reduces to gradient descent on $\mathcal{L}_1$.
\end{corollary}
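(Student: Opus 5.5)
The plan is to use the projection characterization of Theorem~\ref{thm:kkt} directly. The hypothesis $\gt_j^\T \gt_1 \geq \tau \norm{\gt_j}^2$ for every $j \in [K]_{-1}$ says exactly that $\gt_1 \in H_j$ for each $j$. Hence $\gt_1 \in F_\tau(\vth) = \bigcap_{j=2}^K H_j$, and in particular $F_\tau(\vth) \neq \emptyset$, so the theorem applies. Since the objective $\norm{\dt - \gt_1}^2$ is nonnegative and equals zero at the feasible point $\dt = \gt_1$, that point is a minimizer.

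Uniqueness follows from strict convexity. The objective is strictly convex and $F_\tau(\vth)$ is closed and convex, so the minimizer is unique. Equivalently, $\mathrm{Proj}_{F_\tau(\vth)}(\gt_1) = \gt_1$ because the projection of a point already in a closed convex set is the point itself. This gives $\dts = \gt_1$.

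As a consistency check against Eq.~\eqref{eq:kkt-stationarity}, take $\mu_j^* = 0$ for all $j$. Stationarity then reads $\dts = \gt_1$, primal feasibility is the hypothesis, and complementary slackness holds trivially. So the KKT system is satisfied, and by convexity these conditions are sufficient.

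Finally, the update is a step along $-\dts = -\gt_1$. Here $\gt_1$ is a positive rescaling of $\vg_1 = \nabla\mathcal{L}_1(\vth)$ under the normalization of Sec.~\ref{sec:pcd-ema}, so the update is a gradient step on $\mathcal{L}_1$ with a rescaled step size. The only point needing care is making this ``reduces to gradient descent'' claim precise: it holds up to the positive per-step scaling introduced by normalization, and I would state that caveat explicitly. No step is a real obstacle.
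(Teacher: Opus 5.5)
Your proposal is correct and matches the paper's argument: the hypothesis says $\gt_1 \in F_\tau(\vth)$, so the Euclidean projection in Theorem~\ref{thm:kkt} returns $\gt_1$ itself. Equivalently, the KKT system holds with all $\mu_j^* = 0$. Your positive-rescaling caveat is right for the idealized step, and under the magnitude rescaling of Algorithm~\ref{alg:pcd} it disappears entirely: $\vd^* = s_1\vg_1 \cdot \norm{\vg_1}/(s_1\norm{\vg_1}) = \vg_1$ whenever $\vg_1 \neq \vzero$.
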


The secondaries do not perturb the primary at all when they do not have to: PCD reverts to standard gradient descent on $\mathcal{L}_1$ precisely when the primary direction itself already provides sufficient progress on every secondary. Otherwise $\gt_1$ lies outside $F_\tau(\vth)$ and the projection lands on the boundary of the polyhedron at the closest boundary point, on the face determined by the secondaries whose constraint is currently binding: the \emph{active set} $A(\vth) \subseteq [K]_{-1}$. As $\tau$ grows, the half-spaces $H_j$ recede from the origin in the direction of each $\gt_j$, the polyhedron $F_\tau(\vth)$ shrinks, and $\dts$ is pushed further from $\gt_1$ along faces parametrized by the active set. This is a monotone deformation whose two-objective realization is shown in Fig.~\ref{fig:pcd-cases}.

\begin{figure}[t]
    \centering
    \begin{subfigure}[b]{0.23\textwidth}
        \centering
        \includegraphics[width=\textwidth]{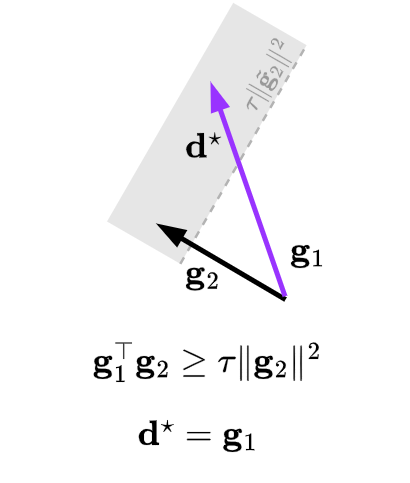}
        \caption{}
        \label{fig:PCD-No-Conflict-1}
    \end{subfigure}
    \hfill
    \begin{subfigure}[b]{0.23\textwidth}
        \centering
        \includegraphics[width=\textwidth]{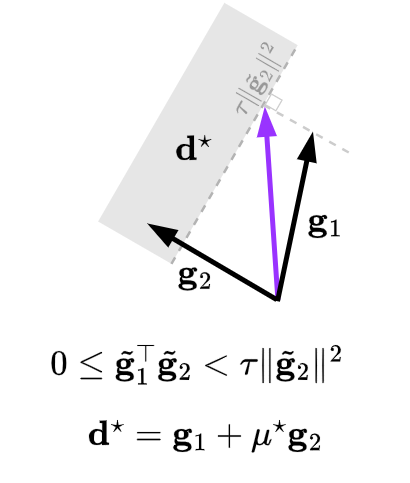}
        \caption{}
        \label{fig:PCD-No-Conflict-2}
    \end{subfigure}
    \hfill
    \begin{subfigure}[b]{0.23\textwidth}
        \centering
        \includegraphics[width=\textwidth]{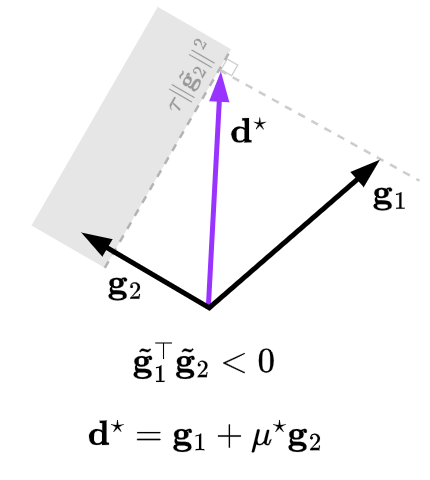}
        \caption{}
        \label{fig:PCD-Conflict}
    \end{subfigure}
    \hfill
    \begin{subfigure}[b]{0.23\textwidth}
        \centering
        \includegraphics[width=\textwidth]{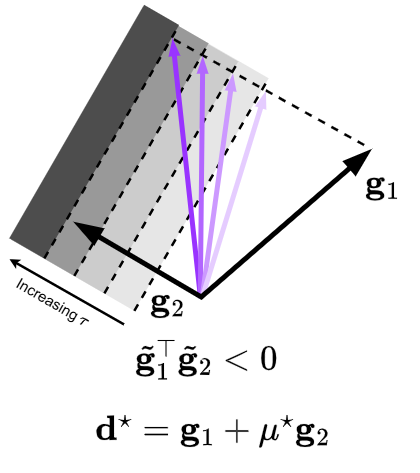}
        \caption{}
        \label{fig:IncreasingTau}
    \end{subfigure}
    \caption{The four canonical regimes of PCD for $K=2$, as predicted by Cor.~\ref{cor:k2}. \textbf{(a) Non-conflicting, sufficient progress}: the constraint is inactive and $\mathbf{d}^\star = \vg_1$. \textbf{(b) Non-conflicting, insufficient progress}: the constraint is active and $\vd^\star$ projects onto its boundary, with $\vd^\star = \vg_1 + \mu^* \vg_2$. \textbf{(c) Conflicting}: the constraint is active and the same projection formula applies, with $\mu^*$ strictly larger than in \textbf{(b)} due to the additional negative term. \textbf{(d)} The effect of increasing $\tau$ at fixed gradient configuration: the feasible set shrinks (Sec.~\ref{sec:pcd-k2}, $\tau$-monotonicity), and $\vd^\star$ moves further from $\vg_1$.}
    \label{fig:pcd-cases}
\end{figure}

This characterization makes the asymmetric structure of PCD's update explicit. Symmetric MOO methods produce updates of the
form $\sum_i \alpha_i \vg_i$, in which the primary and secondary coefficients are determined by the same rule: convex-hull weights for MGDA, fixed weights for weighted sum, projection coefficients for PCGrad and CAGrad. In PCD, the primary's coefficient is structurally pinned to one and the secondaries' coefficients are non-negative, possibly zero, and determined by which constraints currently bind. The two roles (anchor and correction) are visible in the construction itself, rather than recovered from it. The full contrast against symmetric methods is detailed in Sec.~\ref{sec:comparison}.

\subsection{Secondary-Progress Guarantee}
\label{sec:pcd-progress}

The projection characterization yields the central per-step guarantee directly. At each iterate, PCD produces an update direction $\vd^*(\vth)$ and applies
\begin{equation}\label{eq:pcd-update}
  \vth^{+} \;=\; \vth \;-\; \eta\, \vd^*(\vth).
\end{equation}
Under this sign convention, the first-order change in any objective along the update is $\mathcal{L}_j(\vth^{+}) = \mathcal{L}_j(\vth) - \eta\, \vg_j^\T \vd^*(\vth) + O(\eta^2)$, so $\vg_j^\T \vd^* > 0$ corresponds to first-order \emph{descent} of $\mathcal{L}_j$ for $j\geq 2$. We treat $\vd$ throughout as a \emph{gradient-like} direction whose role is to make $\vg_j^\T \vd$ positive. This is the opposite of the descent-direction convention used classically, and we make it explicit because the following inequality depends on it. 

\begin{mybluebox}
\begin{proposition}[Per-step secondary progress]\label{prop:progress}
Suppose $F_\tau(\vth) \neq \emptyset$ and $\tau > 0$. The PCD direction satisfies
\begin{equation}\label{eq:progress}
  \gt_j^\T \dts \;\geq\; \tau \norm{\gt_j}^2 \quad\text{for all } j \in [K]_{-1}.
\end{equation}
Consequently, under the idealized iteration $\vth^{+} = \vth - \eta \dts$ with $\mathcal{L}_j$ Lipschitz-smooth (or with a Lipschitz continuous subgradient selection),
\begin{equation}\label{eq:taylor-progress}
  \mathcal{L}_j(\vth^{+}) \;\leq\; \mathcal{L}_j(\vth) \;-\; \eta\, s_j^{-1} \tau \norm{\gt_j}^2 \;+\; O(\eta^2),
\end{equation}
so for $\gt_j \neq \vzero$ and sufficiently small $\eta$, $\mathcal{L}_j$ strictly decreases at this step.
\end{proposition}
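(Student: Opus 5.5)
The first claim, Eq.~\eqref{eq:progress}, is primal feasibility. Since $F_\tau(\vth)$ is a nonempty closed convex polyhedron and the objective $\norm{\dt-\gt_1}^2$ is strongly convex, the minimizer $\dts$ exists, is unique, and lies in $F_\tau(\vth)$. This is exactly Thm.~\ref{thm:kkt}, which identifies $\dts=\mathrm{Proj}_{F_\tau(\vth)}(\gt_1)$. Membership in every half-space $H_j$ is precisely $\gt_j^\T\dts\ge\tau\norm{\gt_j}^2$ for all $j\in[K]_{-1}$, and nothing further is needed here.

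For Eq.~\eqref{eq:taylor-progress}, I will translate back from normalized to raw quantities. I read Sec.~\ref{sec:pcd-ema} as defining $\gt_j=\vg_j/s_j$ for a positive scale $s_j$ (the EMA normalizer), and take the raw step to be $\vth^+=\vth-\eta\dts$. I then apply the descent lemma for $L_j$-smooth $\mathcal{L}_j$:
\[
\mathcal{L}_j(\vth^+)\le \mathcal{L}_j(\vth)-\eta\,\vg_j^\T\dts+\tfrac{L_j}{2}\eta^2\norm{\dts}^2 .
\]
Substituting $\vg_j=s_j\gt_j$ gives $\vg_j^\T\dts=s_j\,\gt_j^\T\dts\ge s_j\tau\norm{\gt_j}^2$. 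The power of $s_j$ depends on the paper's normalization convention: with $\gt_j=\vg_j/s_j$ one gets $s_j$, and with $\gt_j=s_j\vg_j$ one gets $s_j^{-1}$ as stated. I will fix the convention so that it matches the stated $s_j^{-1}$. Either way, the coefficient is a positive constant times $\tau\norm{\gt_j}^2$, and the quadratic term is $O(\eta^2)$ because $\norm{\dts}$ is a fixed finite number at the given iterate.

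In the nonsmooth case, I will use the Lipschitz subgradient selection in the same way. I will write
\[
\mathcal{L}_j(\vth^+)-\mathcal{L}_j(\vth)=-\eta\int_0^1 \vg_j(\vth-t\eta\dts)^\T\dts\,dt ,
\]
which is valid for convex functions along a segment with an a.e.\ derivative. I will then bound the deviation of the subgradient from $\vg_j(\vth)$ by $\mathrm{Lip}\cdot t\eta\norm{\dts}$. This again gives an $O(\eta^2)$ remainder.

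Strict decrease then follows. When $\gt_j\neq\vzero$ and $\tau>0$, the linear term is strictly negative. Choosing $\eta<2 s_j^{-1}\tau\norm{\gt_j}^2/(L_j\norm{\dts}^2)$ makes the total change negative. The case $\dts=\vzero$ cannot occur, since it would violate $\gt_j^\T\dts\ge\tau\norm{\gt_j}^2>0$.

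The main obstacle is bookkeeping rather than substance. The normalization in Sec.~\ref{sec:pcd-ema} may be defined through an EMA of gradient norms, so $s_j$ must be treated as fixed at the current step. The direction actually applied might also be a rescaled $\dts$ rather than $\dts$ itself. I would settle both points by stating explicitly that the proposition concerns the idealized update with frozen scales, as the statement's wording ``idealized iteration'' suggests.
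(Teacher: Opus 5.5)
Your proposal is correct and follows the paper's proof. Eq.~\eqref{eq:progress} is read off as primal feasibility of the QP, and the descent bound comes from a first-order expansion of $\mathcal{L}_j$ along $-\eta\dts$ combined with $\vg_j^\T\dts = s_j^{-1}\gt_j^\T\dts$; your explicit descent-lemma constant and segment argument only make the paper's $O(\eta^2)$ term explicit. On the normalization bookkeeping there is nothing to choose: Def.~\ref{def:ema} sets $\gt_j = s_j\vg_j$, so the factor $s_j^{-1}$ follows directly from the definition.
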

\end{mybluebox}

\begin{proof}
Equation~\eqref{eq:progress} is the primal feasibility condition of Eq.~\eqref{eq:pcd-qp}. For Eq.~\eqref{eq:taylor-progress}, the first-order Taylor expansion gives $\mathcal{L}_j(\vth^{+}) = \mathcal{L}_j(\vth) - \eta \vg_j^\T \dts + O(\eta^2)$, and $\vg_j^\T \dts = s_j^{-1} \gt_j^\T \dts \geq s_j^{-1} \tau \norm{\gt_j}^2$.
\end{proof}

Proposition~\ref{prop:progress} is the practical content of PCD: every step guarantees \emph{tolerance-controlled} first-order descent on every non-stationary secondary. This is structurally unlike PCGrad (which carries no per-objective progress guarantee in either the conflicting or non-conflicting regime), unlike MGDA (which guarantees descent on the minimum-norm convex hull element, an aggregate quantity), and unlike weighted sum (which guarantees descent only on the scalarized loss). All three are detailed in Sec.~\ref{sec:comparison}.

\subsection{Gradient Normalization}
\label{sec:pcd-ema}

Gradient magnitudes from different objectives may differ by orders of magnitude and vary throughout training. To ensure that the tolerance parameter $\tau$ has a consistent, scale-invariant interpretation, PCD normalizes all gradients via exponential moving averages (EMA).

\begin{definition}[EMA normalization]\label{def:ema}
Fix $\beta \in (0,1)$ and $\varepsilon > 0$. For each $i \in [K]$ and iteration $t$:
\begin{equation}\label{eq:ema-system}
v_{i,t} = \beta\, v_{i,t-1} + (1-\beta) \norm{\vg_{i,t}}^2, \quad
\hat{v}_{i,t} = \tfrac{v_{i,t}}{1-\beta^t}, \quad
s_{i,t} = \tfrac{1}{\sqrt{\hat{v}_{i,t} + \varepsilon}}, \quad
\gt_{i,t} = s_{i,t}\, \vg_{i,t} ,
\end{equation}
with $v_{i,0}=0$.
\end{definition}

In steady state ($t \gg \dfrac{1}{1-\beta}$), $\hat{v}_{i,t} \approx \mathbb{E}[\norm{\vg_i}^2]$, so $\norm{\gt_i} \approx 1$. Hence a constraint of the form $\gt_j^\T \dt \geq \tau \norm{\gt_j}^2$ demands a $\tau$-fraction of the maximum normalized secondary progress, with $\tau$ directly interpretable on $[0,1]$ regardless of the raw magnitudes of $L_2, \ldots, L_K$.

\paragraph{Scale invariance.}
Under per-objective rescaling $L_i \to c_i L_i$ ($c_i > 0$), the EMA buffers transform as $\hat v_{i,t} \to c_i^2 \hat v_{i,t}$, so $s_{i,t} \to s_{i,t}/c_i$ in the limit $\varepsilon \to 0$ (or once $\hat v_{i,t} \gg \varepsilon/c_i^2$), and the normalized gradients $\gt_i = s_{i,t}\vg_i$ are unchanged. Consequently, the QP in Eq.~\eqref{eq:pcd-qp} and its solution $\dts$ are invariant in the same limit. By contrast, $\vd_{\mathrm{WS}} = \sum_i w_i \vg_i$ transforms to $\sum_i w_i c_i \vg_i$, so its optimal weight ratio depends on the relative magnitudes of the raw objectives. Scale invariance is a structural advantage of PCD whenever the raw $L_i$ differ by orders of magnitude (typical when a task loss is paired with a structural regularizer). The exact finite-$\varepsilon$ residual and an $\varepsilon$-free normalization for which invariance is exact are in App.~\ref{app:scale}.

\subsection{Closed Form Solutions and the Role of \texorpdfstring{$\tau$}{tau}}
\label{sec:pcd-k2}

For $K=2$, Eq.~\eqref{eq:pcd-qp} is a projection onto a single half-space, and Theorem~\ref{thm:kkt} reduces to a closed-form update that exposes the geometric meaning of $\tau$.

\begin{corollary}[$K=2$ closed form]\label{cor:k2}
For $K=2$ with $\gt_2 \neq \vzero$,
\begin{equation}\label{eq:k2-cf}
  \dts \;=\;
  \begin{cases}
    \gt_1 & \text{if } \;\gt_2^\T \gt_1 \;\geq\; \tau \norm{\gt_2}^2 \quad\text{(constraint inactive),}\\[4pt]
    \gt_1 \;+\; \Bigl(\tau \;-\; \dfrac{\gt_2^\T \gt_1}{\norm{\gt_2}^2}\Bigr) \gt_2 & \text{otherwise (constraint active).}
  \end{cases}
\end{equation}
\end{corollary}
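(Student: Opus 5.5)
The proof specializes Theorem~\ref{thm:kkt} to a single half-space. For $K=2$ with $\gt_2 \neq \vzero$, the feasible set $F_\tau(\vth) = H_2$ is a nonempty closed half-space: the point $\tau \gt_2$ satisfies $\gt_2^\T(\tau\gt_2) = \tau\norm{\gt_2}^2$. Theorem~\ref{thm:kkt} therefore applies, and $\dts = \gt_1 + \mu^*\gt_2$ for a unique multiplier $\mu^* \geq 0$ obeying primal feasibility and complementary slackness. The case split in Eq.~\eqref{eq:k2-cf} is the split according to whether $\mu^* = 0$ or $\mu^* > 0$.

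\emph{Inactive case.} If $\gt_2^\T\gt_1 \geq \tau\norm{\gt_2}^2$, then $\gt_1 \in F_\tau(\vth)$. The projection of a point already in a closed convex set is the point itself, so $\dts = \gt_1$. This is Corollary~\ref{cor:inactive} with $K=2$.

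\emph{Active case.} Suppose $\gt_2^\T\gt_1 < \tau\norm{\gt_2}^2$. Then $\mu^* = 0$ is impossible, since it would give $\dts = \gt_1$, which violates feasibility. Hence $\mu^* > 0$, and complementary slackness forces $\gt_2^\T\dts = \tau\norm{\gt_2}^2$. Substituting $\dts = \gt_1 + \mu^*\gt_2$ gives
\[
\gt_2^\T\gt_1 + \mu^*\norm{\gt_2}^2 = \tau\norm{\gt_2}^2 .
\]
Since $\gt_2 \neq \vzero$, we can divide by $\norm{\gt_2}^2$ to get
\[
\mu^* = \tau - \frac{\gt_2^\T\gt_1}{\norm{\gt_2}^2}.
\]
This value is strictly positive exactly under the active-case hypothesis, so dual feasibility is consistent. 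Plugging $\mu^*$ back into the KKT form yields the second line of Eq.~\eqref{eq:k2-cf}.

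To make the argument self-contained, I would also confirm directly that the stated pair $(\dts,\mu^*)$ satisfies all KKT conditions in each case. KKT conditions are sufficient for this strongly convex QP with affine constraints, whose minimizer is unique. The verification is therefore enough on its own, without relying on the existence claim. There is no real obstacle here. The only point needing care is the role of $\gt_2 \neq \vzero$: it guarantees that the feasible set is nonempty and it permits the division by $\norm{\gt_2}^2$.
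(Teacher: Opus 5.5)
Your proposal is correct and follows essentially the same route as the paper. The paper's proof also splits on whether the active set is $\emptyset$ or $\{2\}$ and solves the scalar active-set equation $\norm{\gt_2}^2\mu_2^* = \tau\norm{\gt_2}^2 - \gt_2^\T\gt_1$. Your version adds explicit checks of feasibility, of $\mu^*>0$, and of KKT sufficiency, which the paper leaves implicit.
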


In the active regime, PCD adds \emph{exactly} the component along $\gt_2$ needed to satisfy the constraint at equality, and no more. Geometrically, $\dts$ lands on the boundary $\{\dt : \gt_2^\T \dt = \tau \norm{\gt_2}^2\}$ at the point closest to $\gt_1$. The four canonical regimes of Eq.~\eqref{eq:k2-cf} as $\gt_1^\T \gt_2$ and $\tau$ vary are illustrated in Fig.~\ref{fig:pcd-cases}.

For $K=3$, the active-regime multipliers also admit a closed form via Cramer's rule (App.~\ref{app:k3-cf}). For general $K$, the active set is determined by a standard primal active-set method (App.~\ref{app:active-set}), and $\dts$ remains the orthogonal projection of $\gt_1$ onto $F_\tau(\vth)$.

\paragraph{Primary preservation.}
The two-objective formula admits a clean analytical characterization of when PCD continues to make first-order descent on the primary, and when it does not. Assume $\norm{\gt_1} = \norm{\gt_2} = 1$ (the EMA steady state) and write $c = \gt_1^\T \gt_2$. In the active regime, Eq.~\eqref{eq:k2-cf} gives $\dts = \gt_1 + (\tau - c)\gt_2$.

\begin{proposition}[Primary preservation, $K=2$]\label{prop:primary-preservation}
Under the active regime of Cor.~\ref{cor:k2} with $\norm{\gt_1} = \norm{\gt_2} = 1$ and $c = \gt_1^\T \gt_2$,
\begin{equation}\label{eq:primary-preservation}
  \gt_1^\T \dts \;=\; 1 \;+\; \tau\, c \;-\; c^2.
\end{equation}
Consequently the PCD update preserves first-order primary descent ($\gt_1^\T \dts > 0$) iff $1 + \tau c - c^2 > 0$. In the non-conflicting case $c \geq 0$ this holds for every $\tau \in [0,1]$. In the conflicting case $c < 0$ it holds iff
\begin{equation}\label{eq:primary-preservation-conflict}
  \tau \;<\; \frac{1 - c^2}{-c}.
\end{equation}
\end{proposition}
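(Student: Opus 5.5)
The plan is a direct computation from the active-regime formula of Cor.~\ref{cor:k2}, followed by a sign analysis of the resulting scalar expression.

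\textbf{Step 1: the identity.} I will substitute $\norm{\gt_2}^2 = 1$ into Eq.~\eqref{eq:k2-cf}. In the active regime this gives $\dts = \gt_1 + (\tau - c)\gt_2$. Taking the inner product with $\gt_1$ and using $\norm{\gt_1}^2 = 1$ and $\gt_1^\T \gt_2 = c$ gives
\[
\gt_1^\T \dts = 1 + (\tau - c)c = 1 + \tau c - c^2,
\]
which is Eq.~\eqref{eq:primary-preservation}. Hence primary descent is preserved, $\gt_1^\T \dts > 0$, exactly when $1 + \tau c - c^2 > 0$.

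\textbf{Step 2: the non-conflicting case $c \geq 0$.} By Cauchy--Schwarz, $|c| \leq 1$, so $1 - c^2 \geq 0$. Also $\tau c \geq 0$, so the expression is at least $1 - c^2 \geq 0$.

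The main subtlety is strictness. The boundary case $c = 1$ would give the value $\tau$, which vanishes when $\tau = 0$. I will exclude this case using the active-regime condition itself. By Cor.~\ref{cor:k2}, the constraint is active only when $c = \gt_2^\T \gt_1 < \tau \norm{\gt_2}^2 = \tau \leq 1$. Therefore $c < 1$ strictly, so $1 - c^2 > 0$ and the expression is strictly positive for every $\tau \in [0,1]$. This use of the active-regime hypothesis to rule out the degenerate endpoint is the one step I expect to need care; the rest is algebra.

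\textbf{Step 3: the conflicting case $c < 0$.} Here $-c > 0$, and I will rewrite the condition as $\tau(-c) < 1 - c^2$. Dividing by the positive quantity $-c$ preserves the inequality and its direction, which yields Eq.~\eqref{eq:primary-preservation-conflict}. Every step is an equivalence, so the ``iff'' holds.

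Two remarks will round this out. First, every $c < 0$ automatically satisfies the active condition $c < \tau$, so no extra hypothesis is needed here. Second, at $c = -1$ the threshold $(1-c^2)/(-c)$ equals $0$, so preservation fails for every $\tau \geq 0$. This is consistent with the formula, since then $\dts = (1+\tau)\gt_2 + \gt_1 = \tau \gt_2$, which is orthogonal-to-opposing relative to $\gt_1$.
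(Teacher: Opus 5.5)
Your proposal is correct and matches the paper's proof: substitute the active-regime formula $\dts = \gt_1 + (\tau - c)\gt_2$, expand to get $1 + \tau c - c^2$, and read off the sign conditions. Using the active condition $c < \tau \leq 1$ to get strict positivity when $c \geq 0$ (ruling out $c = 1$) is a detail the paper leaves implicit; it is right and worth keeping.
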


\begin{proof}
Substituting $\dts = \gt_1 + (\tau - c)\gt_2$ into $\gt_1^\T \dts$, using $\norm{\gt_1}^2 = 1$ and $\gt_1^\T \gt_2 = c$, gives $\gt_1^\T \dts = 1 + (\tau - c) c = 1 + \tau c - c^2$. The conditions follow from $1 + \tau c - c^2 > 0$.
\end{proof}

Proposition~\ref{prop:primary-preservation} is the analytical fingerprint of PCD's behavior. Small $\tau$ leaves $\gt_1^\T \dts$ close to $1$, keeping the update nearly aligned with the primary gradient. Larger $\tau$ deflects $\dts$ further from $\gt_1$, and in sufficiently severe conflict ($c$ close to $-1$, with $\tau$ exceeding the threshold in Eq.~\eqref{eq:primary-preservation-conflict}), $\gt_1^\T \dts$ can fall to zero or below. Geometrically, $\tau$ acts as a \emph{secondary-pressure parameter}: it controls how much of the secondary direction is forced into the update, and accordingly how much of the primary direction is sacrificed in severe conflict. The threshold Eq.~\eqref{eq:primary-preservation-conflict} predicts the breakdown observed empirically as $\tau$ is increased past a problem-dependent operating boundary.

\subsection{Algorithm}
\label{sec:pcd-algorithm}

Algorithm~\ref{alg:pcd} summarizes a single PCD step. The QP of Eq.~\eqref{eq:pcd-qp} admits a closed-form solution for $K = 2$ (Cor.~\ref{cor:k2}), and is solved by a small primal active-set routine for $K \geq 3$ (App.~\ref{app:active-set}). Per-step cost is dominated by the $K$ backpropagations needed to obtain $\vg_1, \ldots, \vg_K$. The QP itself is $K \times K$ and adds negligible overhead.

\begin{algorithm}[H]
\SetAlgoLined
\DontPrintSemicolon
\KwIn{Initial $\vth_0$; scalar tolerance $\tau \in [0,1]$; EMA decay $\beta$, stabilizer $\varepsilon$; step size $\eta$; optimizer $\mathcal{O}$.}
Initialize $v_{i,0}\leftarrow 0$ for $i\in[K]$\;
\For{$t=1,2,\ldots,T$}{
  Compute $\vg_1 \leftarrow \nabla L_1(\vth)$, $\vg_j \leftarrow \mathrm{subgrad}(L_j,\vth)$ for $j\geq 2$\;
  \For{$i=1,\ldots,K$}{
    Update $v_{i,t}, \hat v_{i,t}, s_{i,t}$ via Eq.~\eqref{eq:ema-system}; set $\gt_i \leftarrow s_{i,t}\vg_i$\;
  }
  \uIf{$\gt_j^\T\gt_1 \geq \tau\norm{\gt_j}^2$ for all $j\in[K]_{-1}$}{
    $\dts \leftarrow \gt_1$ \tcp*[r]{constraint inactive}
  }
  \Else{
    Solve Eq.~\eqref{eq:pcd-qp} via Cor.~\ref{cor:k2} ($K{=}2$) or active-set method (App.~\ref{app:active-set}) $\to \dts$\;
  }
  $\vd^* \leftarrow \dts \cdot \norm{\vg_1}/\norm{\dts}$ \tcp*[r]{magnitude rescaling, Sec.~\ref{sec:pcd-scope}}
  $\vth \leftarrow \mathcal{O}.\mathrm{step}(\vth,\vd^*,\eta)$\;
}
\Return $\vth_T$\;
\caption{Priority-Constrained Descent (PCD), scalar $\tau$, general $K$.}
\label{alg:pcd}
\end{algorithm}

\subsection{Scope of Guarantees}
\label{sec:pcd-scope}

Our theoretical claims attach to the \emph{normalized PCD direction} $\dts$ and to its idealized first-order step $\vth^{+} = \vth - \eta \dts$. They are \emph{local} and \emph{directional}: Theorem~\ref{thm:kkt} characterizes what direction the QP computes, and Proposition~\ref{prop:progress} characterizes the first-order progress this direction induces on each secondary.

The practical update in Algorithm~\ref{alg:pcd} also rescales $\dts$ to match the raw primary-gradient magnitude, $\vd^* = \dts \cdot \norm{\vg_1}/\norm{\dts}$, before passing it to the optimizer $\mathcal{O}$. Positive scalar rescaling preserves direction, hence preserves the signs of $\gt_j^\T \vd^*$ and the directional descent properties of Prop.~\ref{prop:progress}. The magnitude rescaling exists for compatibility with standard optimizer step-size scheduling and is treated throughout as an implementation choice. We discuss its interaction with the convergence picture in App.~\ref{app:sufficient-decrease}.

This separation between $\dts$ (the analyzed object) and $\vd^*$ followed by an optimizer step (the deployed object) is the same separation used in PCGrad, CAGrad, MGDA, FAMO, and other gradient-manipulation methods. As in those methods, our guarantees characterize the geometric direction-construction step. The empirical evaluation in Sec.~\ref{sec:experiments} reports PCD as deployed.

\subsection{Relation to Pareto Stationarity}
\label{sec:pcd-cms}

PCD's fixed-point structure provides a sharp endpoint characterization that distinguishes it from symmetric MOO methods. The key observation is as follows: for symmetric methods, a Pareto-stationary point can occur through \emph{gradient cancellation}, $\vzero \in \conv\{\vg_1, \ldots, \vg_K\}$, even when no individual $\vg_i$ vanishes. PCD's QP rules this out.

\begin{mybluebox}
\begin{theorem}[Fixed-point characterization]\label{thm:cms}
Let $\tau > 0$ and suppose $F_\tau(\vth) \neq \emptyset$. Then
\begin{equation}\label{eq:fp-equiv}
  \dts(\vth) \;=\; \vzero \quad\Longleftrightarrow\quad \gt_i(\vth) \;=\; \vzero \;\;\text{for all } i \in [K].
\end{equation}
\end{theorem}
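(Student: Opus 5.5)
The plan is to prove the two directions of Eq.~\eqref{eq:fp-equiv} separately. The only tools needed are the primal feasibility condition of Eq.~\eqref{eq:pcd-qp} and the KKT stationarity identity Eq.~\eqref{eq:kkt-stationarity} from Theorem~\ref{thm:kkt}. Throughout, the hypothesis $F_\tau(\vth) \neq \emptyset$ ensures that $\dts$ is well defined as the projection of Eq.~\eqref{eq:projection}.

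For the direction ($\Leftarrow$), suppose $\gt_i = \vzero$ for all $i \in [K]$. Every constraint then reads $0 \geq \tau \cdot 0$, so each $H_j$ is all of $\R^n$ and $F_\tau(\vth) = \R^n$. The projection of $\gt_1 = \vzero$ onto $\R^n$ is $\vzero$ itself, so $\dts = \vzero$. Equivalently, $\gt_1$ satisfies every constraint, and Corollary~\ref{cor:inactive} gives $\dts = \gt_1 = \vzero$.

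For the direction ($\Rightarrow$), suppose $\dts = \vzero$. The argument first handles the secondaries, then the primary.
\begin{enumerate}[label=(\roman*),nosep,leftmargin=*]
  \item Primal feasibility (Proposition~\ref{prop:progress}, Eq.~\eqref{eq:progress}) at $\dts = \vzero$ gives, for each $j \in [K]_{-1}$, $0 = \gt_j^\T \vzero \geq \tau \norm{\gt_j}^2$. Since $\tau > 0$, this forces $\norm{\gt_j}^2 \leq 0$, hence $\gt_j = \vzero$ for all $j \geq 2$. This is the only place $\tau > 0$ is used, and it is essential: for $\tau = 0$ the constraints are homogeneous and $\vzero$ is always feasible.
  \item Substituting $\dts = \vzero$ and $\gt_j = \vzero$ ($j \geq 2$) into the KKT identity Eq.~\eqref{eq:kkt-stationarity} gives $\vzero = \gt_1 + \sum_j \mu_j^* \vzero = \gt_1$. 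The values of the multipliers $\mu_j^*$ are irrelevant here; only their existence from Theorem~\ref{thm:kkt} is used.
\end{enumerate}
Together, (i) and (ii) give $\gt_i = \vzero$ for all $i \in [K]$.

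There is no real obstacle; the proof is a few lines. The step needing care is (i), since the inference fails without $\tau > 0$. It is also worth remarking, to connect with CMS (Definition~\ref{def:cms}), that $s_{i,t} > 0$ (because $\varepsilon > 0$). Hence $\gt_i = \vzero$ is equivalent to $\vg_i = \vzero$. In particular, $\dts = \vzero$ certifies $\nabla \mathcal{L}_1 = \vzero$ and $\vzero \in \partial \mathcal{L}_j$ for the chosen subgradients, so fixed points of PCD are CMS points and never conflict equilibria.
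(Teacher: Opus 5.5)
Your proof is correct and follows the paper's own argument essentially line for line. For ($\Leftarrow$), $\dt=\vzero$ is feasible and attains the zero objective value. For ($\Rightarrow$), primal feasibility at $\dts=\vzero$ with $\tau>0$ forces every secondary $\gt_j=\vzero$, KKT stationarity then gives $\gt_1=\vzero$, and $s_i>0$ translates this into composite multi-stationarity, just as in the paper.
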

\end{mybluebox}

\begin{proof}[Proof sketch]
($\Leftarrow$) If every $\gt_i = \vzero$ then $\dt = \vzero$ is feasible and attains the objective minimum $\norm{\vzero - \vzero}^2 = 0$.
($\Rightarrow$) Primal feasibility with $\dts = \vzero$ gives $0 \geq \tau \norm{\gt_j}^2$, and since $\tau > 0$ this forces $\gt_j = \vzero$ for every $j \geq 2$. KKT stationarity then forces $\gt_1 = \vzero$ as well.
The full proof, including the precise dependence on Theorem~\ref{thm:kkt}, is in App.~\ref{app:cms-proof}.
\end{proof}

The right-hand side of Eq.~\eqref{eq:fp-equiv} is composite multi-stationarity (CMS; Def.~\ref{def:cms}), which by definition is strictly stronger than Pareto stationarity, ruling out the cancellation equilibria at which symmetric methods can halt. Theorem~\ref{thm:cms} is best read as an \emph{endpoint characterization} of the deterministic, normalized QP: it identifies the only points at which the idealized PCD direction can vanish, but is not a stochastic convergence claim. Whether the iterates of Algorithm~\ref{alg:pcd} approach such an endpoint depends on optimizer, step size, batch noise, and feasibility along the path, none of which are subsumed by the QP characterization. A complementary endpoint result at $\tau = 0$ recovers Pareto stationarity rather than CMS (App.~\ref{app:tau-zero}).

\paragraph{Operating curves in $\tau$.} As $\tau$ varies in $[0,1]$ the feasible set $F_\tau(\vth)$ varies monotonically. On asymmetric problems where joint stationarity is structurally possible (for instance, regularizer-style secondaries that vanish at minima of the primary in the over-parameterized regime), this monotone family yields a continuous operating curve interpolating between the Pareto-stationary endpoint ($\tau = 0$) and CMS endpoints ($\tau > 0$). On genuine trade-off problems with no common stationary point, no CMS point exists and PCD with $\tau > 0$ has no fixed point. Iterates need not converge in this regime, and the framework is best understood as targeting the regularizer-style hierarchy of Eq.~\eqref{eq:objectives}.

\subsection{Comparison with Existing Methods}
\label{sec:comparison}

\begin{figure}[h]
    \centering
    \includegraphics[width=1\linewidth]{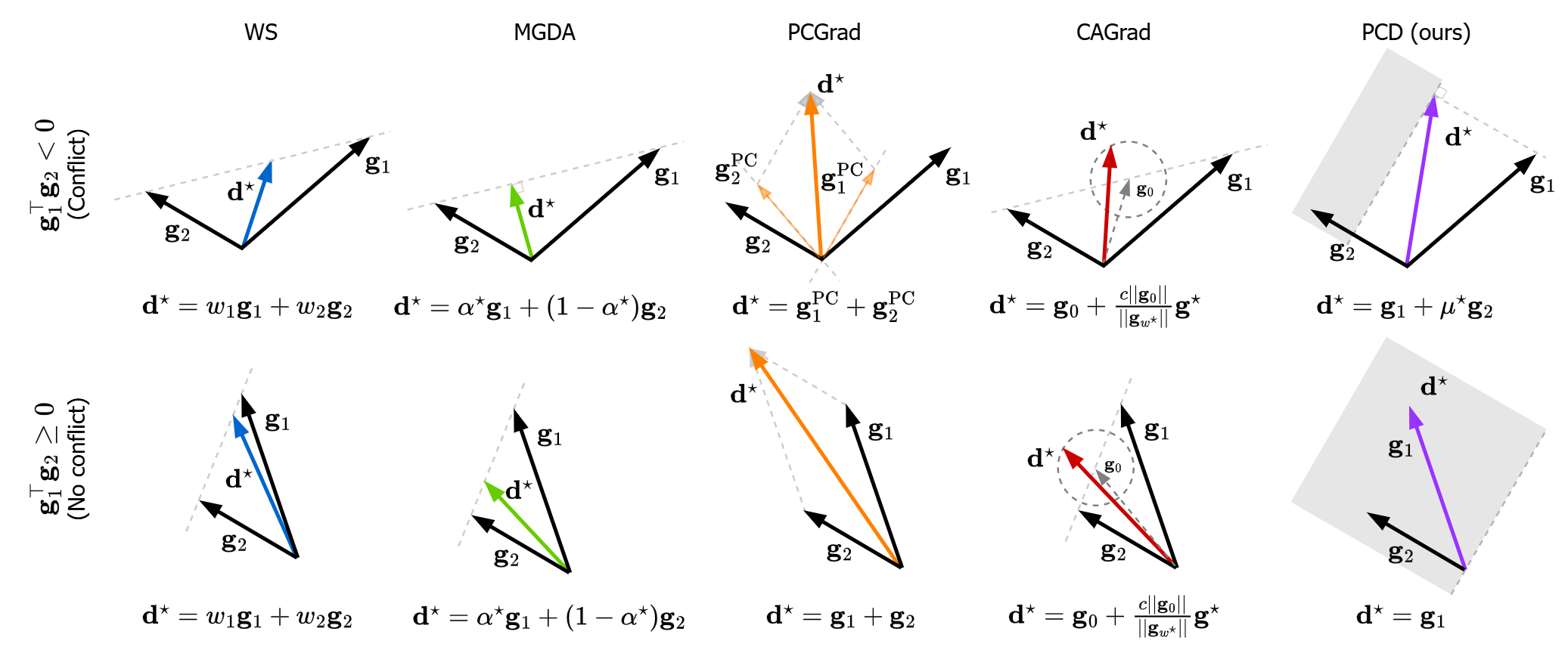}
    \caption{Comparison of PCD against established gradient-based multi-objective methods. \textbf{Top Row}: Conflicting case, $\vg_1^\T \vg_2 < 0$. \textbf{Bottom Row}: Non-conflicting case, $\vg_1^\T \vg_2 \geq 0$. For PCD in the non-conflicting case, $\tau = 0$ is used to align conflict definitions.}
    \label{fig:comparisons}
\end{figure}

Fig.~\ref{fig:comparisons} contrasts PCD against established gradient-based MOO methods in the standard conflicting ($\vg_1^\T \vg_2 < 0$) and non-conflicting ($\vg_1^\T \vg_2 \geq 0$) regimes. The contrast is structural: existing methods combine objective gradients symmetrically, producing updates in which $\vg_1$ and $\vg_2$ are interchangeable. Weighted sum~\citep{marler2010weighted} forms $w_1\vg_1 + w_2\vg_2$ at fixed weights, MGDA~\citep{desideri2009multiple} returns the minimum-norm convex
combination $\alpha^*\vg_1 + (1-\alpha^*)\vg_2$, PCGrad~\citep{pcgrad} projects out conflicting components pairwise and averages, reducing to $\vg_1 + \vg_2$ when the gradients do not conflict, and CAGrad~\citep{cagrad} anchors at the average $\vg_0 = \tfrac{1}{2}(\vg_1 + \vg_2)$ and improves the worst case within a trust region. Swapping the labels of $\vg_1$ and $\vg_2$ leaves every one of these updates unchanged.

PCD breaks this symmetry by construction. The KKT identity of Eq.~\eqref{eq:kkt-stationarity} pins to the primary gradient and admits the secondaries only through non-negative multipliers that vanish on inactive constraints. The QP objective $\norm{\dt - \gt_1}^2$ and the constraints involving only $\gt_j$ for $j \geq 2$ encode the asymmetry directly, not through preference weights. Two consequences follow. First, PCD exhibits \emph{three} update regimes rather than two: constraint inactive ($\dts = \gt_1$), constraint active without conflict, and constraint active with conflict. This is determined by where $\gt_2^\T \gt_1$ sits relative to $\tau \norm{\gt_2}^2$ (Fig.~\ref{fig:pcd-cases}, Cor.~\ref{cor:k2}). Second, the MGDA failure at $\vg_1 = \vzero$, where $\vzero \in \conv\{\vzero, \vg_2, \ldots, \vg_K\}$ collapses the combination to $\vzero$ regardless of the remaining $K-1$ secondaries (App.~\ref{app:per-step}, Prop.~\ref{prop:mgda}) does not occur for PCD with $\tau > 0$: the QP continues to drive every non-stationary secondary toward stationarity (Lemma~\ref{lem:boundary}).

\section{Experimental Results}
\label{sec:experiments}
In this section, we conduct experiments over two application domains and one set of synthetic experiments to demonstrate the superiority of PCD in asymmetric applications. In particular, we conduct experiments involving structured pruning and compression of neural networks (Sec.~\ref{sec:structured-pruning}) with $K = 2$ objectives, unstructured sparse and low-rank networks (Sec.~\ref{sec:k3-experiments}) with $K = 3$ objectives, and a set of synthetic experiments to validate PCD's advantages (Sec.~\ref{sec:synthetic}).

\subsection{Structured Pruning}
\label{sec:structured-pruning}

Structured neural-network compression is a natural instance of asymmetric multi-objective optimization. We set the primary objective $\mathcal{L}_1$ to be the standard cross-entropy loss and the secondary objective $\mathcal{L}_2$ to be the Group Lasso penalty:
\begin{equation}
    \mathcal{L}_2(\theta) \;=\; \sum_{g} \|\theta_g\|_2 ,
\end{equation}
where the sum runs over groups $g$ defined at the output-channel level of each layer. The formulation is asymmetric by construction: task accuracy is the deliverable, while structural sparsity is a secondary objective whose pressure is controlled by $\tau$. After training, groups whose norm $\|\theta_g\|_2$ falls below a threshold are removed and the resulting network is evaluated directly without fine-tuning. To produce results at a specified parameter-reduction target $r \in \{80\%, 85\%, 90\%, 95\%, 99\%\}$, the threshold is selected post hoc, per (method, architecture, dataset, target), as the smallest value for which the fraction of removed parameters reaches $r$.

\paragraph{Experimental Protocol. }

We evaluate on four architectures spanning a range of structural properties: DenseNet-121~\citep{huang2017densely}, ResNet-34~\citep{resnet}, an Inception-style model~\citep{inception}, and MobileNetV2~\citep{howard2017mobilenets, sandler2018mobilenetv2}, each trained on CIFAR-10 and CIFAR-100~\citep{krizhevsky2009learning}. All models are trained from scratch with Adam~\citep{kingma2014adam}, learning rate $10^{-3}$, batch size 128, cosine annealing, and 300 epochs. Each configuration is run for 5 independent seeds, with the reported numbers being means. PCD is compared against six multi-objective baselines: PCGrad, MGDA, FAMO, CAGrad, weighted sum (WS), and AuxiNash. For baselines with a tunable hyperparameter (the weight $w$ for WS and the conflict coefficient $c$ for CAGrad, and $p$ for AuxiNash), we sweep over $\{0.1, 0.2, \ldots, 0.9\}$ and report the best operating point per configuration. MGDA, PCGrad, and FAMO have no such hyperparameter and are run at their default settings. For PCD we use a denser $\tau$ sweep $\in  \{0.01, 0.02, \dots, 0.09, 0.1, 0.2, \dots, 0.9, 1\}$.

Fig.~\ref{fig:pruning-comparison} reports test accuracy as a function of parameter reduction across all eight configurations. PCD (purple, \textcolor{PCD}{$\filledstar$}) sits above every baseline across all eight panels and at every compression target. The gap is widest on the harder benchmarks: on ResNet-34/CIFAR-100, PCD retains $73.0\%$ accuracy at $80$--$85\%$ parameter reduction, while MGDA ($1.2\%$), FAMO ($2.7\%$), PCGrad ($7.4\%$), and CAGrad ($7.3\%$) collapse to near-random performance across all compression targets. AuxiNash is the strongest baseline at $44.4\%$, 28.6\% worse than PCD. The DenseNet-121/CIFAR-10 results illustrate PCD's operating range: $92.9\%$ accuracy at $95\%$ parameter reduction and $89.2\%$ at $99\%$ reduction: accuracy levels no baseline approaches at any compression target. Seed-by-seed values for PCD and for every baseline are tightly clustered around the reported means. Tabs.~\ref{tab:densenet-cifar10} and~\ref{tab:resnet-cifar100} give the full numerical comparison for these two configurations.

\begin{figure}[h]
    \centering
    
    \begin{subfigure}[t]{0.25\textwidth}
        \centering
        \includegraphics[width=\linewidth, height=0.2\textheight, keepaspectratio]{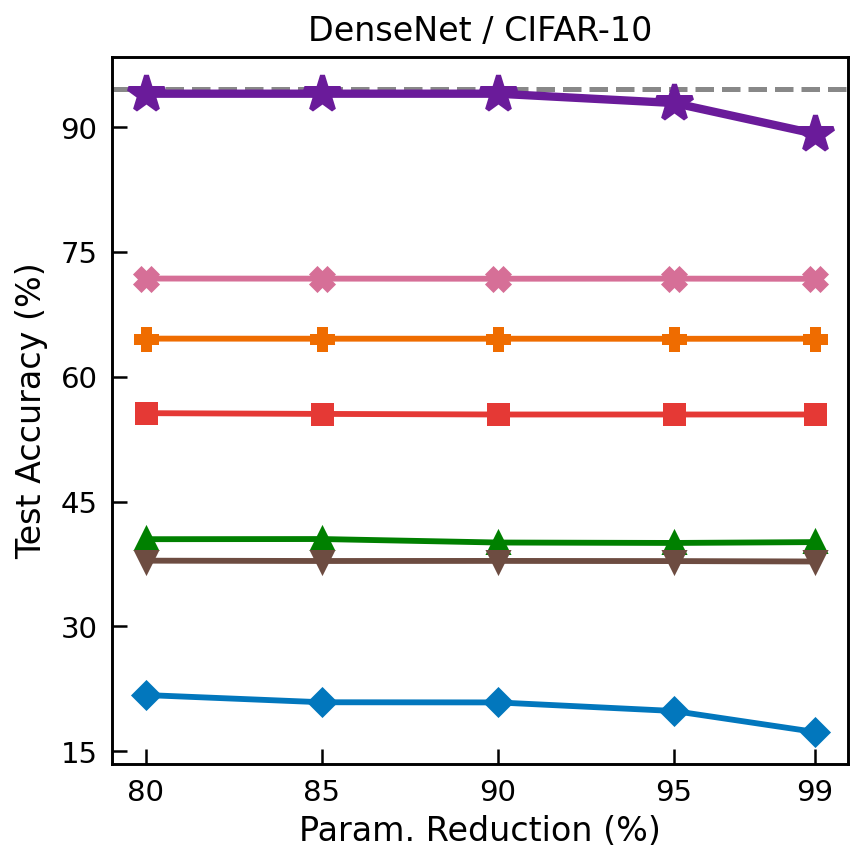}
        \caption{}
        \label{fig:densenet-cifar10}
    \end{subfigure}%
    \begin{subfigure}[t]{0.25\textwidth}
        \centering
        \includegraphics[width=\linewidth, height=0.2\textheight, keepaspectratio]{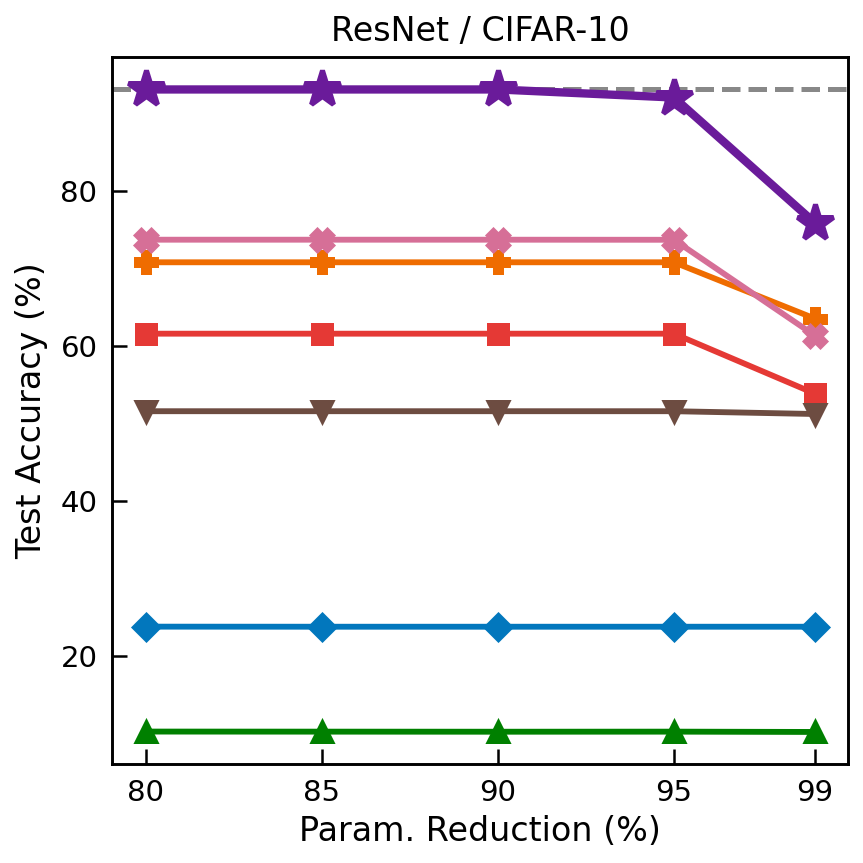}
        \caption{}
        \label{fig:resnet-cifar10}
    \end{subfigure}%
    \begin{subfigure}[t]{0.25\textwidth}
        \centering
        \includegraphics[width=\linewidth, height=0.2\textheight, keepaspectratio]{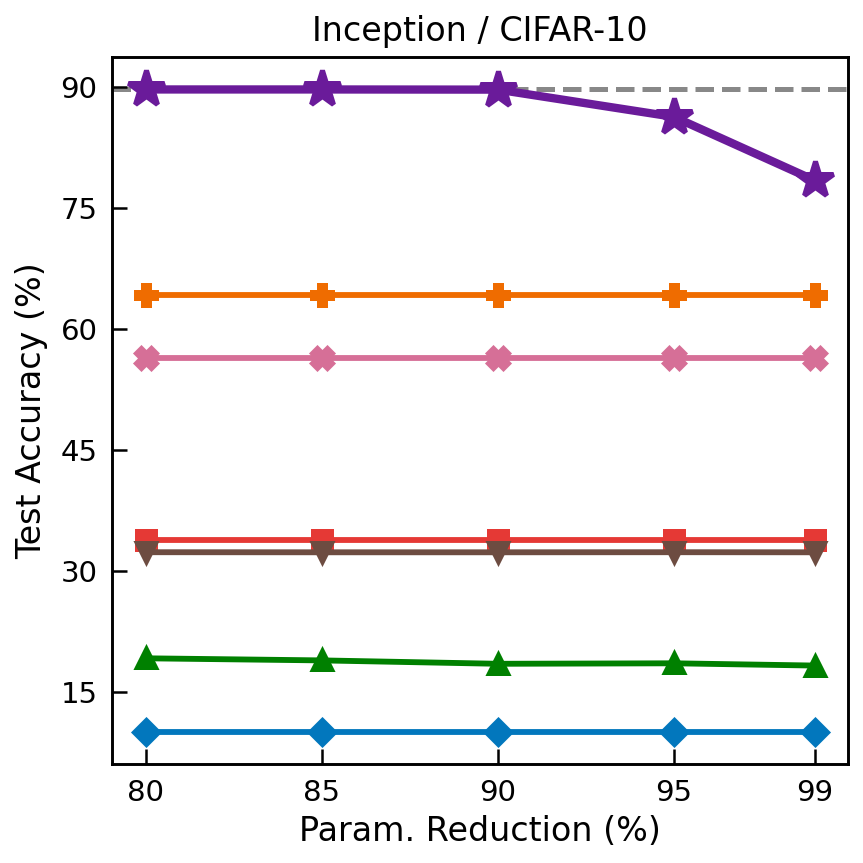}
        \caption{}
        \label{fig:inception-cifar10}
    \end{subfigure}%
    \begin{subfigure}[t]{0.25\textwidth}
        \centering
        \includegraphics[width=\linewidth, height=0.2\textheight, keepaspectratio]{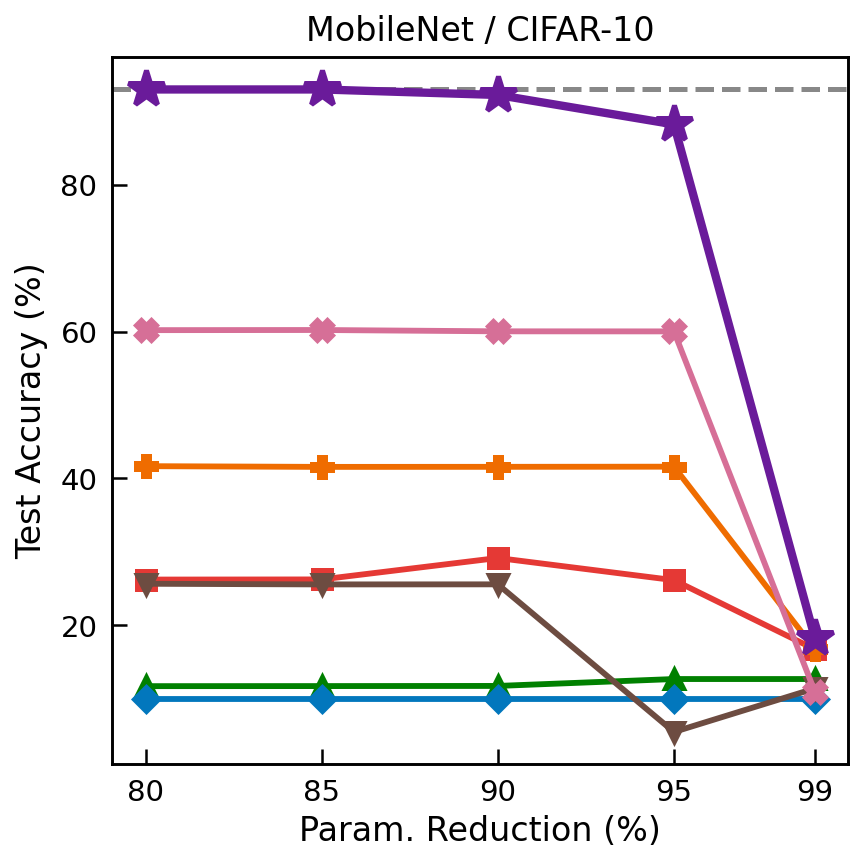}
        \caption{}
        \label{fig:mobilenet-cifar10}
    \end{subfigure}
    
    \vspace{0.3cm}
    
    \begin{subfigure}[t]{0.25\textwidth}
        \centering
        \includegraphics[width=\linewidth, height=0.2\textheight, keepaspectratio]{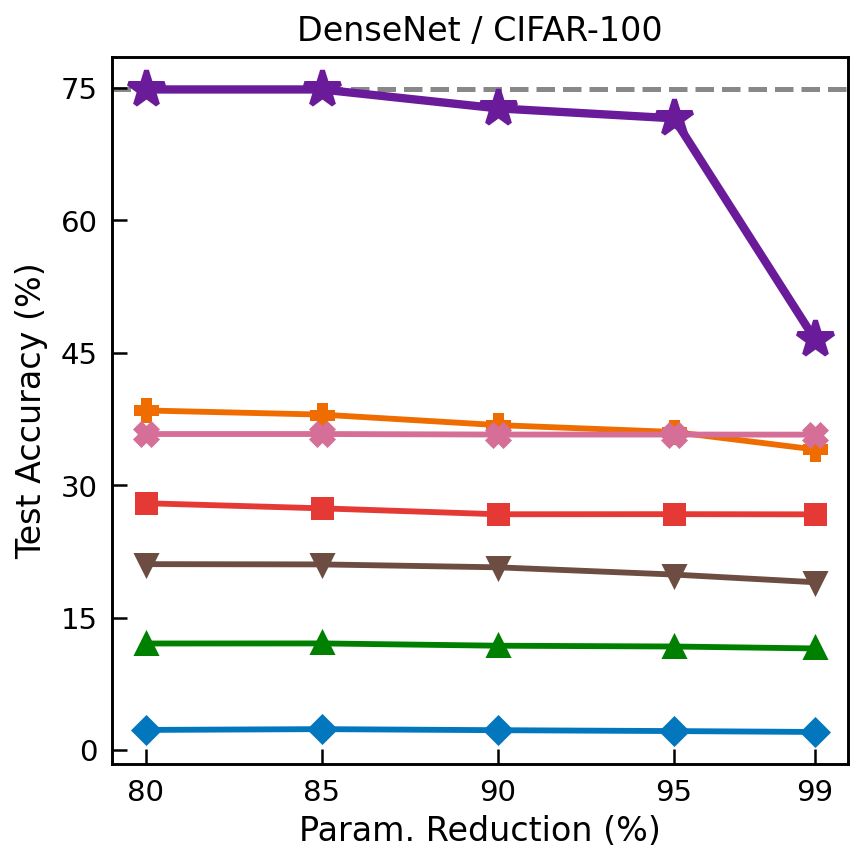}
        \caption{}
        \label{fig:densenet-cifar100}
    \end{subfigure}%
    \begin{subfigure}[t]{0.25\textwidth}
        \centering
        \includegraphics[width=\linewidth, height=0.2\textheight, keepaspectratio]{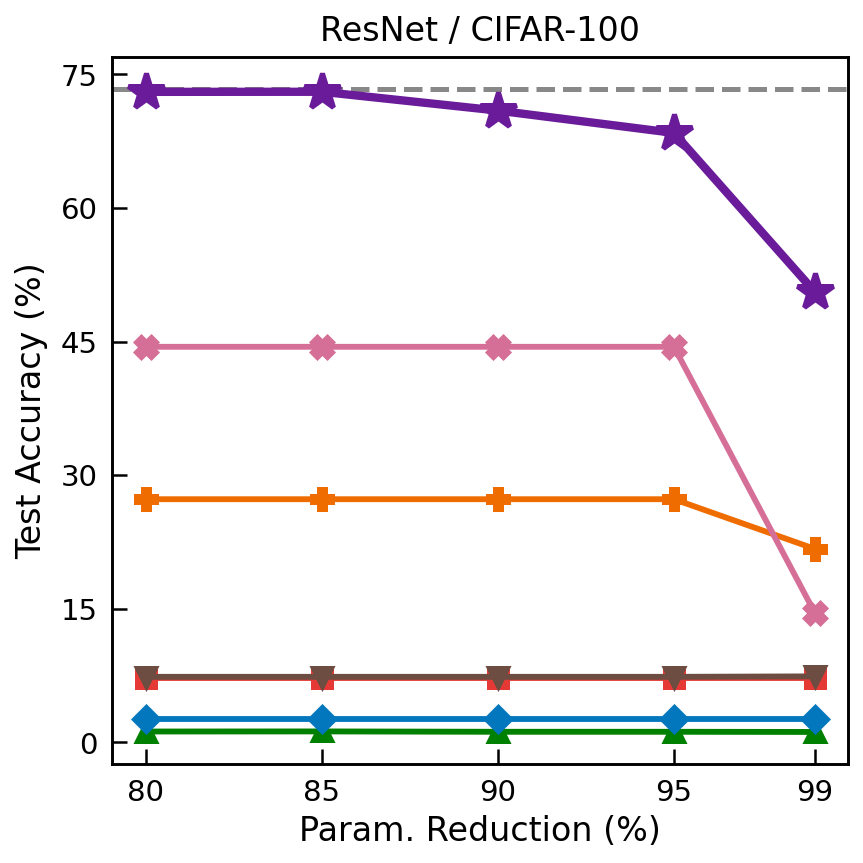}
        \caption{}
        \label{fig:resnet-cifar100}
    \end{subfigure}%
    \begin{subfigure}[t]{0.25\textwidth}
        \centering
        \includegraphics[width=\linewidth, height=0.2\textheight, keepaspectratio]{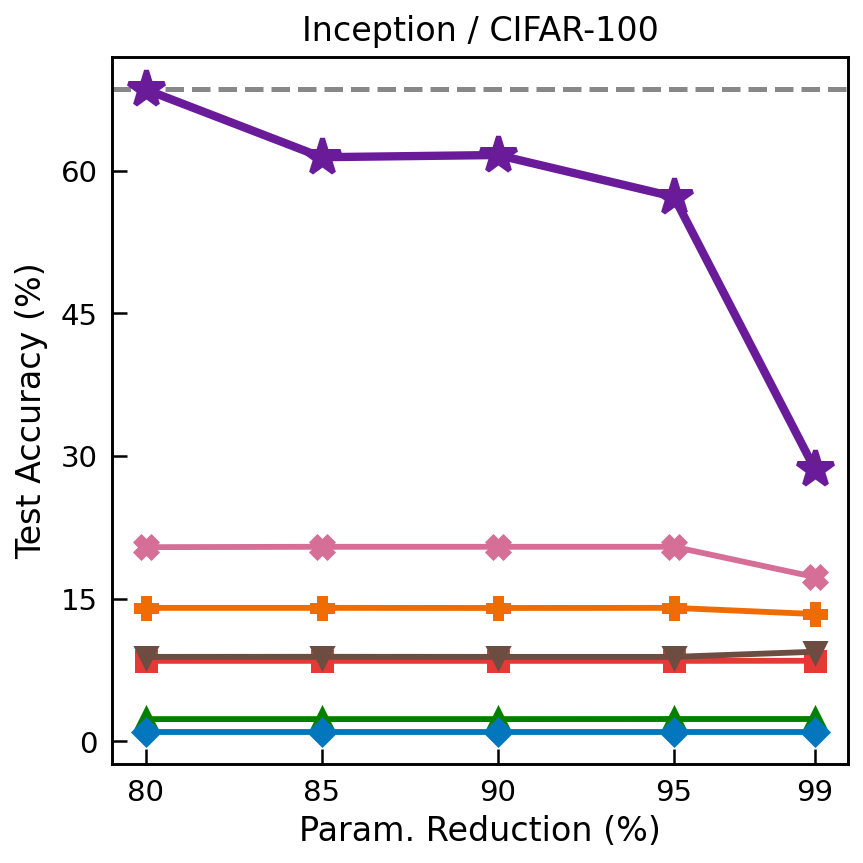}
        \caption{}
        \label{fig:inception-cifar100}
    \end{subfigure}%
    \begin{subfigure}[t]{0.25\textwidth}
        \centering
        \includegraphics[width=\linewidth, height=0.2\textheight, keepaspectratio]{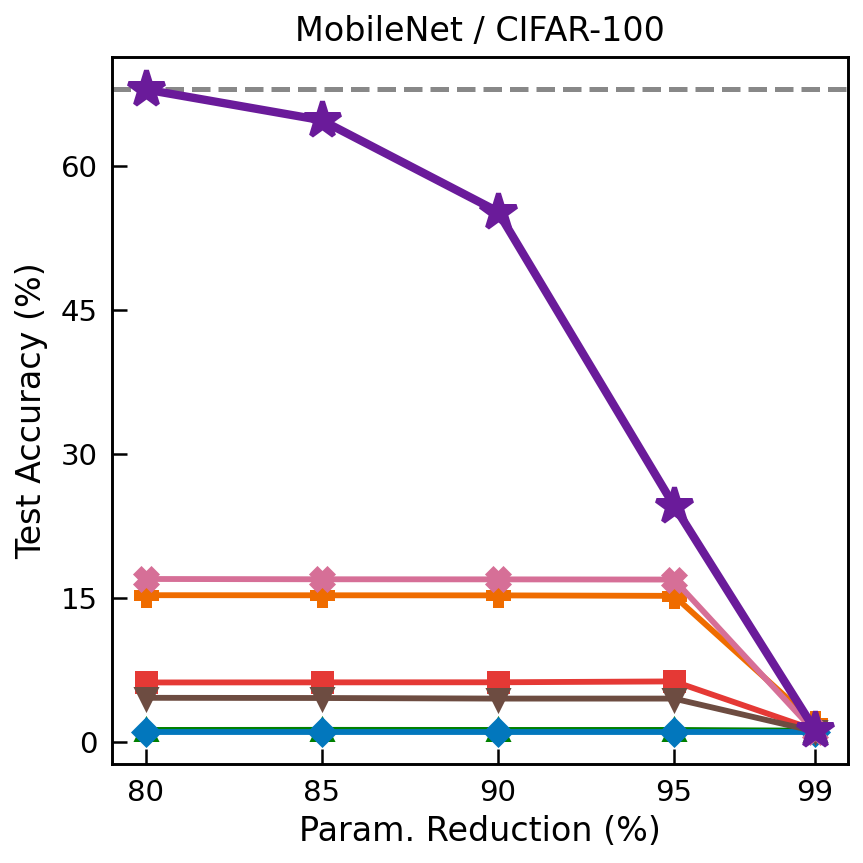}
        \caption{}
        \label{fig:mobilenet-cifar100}
    \end{subfigure}
    
    \caption{Sparsity results as a function of pruning across DenseNet-121, ResNet-34, Inception, and MobileNet-v2, trained on CIFAR-10 (top row) and CIFAR-100 (bottom row). Across all architecture / dataset configurations and at all pruning levels, PCD (\textcolor{PCD}{purple}) outperforms all multi-objective baselines, including weighted sum (\textcolor{WS}{orange}), CAGrad (\textcolor{CAGrad}{red}), PCGrad (\textcolor{PCGrad}{brown}), MGDA (\textcolor{MGDA}{green}), FAMO (\textcolor{FAMO}{blue}), and AuxiNash (\textcolor{AuxiNash}{pink}).}
    \label{fig:pruning-comparison}
\end{figure}

\begin{table}[h]
\centering\small
\caption{Test accuracy (\%) and deployment efficiency for all methods on DenseNet-121\,/\,CIFAR-10 (unpruned: 94.5\%, 1.82\,G FLOPs, 62.4\,ms latency, 27.2\,MB). FLOPs, latency, and model-size reductions are relative to the unpruned model and depend only on the number of remaining parameters at each target; all methods achieve the same efficiency gains ($\uparrow$ same). Accuracy values are means over 5 seeds.}
\begin{tabular}{ll | ccccccc}
  \toprule
  \textbf{Target} & \textbf{Metric} & \textbf{PCD (Ours)} & \textbf{WS} & \textbf{CAGrad} & \textbf{MGDA} & \textbf{PCGrad} & \textbf{FAMO} & \textbf{AuxiNash} \\
  \midrule
  \textbf{Unpruned}
  & Acc.  & \textbf{94.5} & 94.5 & 94.5 & 94.5 & 94.5 & 94.5 & 94.5 \\
  & FLOPs & \cellcolor{gray!15}1.82\,G & \multicolumn{6}{c}{\cellcolor{gray!15}} \\
  & Lat.  & \cellcolor{gray!15}62.4\,ms   & \multicolumn{6}{c}{\cellcolor{gray!15}$\uparrow$ same} \\
  & Size  & \cellcolor{gray!15}27.2\,MB  & \multicolumn{6}{c}{\cellcolor{gray!15}} \\
  \midrule
  \textbf{80\%}
  & Acc.  & \textbf{94.0} & 64.6 & 55.6 & 40.5 & 37.9 & 21.8 & 71.8 \\
  & FLOPs & \cellcolor{gray!15}0.48\,G ($3.8\times$) & \multicolumn{6}{c}{\cellcolor{gray!15}} \\
  & Lat.  & \cellcolor{gray!15}24.0\,ms ($2.6\times$)   & \multicolumn{6}{c}{\cellcolor{gray!15}$\uparrow$ same} \\
  & Size  & \cellcolor{gray!15}4.9\,MB ($5.6\times$)  & \multicolumn{6}{c}{\cellcolor{gray!15}} \\
  \midrule
  \textbf{85\%}
  & Acc.  & \textbf{94.0} & 64.6 & 55.5 & 40.5 & 37.9 & 20.9 & 71.8 \\
  & FLOPs & \cellcolor{gray!15}0.38\,G ($4.8\times$) & \multicolumn{6}{c}{\cellcolor{gray!15}} \\
  & Lat.  & \cellcolor{gray!15}20.8\,ms ($3.0\times$)   & \multicolumn{6}{c}{\cellcolor{gray!15}$\uparrow$ same} \\
  & Size  & \cellcolor{gray!15}3.7\,MB ($7.4\times$)  & \multicolumn{6}{c}{\cellcolor{gray!15}} \\
  \midrule
  \textbf{90\%}
  & Acc.  & \textbf{94.0} & 64.6 & 55.5 & 40.1 & 37.9 & 20.9 & 71.8 \\
  & FLOPs & \cellcolor{gray!15}0.32\,G ($5.7\times$) & \multicolumn{6}{c}{\cellcolor{gray!15}} \\
  & Lat.  & \cellcolor{gray!15}16.9\,ms ($3.7\times$)   & \multicolumn{6}{c}{\cellcolor{gray!15}$\uparrow$ same} \\
  & Size  & \cellcolor{gray!15}2.4\,MB ($11.5\times$)  & \multicolumn{6}{c}{\cellcolor{gray!15}} \\
  \midrule
  \textbf{95\%}
  & Acc.  & \textbf{92.9} & 64.6 & 55.5 & 40.0 & 37.9 & 19.8 & 71.8 \\
  & FLOPs & \cellcolor{gray!15}0.12\,G ($15.7\times$) & \multicolumn{6}{c}{\cellcolor{gray!15}} \\
  & Lat.  & \cellcolor{gray!15}9.2\,ms ($6.8\times$)   & \multicolumn{6}{c}{\cellcolor{gray!15}$\uparrow$ same} \\
  & Size  & \cellcolor{gray!15}0.88\,MB ($30.8\times$)  & \multicolumn{6}{c}{\cellcolor{gray!15}} \\
  \midrule
  \textbf{99\%}
  & Acc.  & \textbf{89.2} & 64.6 & 55.5 & 40.1 & 37.8 & 17.3 & 71.8 \\
  & FLOPs & \cellcolor{gray!15}0.042\,G ($43.2\times$) & \multicolumn{6}{c}{\cellcolor{gray!15}} \\
  & Lat.  & \cellcolor{gray!15}5.0\,ms ($12.5\times$)   & \multicolumn{6}{c}{\cellcolor{gray!15}$\uparrow$ same} \\
  & Size  & \cellcolor{gray!15}0.28\,MB ($97.2\times$)  & \multicolumn{6}{c}{\cellcolor{gray!15}} \\
  \bottomrule
\end{tabular}
\label{tab:densenet-cifar10}
\end{table}

\begin{table}[h]
\centering\small
\caption{Test accuracy (\%) and deployment efficiency for all methods on ResNet-34\,/\,CIFAR-100 (unpruned: 73.3\%, 2.33\,G FLOPs, 42.7\,ms latency, 81.5\,MB). FLOPs, latency, and model-size reductions are relative to the unpruned model and are shared across all methods at a given target ($\uparrow$ same). The $\approx$99\% row reflects the maximum achievable reduction ($\approx$98.5\%) for ResNet-34 under this pruning scheme. Accuracy values are means over 5 seeds.}
\begin{tabular}{ll | ccccccc}
  \toprule
  \textbf{Target} & \textbf{Metric} & \textbf{PCD (Ours)} & \textbf{WS} & \textbf{CAGrad} & \textbf{MGDA} & \textbf{PCGrad} & \textbf{FAMO} & \textbf{AuxiNash} \\
  \midrule
  \textbf{Unpruned}
  & Acc.  & \textbf{73.3} & 73.3 & 73.3 & 73.3 & 73.3 & 73.3 & 73.3 \\
  & FLOPs & \cellcolor{gray!15}2.33\,G & \multicolumn{6}{c}{\cellcolor{gray!15}} \\
  & Lat.  & \cellcolor{gray!15}42.7\,ms   & \multicolumn{6}{c}{\cellcolor{gray!15}$\uparrow$ same} \\
  & Size  & \cellcolor{gray!15}81.5\,MB  & \multicolumn{6}{c}{\cellcolor{gray!15}} \\
  \midrule
  \textbf{80\%}
  & Acc.  & \textbf{73.0} & 27.3 & 7.3 & 1.2 & 7.4 & 2.7 & 44.4 \\
  & FLOPs & \cellcolor{gray!15}0.58\,G ($4.0\times$) & \multicolumn{6}{c}{\cellcolor{gray!15}} \\
  & Lat.  & \cellcolor{gray!15}18.6\,ms ($2.3\times$)   & \multicolumn{6}{c}{\cellcolor{gray!15}$\uparrow$ same} \\
  & Size  & \cellcolor{gray!15}12.7\,MB ($6.4\times$)  & \multicolumn{6}{c}{\cellcolor{gray!15}} \\
  \midrule
  \textbf{85\%}
  & Acc.  & \textbf{73.0} & 27.3 & 7.3 & 1.2 & 7.4 & 2.7 & 44.4 \\
  & FLOPs & \cellcolor{gray!15}0.44\,G ($5.3\times$) & \multicolumn{6}{c}{\cellcolor{gray!15}} \\
  & Lat.  & \cellcolor{gray!15}15.8\,ms ($2.7\times$)   & \multicolumn{6}{c}{\cellcolor{gray!15}$\uparrow$ same} \\
  & Size  & \cellcolor{gray!15}9.5\,MB ($8.6\times$)  & \multicolumn{6}{c}{\cellcolor{gray!15}} \\
  \midrule
  \textbf{90\%}
  & Acc.  & \textbf{70.9} & 27.3 & 7.3 & 1.2 & 7.4 & 2.7 & 44.4 \\
  & FLOPs & \cellcolor{gray!15}0.31\,G ($7.5\times$) & \multicolumn{6}{c}{\cellcolor{gray!15}} \\
  & Lat.  & \cellcolor{gray!15}10.7\,ms ($4.0\times$)   & \multicolumn{6}{c}{\cellcolor{gray!15}$\uparrow$ same} \\
  & Size  & \cellcolor{gray!15}4.4\,MB ($18.4\times$)  & \multicolumn{6}{c}{\cellcolor{gray!15}} \\
  \midrule
  \textbf{95\%}
  & Acc.  & \textbf{68.4} & 27.3 & 7.3 & 1.2 & 7.4 & 2.7 & 44.4 \\
  & FLOPs & \cellcolor{gray!15}0.23\,G ($10.1\times$) & \multicolumn{6}{c}{\cellcolor{gray!15}} \\
  & Lat.  & \cellcolor{gray!15}9.3\,ms ($4.6\times$)   & \multicolumn{6}{c}{\cellcolor{gray!15}$\uparrow$ same} \\
  & Size  & \cellcolor{gray!15}2.6\,MB ($30.8\times$)  & \multicolumn{6}{c}{\cellcolor{gray!15}} \\
  \midrule
  $\approx$\textbf{99\%}
  & Acc.  & \textbf{50.6} & 21.7 & 7.3 & 1.2 & 7.4 & 2.7 & 14.5 \\
  & FLOPs & \cellcolor{gray!15}0.037\,G ($62.9\times$) & \multicolumn{6}{c}{\cellcolor{gray!15}} \\
  & Lat.  & \cellcolor{gray!15}6.8\,ms ($6.3\times$)   & \multicolumn{6}{c}{\cellcolor{gray!15}$\uparrow$ same} \\
  & Size  & \cellcolor{gray!15}1.3\,MB ($64.7\times$)  & \multicolumn{6}{c}{\cellcolor{gray!15}} \\
  \bottomrule
\end{tabular}
\label{tab:resnet-cifar100}
\end{table}

\paragraph{Efficiency and Pareto frontiers.}
Tables~\ref{tab:densenet-cifar10} and~\ref{tab:resnet-cifar100} also report the hardware impact of PCD's compression on the two representative architectures, measured on CPU inference. On DenseNet-121/CIFAR-10, $\tau = 0.01$ yields a $5.7\times$ FLOPs reduction, $3.7\times$ latency reduction, and $11.5\times$ file-size reduction while matching unpruned accuracy ($94.0\%$ vs.\ $94.5\%$). Increasing to $\tau = 0.05$ reaches $43.2\times$ FLOPs reduction and $97.2\times$ file-size reduction at $89.2\%$ accuracy. ResNet-34/CIFAR-100 shows the same pattern: small accuracy costs traded for order-of-magnitude resource savings.

\begin{figure}[h]
    \centering
    \includegraphics[width=1\linewidth]{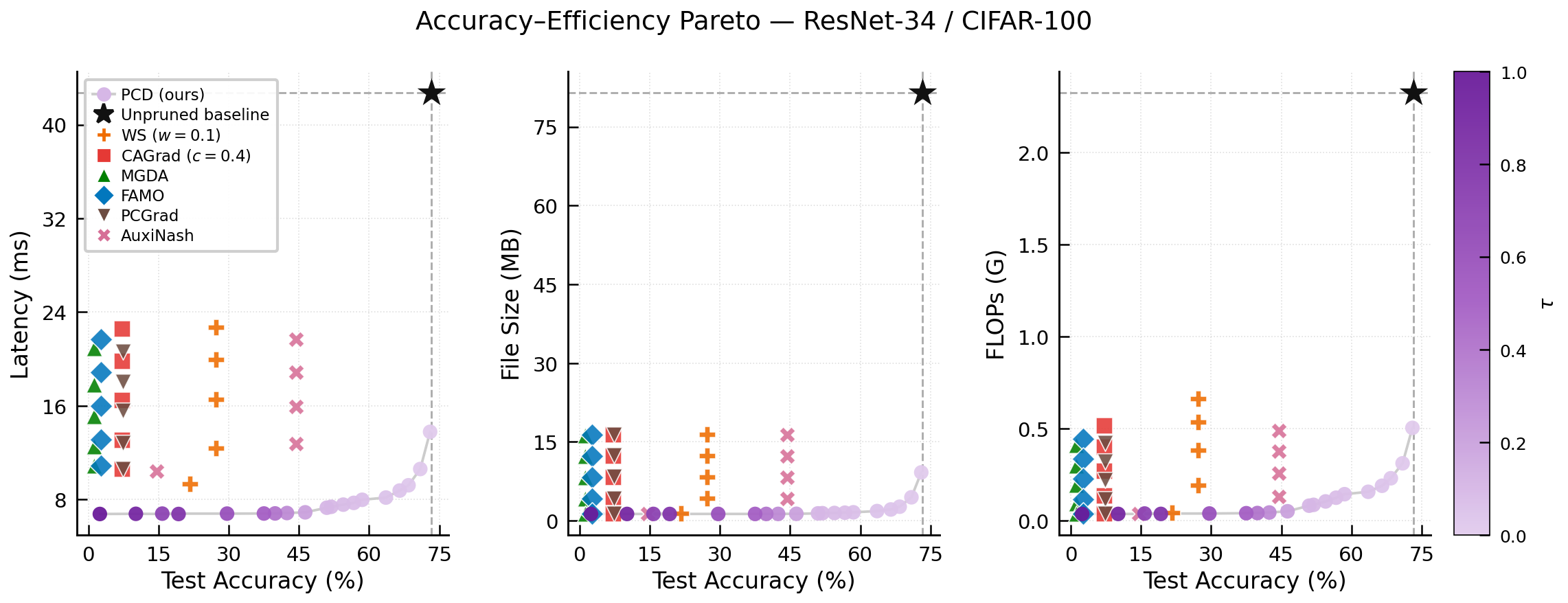}
    \caption{Accuracy–efficiency Pareto frontiers for PCD on ResNet-34/CIFAR-100 across inference latency (ms), model file size (MB), and FLOPs (G). Unpruned baseline ($\star$) is shown for reference.}
    \label{fig:resnet-cifar100-pareto}
\end{figure}

\begin{figure}[h]
    \centering
    \includegraphics[width=1\linewidth]{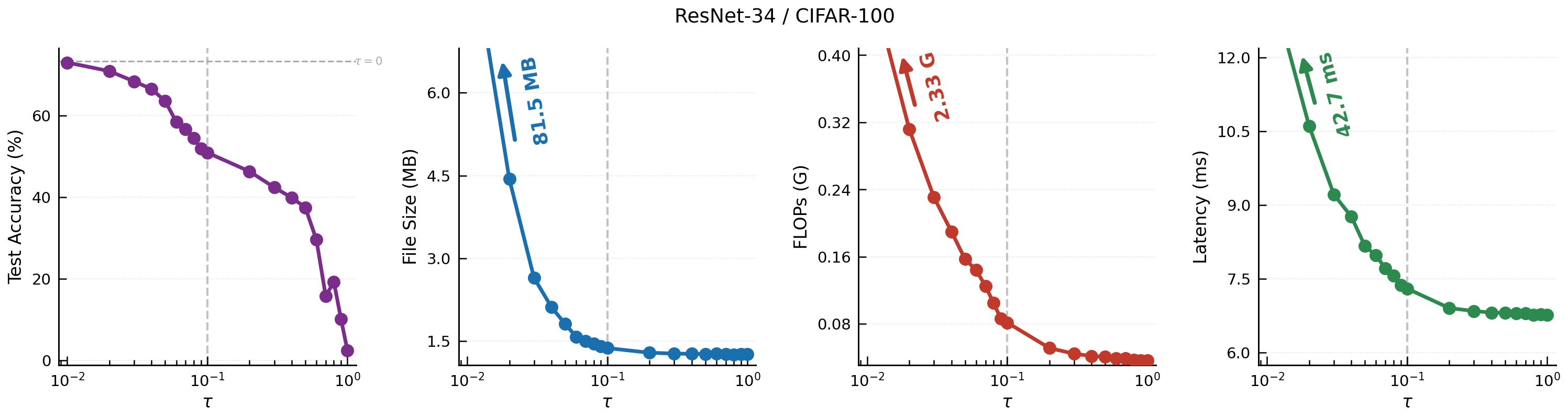}
    \caption{Accuracy, file size (MB), FLOPs (G) and latency (ms) as a function of the $\tau$ parameter for ResNet-34 trained on the CIFAR-100 dataset. Arrows pointing up show values for the respective efficiency metrics at $\tau = 0$ (unpruned baseline).}
    \label{fig:tau-sweep}
\end{figure}

Fig.~\ref{fig:resnet-cifar100-pareto} shows accuracy--efficiency Pareto frontiers against all baselines for ResNet-34/CIFAR-100. PCD's $\tau$ sweep traces a curve across the (accuracy, resource) plane on each of the three resource axes (latency, file size, FLOPs), while each baseline contributes a single operating point per (compression target) row of Tab.~\ref{tab:resnet-cifar100}. We plot the best operating point reached over its hyperparameter sweep for the swept baselines, and the default-setting operating point for the others. On every axis and at every compression level shown, baseline operating points lie above the PCD curve. The comparison is asymmetric in structure (a swept curve against a small set of points), but it is the natural comparison given that the swept baselines' best operating points already saturate their accuracy ceiling: increasing $w$ for WS or $c$ for CAGrad beyond the chosen value does not yield further accuracy improvement. The takeaway is operational rather than topological: not only can $\tau$ be tuned to land at a chosen point along PCD's frontier, but PCD's operating points are simultaneously better than the baselines' best operating points across all three resource axes.

A practical consequence is that $\tau$ acts as a single, interpretable compression dial. Rather than searching over weight ratios (WS), conflict coefficients (CAGrad), or separate regularization schedules (MGDA / PCGrad / FAMO), a practitioner can sweep $\tau \in (0, 1]$ to trace a continuous Pareto curve at training time. Small $\tau$ (light purple in Fig.~\ref{fig:resnet-cifar100-pareto}) yields light compression with near-baseline accuracy, whereas large $\tau$ (dark blue) yields aggressive compression. The curve is smooth and monotone, with no discontinuous jumps or instabilities as $\tau$ increases. This is a property that simplifies deployment-time selection based on a target resource budget. As Fig.~\ref{fig:tau-sweep} shows for ResNet-34/CIFAR-100, the largest resource gains come from $\tau \in (0, 0.1]$, with curves plateauing beyond this. Accuracy drops are also gentlest within this range. For $\tau > 0.1$ on this configuration, accuracy degrades sharply. The same qualitative pattern holds across the other seven (architecture, dataset) combinations. Per-configuration $\tau$ sweeps are reported in App.~\ref{app:expanded-structured-pruning}.

\subsection{Unstructured Sparsity and Low-Rankness}
\label{sec:k3-experiments}

The structured-pruning study of Sec.~\ref{sec:structured-pruning} pairs the task loss with a single secondary. We now move to a genuine three-objective setting ($K=3$), in which the primary cross-entropy loss $\mathcal{L}_1$ is constrained \emph{simultaneously} by two structurally distinct regularizers. The first is an $\ell_1$ penalty, which drives unstructured weight sparsity,
\begin{equation}
    \mathcal{L}_2(\theta) \;=\; \|\theta\|_1 \;=\; \sum_i |\theta_i| ,
\end{equation}
and the second is a nuclear-norm penalty, which drives each layer toward low rank,
\begin{equation}
    \mathcal{L}_3(\theta) \;=\; \sum_{\ell} \|W_\ell\|_* \;=\; \sum_{\ell} \sum_k \sigma_k(W_\ell),
\end{equation}
where $W_\ell$ is the weight matrix of layer $\ell$ and $\sigma_k(\cdot)$ its $k$-th singular value. As in the structured case the hierarchy is asymmetric by construction: task accuracy is the deliverable, while sparsity and low-rankness are secondary structural objectives whose pressure is modulated by $\tau$.

We note that the two secondaries are \emph{not} aligned. The $\ell_1$ penalty acts entry-wise, zeroing individual weights wherever they are small and without regard to their position, whereas the nuclear norm acts on the spectrum, collapsing entire singular directions and thereby favoring weight matrices whose rows and columns are linearly dependent. Although in heavily over-compressed regimes the two objectives can appear correlated, their useful operating ranges impose competing pressures on shared parameters. 

\paragraph{Experimental Protocol. }
We evaluate on ResNet-34~\citep{resnet} and an Inception-style model~\citep{inception}, each trained on CIFAR-10 and CIFAR-100~\citep{krizhevsky2009learning}, following the protocol of Sec.~\ref{sec:structured-pruning}, all models are trained from scratch with Adam, learning rate $10^{-3}$, batch size 128, cosine annealing, and 300 epochs. Each configuration is run for 3 independent seeds, with the reported numbers being means. PCD is compared against the same six multi-objective baselines: PCGrad, MGDA, FAMO, CAGrad, weighted sum (WS), and AuxiNash. For the baselines with a tunable hyperparameter (the weight $w$ for WS, the conflict coefficient $c$ for CAGrad, and $p$ for AuxiNash) we sweep $\{0.1, 0.2, \ldots, 0.9\}$ and report the best operating point per configuration based on accuracy. MGDA, PCGrad, and FAMO are run at their default settings. After training, weight sparsity is read off by thresholding small-magnitude weights and low-rankness from the singular spectrum of each layer, both post hoc and without fine-tuning.

We summarize each method in two complementary ways. First, on the raw and directly interpretable axes of test accuracy, parameter sparsity, and effective rank, we trace PCD's operating point as a function of $\tau$ and overlay each baseline's best-accuracy operating point as a horizontal reference (Fig.~\ref{fig:hero-tausweep}). Second, to compare the full achievable sets in a single scalar, we report the dominated hypervolume. As the three objective losses span different numerical ranges, we first place them on a common footing by normalizing each one per configuration via $x \mapsto (\log_{10} x - \log_{10} x_{\min}) / (\log_{10} x_{\max} - \log_{10} x_{\min})$, mapping every axis to $[0,1]$ with lower values better and the joint ideal at the origin. Given a method's set of operating points and a fixed reference point, the hypervolume is the volume of this normalized space that the set dominates. It rewards a method both for approaching the joint ideal on all three axes at once and for covering more of the trade-off surface, so a larger hypervolume indicates that a method's achievable set is jointly superior rather than merely better on one axis in isolation. We summarize low-rankness by the \emph{effective rank}~\citep{roy2007effective}: the exponential of the entropy of each layer's normalized singular values, averaged over layers (which is threshold-free and, unlike a hard rank count, is not inflated by simply driving a layer's weights toward zero).

\begin{figure*}[h]
    \centering
    \includegraphics[width=1\linewidth]{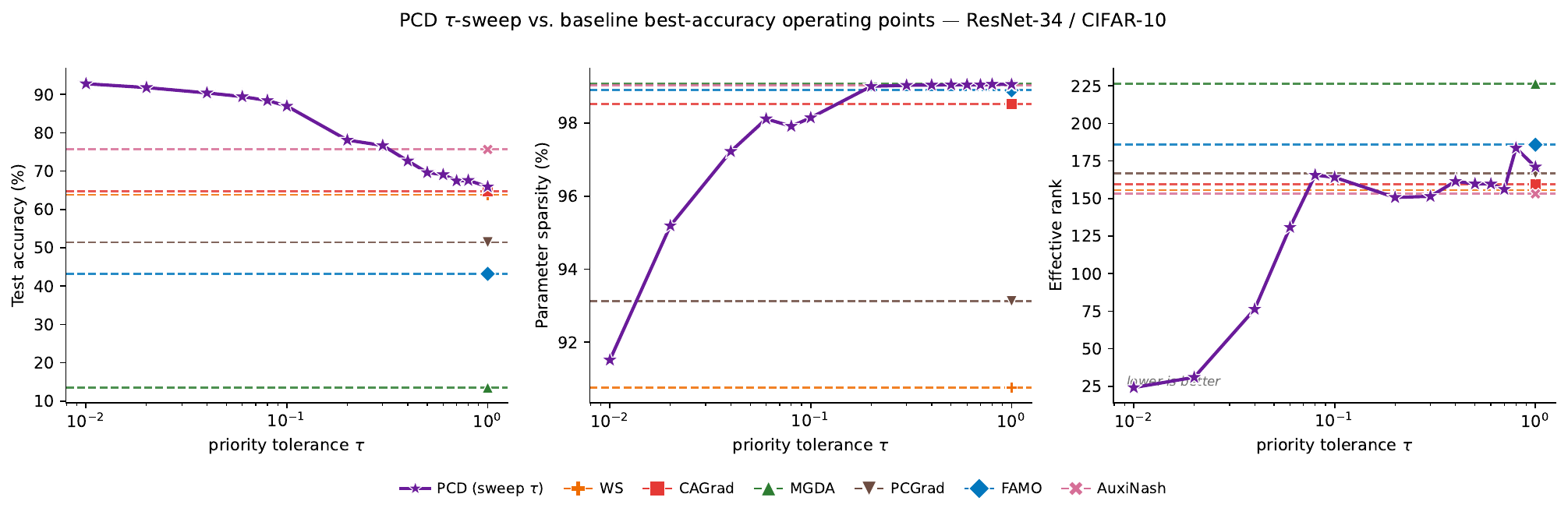}
    \caption{PCD's $\tau$-sweep on the three deployment-relevant axes for ResNet-34/CIFAR-10: test accuracy, parameter sparsity, and effective rank (lower is better). PCD (\textcolor{PCD}{$\filledstar$}) is swept over $\tau\in(0,1]$, each baseline is a horizontal line at its highest-accuracy operating point (the points in Table~\ref{tab:lowrank}), so a vertical slice at any $\tau$ compares PCD against every baseline on all three axes at once. Over a wide range of $\tau$, PCD's accuracy exceeds every baseline and its effective rank lies well below every baseline. Increasing $\tau$ trades these for parameter sparsity, which reaches the baseline cluster near $\tau\!\approx\!0.2$.}
    \label{fig:hero-tausweep}
\end{figure*}

\begin{table}[h]
\centering \small
\caption{Operating points on the $K=3$ (cross-entropy\,/\,$\ell_1$\,/\,nuclear norm) task, across all four architecture/dataset configurations. For each baseline we report the operating point of highest mean test accuracy. For PCD we report a single fixed tolerance ($\tau=0.02$), a representative low-tolerance setting that trades a small amount of accuracy for stronger compression, rather than its accuracy-optimal point. At the selected point we report accuracy, parameter sparsity, and \emph{effective rank}. Accuracy is the main ``deliverable'', whilst sparsity and effective rank characterize the compression attained there. Across every configuration PCD attains the highest accuracy by a wide margin. The baselines reach higher nominal sparsity only at a large accuracy cost. PCD recovers near-unpruned accuracy (unpruned: 94.1/71.9/90.7/67.8\% for the four configurations, in row order). Means over 3 seeds.}
\label{tab:lowrank}
\resizebox{\textwidth}{!}{%
\begin{tabular}{ll|ccccccc}
  \toprule
  \textbf{Configuration} & \textbf{Metric} & \textbf{PCD (Ours)} & \textbf{WS} & \textbf{CAGrad} & \textbf{MGDA} & \textbf{PCGrad} & \textbf{FAMO} & \textbf{AuxiNash} \\
  \midrule
  \multirow{3}{*}{\begin{tabular}[c]{@{}l@{}}ResNet-34 \\ CIFAR-10\end{tabular}}
   & Acc. (\%) & \textbf{91.8} & 63.8 & 64.8 & 13.5 & 51.4 & 43.2 & 75.7 \\
   & Sparsity (\%) & 95.2 & 90.8 & 98.5 & 99.1 & 93.1 & 98.9 & 99.0 \\
   & Eff. rank $\downarrow$ & 30.9 & 155.6 & 159.6 & 226.4 & 166.8 & 185.8 & 153.1 \\
  \midrule
  \multirow{3}{*}{\begin{tabular}[c]{@{}l@{}}ResNet-34 \\ CIFAR-100\end{tabular}}
   & Acc. (\%) & \textbf{71.0} & 34.2 & 36.1 & 1.1 & 30.0 & 20.3 & 48.7 \\
   & Sparsity (\%) & 92.6 & 90.6 & 98.9 & 98.9 & 92.2 & 98.4 & 98.8 \\
   & Eff. rank $\downarrow$ & 24.5 & 137.9 & 186.3 & 227.6 & 160.0 & 190.5 & 184.8 \\
  \midrule
  \multirow{3}{*}{\begin{tabular}[c]{@{}l@{}}Inception \\ CIFAR-10\end{tabular}}
   & Acc. (\%) & \textbf{89.2} & 52.1 & 38.2 & 23.1 & 33.8 & 10.0 & 60.3 \\
   & Sparsity (\%) & 86.1 & 90.8 & 99.1 & 99.1 & 91.6 & 99.2 & 99.0 \\
   & Eff. rank $\downarrow$ & 31.3 & 39.7 & 51.2 & 76.4 & 55.8 & 100.3 & 18.0 \\
  \midrule
  \multirow{3}{*}{\begin{tabular}[c]{@{}l@{}}Inception \\ CIFAR-100\end{tabular}}
   & Acc. (\%) & \textbf{67.8} & 29.8 & 14.1 & 4.3 & 12.4 & 1.0 & 31.9 \\
   & Sparsity (\%) & 76.1 & 86.6 & 94.3 & 94.5 & 86.8 & 94.5 & 93.1 \\
   & Eff. rank $\downarrow$ & 27.7 & 28.1 & 59.0 & 70.3 & 52.4 & 100.3 & 12.1 \\
  \bottomrule
\end{tabular}%
}
\end{table}

\paragraph{Operating curves.}
Fig.~\ref{fig:hero-tausweep} reports the result for ResNet-34/CIFAR-10. Each baseline collapses to a single operating point regardless of its hyperparameter, so we draw it as a horizontal line at its best-accuracy configuration (the values in Table~\ref{tab:lowrank}). PCD, in contrast, is swept over $\tau$ and traces a continuous curve on each axis. A vertical slice at any $\tau$ therefore compares PCD against every baseline on accuracy, sparsity, and effective rank at once.

The comparison is decisive on two axes and competitive on the third. Over the entire low-to-moderate range of $\tau$, PCD's test accuracy exceeds that of every baseline, remaining near the dense-network ceiling at small $\tau$ and only dropping below the strongest baseline once $\tau$ is pushed into the aggressively compressed regime. On the rank axis, PCD's effective rank lies far below every baseline at small $\tau$: it discovers genuinely low-dimensional structure, rather than the degenerate near-zero spectra that inflate the baselines' effective rank. Parameter sparsity is the lone axis on which the baselines hold an initial edge. PCD starts less sparse and climbs with $\tau$, reaching the baseline cluster near $\tau \approx 0.2$. The baselines attain that sparsity only by collapsing the network, paying for it with the destroyed accuracy and inflated effective rank visible in the adjacent panels, whereas PCD reaches comparable sparsity while preserving both accuracy and low-rank structure.

Table~\ref{tab:lowrank} quantifies these results. Across all configurations PCD retains the highest accuracy relative to the original dense networks, whereas the other methods over-compress, trading accuracy for gains on the $\ell_1$ and nuclear norms. PCD maintains competitive effective rank, with the only exceptions on the Inception-style architecture: AuxiNash reaches lower effective rank on both Inception configurations (and WS on CIFAR-100), despite trailing PCD by 30--40\% in accuracy. The single competitive baseline overall is AuxiNash, which nonetheless attains a substantially smaller dominated hypervolume than PCD (Fig.~\ref{fig:hypervolume}). It is very easy to see here, however, that having independent $\tau$ values for each objective can, in principle, improve the performnce here even further, which we defer to future work.
 
\paragraph{Hypervolume. }
Fig.~\ref{fig:hypervolume} aggregates these comparisons into the dominated hypervolume of each method, per configuration. Across configurations PCD dominates a substantially larger region of the joint objective space than any baseline, and the margin is stable across seeds. The hypervolume confirms that the per-axis separation in Fig.~\ref{fig:hero-tausweep} is not an artifact of any single axis or operating point, but reflects a genuinely dominant operating set: one closer to the joint optimum on cross-entropy, sparsity, and low-rankness at the same time.

\begin{figure*}[h]
    \centering
    \includegraphics[width=1\linewidth]{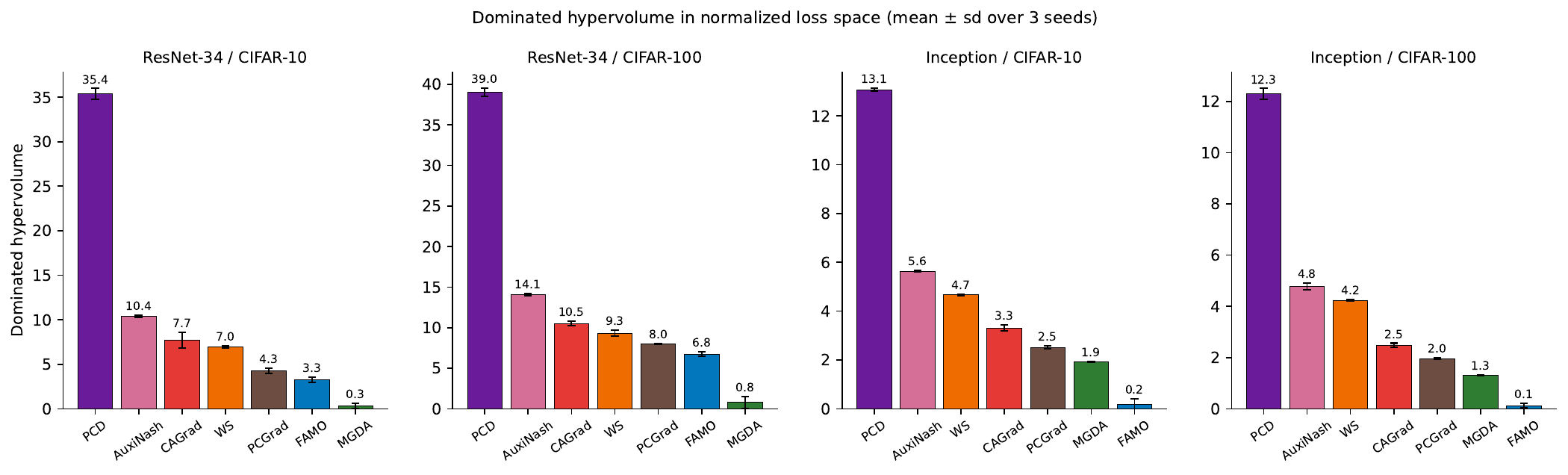}
    \caption{Dominated hypervolume in the normalized $K=3$ objective space, one panel per architecture/dataset configuration. For each configuration the three losses are mapped to goodness coordinates $g=1-\tilde{L}\in[0,1]^3$ (per-configuration $\log_{10}$ min--max normalization of each loss across all methods, ideal at $(1,1,1)$), and we report the volume of $[0,1]^3$ dominated by each method's full set of operating points relative to the origin reference $(0,0,0)$. Values are scaled by $100$ for readability and shown as mean\,$\pm$\,sd over $3$ seeds. PCD attains the largest hypervolume in every configuration, consistently performing $2.3$--$3.4\times$ the strongest baseline (AuxiNash), reflecting that sweeping $\tau$ traces a controllable front that reaches the accuracy ceiling, whereas the symmetric baselines cluster in the over-compressed corner.}
    \label{fig:hypervolume}
\end{figure*}

\subsection{Synthetic Experiments}
\label{sec:synthetic}

We complement the deep-network results with controlled problems that expose quantities the network experiments cannot measure directly: the scale-dependence of a method's operating point, the feasibility
and active-set structure of the QP as objectives accumulate, and the exact stationarity reached at convergence. Throughout, the primary $\mathcal{L}_1(\theta)=\tfrac12\theta^{\top}H\theta-c^{\top}\theta$ is a convex quadratic in $\mathbb{R}^{50}$ with logarithmically spaced eigenvalues and condition number $\kappa(H)=10^{2}$, $c\sim\mathcal{N}(0,I)$, mimicking the long-tailed spectra of network Hessians \citep{sagun2017empirical,ghorbani2019investigation}. PCD uses the EMA normalization of \Cref{def:ema} with $\beta=0.999$, $\varepsilon=10^{-8}$. We compare against weighted sum (WS), MGDA, PCGrad, CAGrad, and AuxiNash. The first two experiments verify properties of the PCD construction itself: the scale invariance of its tolerance and the feasibility of its QP as objectives accumulate on convex problems where they are cleanly measurable. The third isolates the property that genuinely requires non-convexity: the strictly-stronger fixed-point guarantee of \Cref{thm:cms}, and how the resulting separation from symmetric methods scales with the number of objectives.

\subsubsection{Scale invariance of \texorpdfstring{$\tau$}{tau}}
\label{sec:syn-scale}

The meaning of $\tau$ rests on PCD's gradient normalization. As a result, we test whether a fixed $\tau$ yields a fixed operating point as a secondary is rescaled. Pairing the primary with a single convex secondary and holding $\tau=0.3$ (PCD) and $w=0.5$ (WS), we rescale $\mathcal{L}_2\mapsto c\,\mathcal{L}_2$ over $c\in[10^{-4},10^{6}]$ and record the achieved operating point, in the \emph{original} units of $\mathcal{L}_2$, on the exact Pareto front (Fig.~\ref{fig:syn-scale}). PCD's
operating point is invariant across all ten orders of magnitude (total spread $0.0075$ in
$\mathcal{L}_2$), confirming the asymptotic scale invariance of \Cref{thm:scale-inv}. WS instead traverses the entire front. Its achieved secondary ranges from $18.5$ at $c=10^{-4}$ to effectively $0$ at $c=10^{6}$, so the same nominal weight encodes a completely different priority depending on the arbitrary scale of the penalty, exactly the failure of \Cref{corr:ws-not-scale-invariant}. The EMA normalization is what makes $\tau$ transferrable across objectives whose raw magnitudes differ by orders of magnitude, the
regime typical of a task loss paired with secondary constraints.

\begin{figure}[h]
  \centering
  \begin{subfigure}{0.46\textwidth}\includegraphics[width=\linewidth]{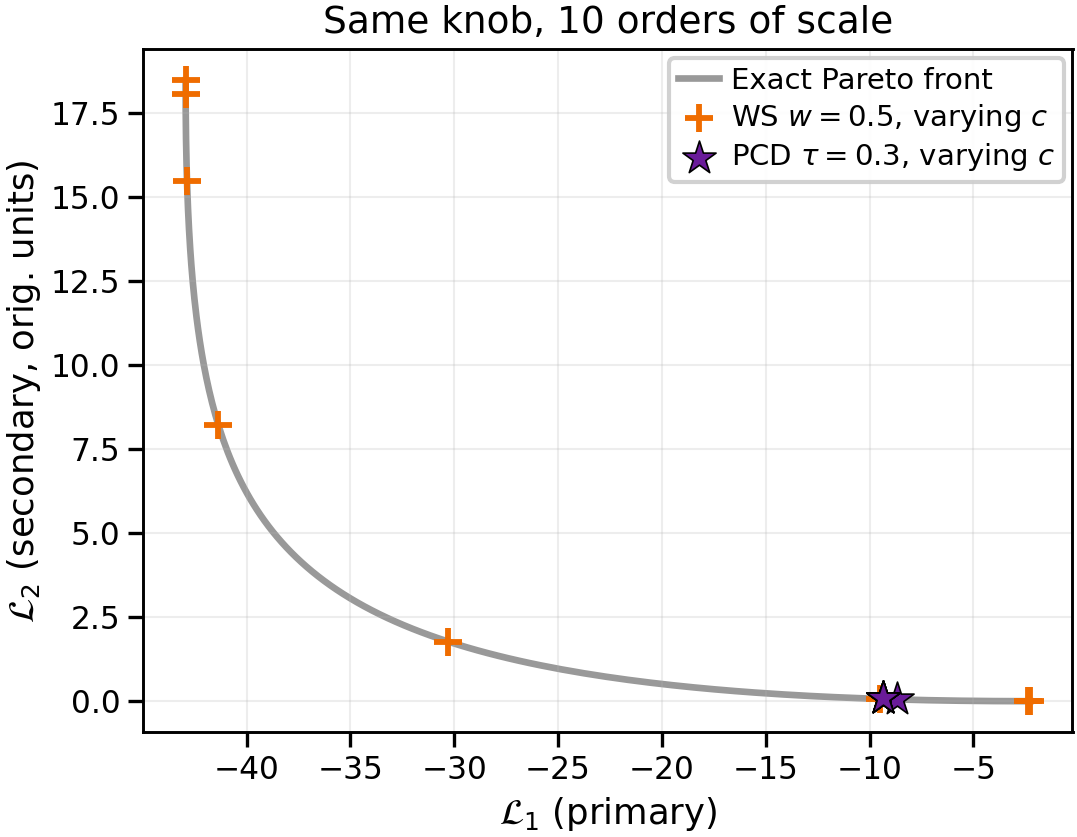}\caption{Operating points on the front}\end{subfigure}\hfill
  \begin{subfigure}{0.46\textwidth}\includegraphics[width=\linewidth]{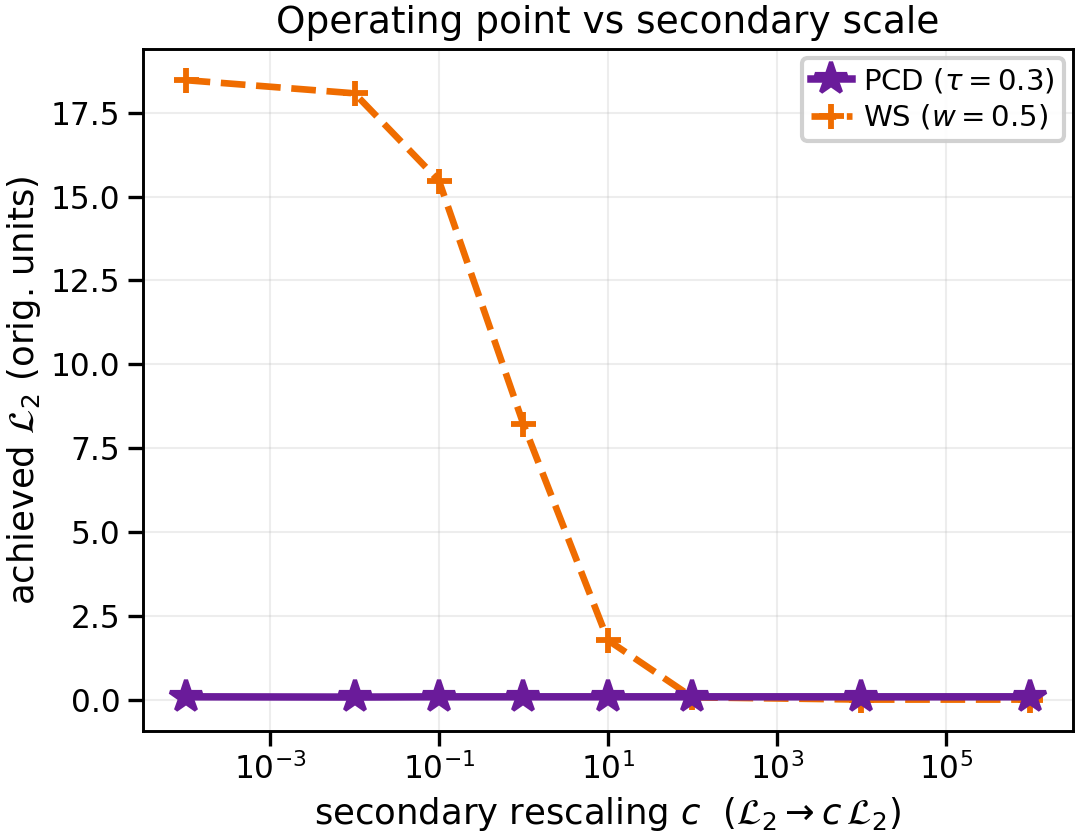}\caption{Achieved secondary vs.\ scale}\end{subfigure}
  \caption{\textbf{Scale invariance of $\tau$.} Rescaling the secondary by $c\in[10^{-4},10^{6}]$ at
  fixed knobs. (a) PCD ($\tau=0.3$) stays at one point on the front (spread $0.0075$) while WS
  ($w=0.5$) is dragged across it. (b) The achieved $\mathcal{L}_2$ in original units is flat for PCD and
  sweeps the full range for WS. PCD's knob has a scale-free meaning (\Cref{thm:scale-inv}), while WS's weight does not
  (\Cref{corr:ws-not-scale-invariant}).}
  \label{fig:syn-scale}
\end{figure}

\subsubsection{Feasibility and active set with more objectives}
\label{sec:syn-feasibility}

The one phenomenon specific to $K>2$ is that the secondary-progress polyhedron $\mathcal{F}_\tau(\theta)$ can in principle be empty (Sec.~\ref{sec:pcd-qp}), which we probe directly. Three quantities certify that the $K$-objective QP is well-posed: whether a feasible update exists at all (Fig.~\ref{fig:syn-feasibility}a), how many secondary constraints the solution enforces (Fig.~\ref{fig:syn-feasibility}b), and whether enforcement is spread across all secondaries or concentrated on a few (Fig.~\ref{fig:syn-feasibility}c). A constraint is \emph{binding} (active) when the solution meets it with equality, $\tilde g_j^{\top}\tilde d^{*}=\tau\lVert\tilde g_j\rVert^2$ (equivalently $\mu_j>0$), so that secondary actively shapes the update, otherwise it is slack ($\mu_j=0$) and drops out of $\tilde d^{*}=\tilde g_1+\sum_{j\ge2}\mu_j\tilde g_j$. For $K\in\{2,\dots,8\}$ we draw random normalized secondary-gradient configurations and evaluate the Farkas feasibility margin of App.~\ref{app:feasibility} (positive iff $\mathcal{F}_\tau\neq\emptyset$). Fig.~\ref{fig:syn-feasibility}(a) shows the margin shrinking with $K$. More simultaneous directional constraints leave less slack, but remain strictly positive throughout, with $0$ infeasible events in $1785$ checks, empirically consistent with infeasibility being a Lebesgue measure-zero event in the high-dimensional regime (\Cref{cor:infeas}(iii)). Figs.~\ref{fig:syn-feasibility}(b) and (c) characterize the solution at $K=6$: the active set grows monotonically with $\tau$, from $2.54$ active constraints at $\tau=0$ to $4.96$ of $5$ (since we have one primary and 5 secondaries for $K = 6$) as $\tau\to1$. The receding half-spaces tighten the polyhedron (Sec.~\ref{sec:pcd-geom}), and every secondary binds with high frequency once $\tau\gtrsim0.05$. Fig.~\ref{fig:syn-feasibility}(c) shows this binding distributed roughly uniformly across the five secondaries rather than concentrated on a subset: each is a genuinely active constraint that the projection meets at equality, not one incidentally over-satisfied while a few dominate, so the QP exercises all $K-1$ constraints rather than collapsing to a low-effective-$K$ problem. The $K$-objective QP is thus well-posed in practice and behaves as the geometry of Sec.~\ref{sec:pcd} predicts.

\begin{figure}[t]
  \centering
  \begin{subfigure}{0.325\textwidth}\includegraphics[width=\linewidth]{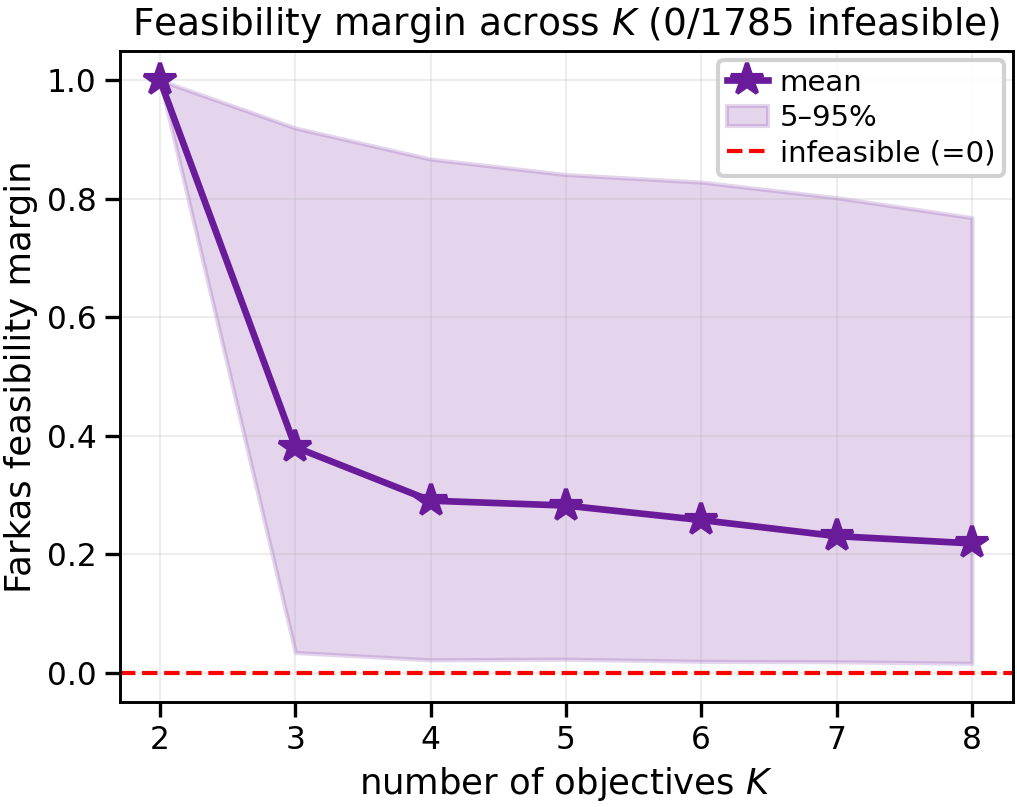}\caption{Feasibility margin vs.\ $K$}\end{subfigure}\hfill
  \begin{subfigure}{0.325\textwidth}\includegraphics[width=\linewidth]{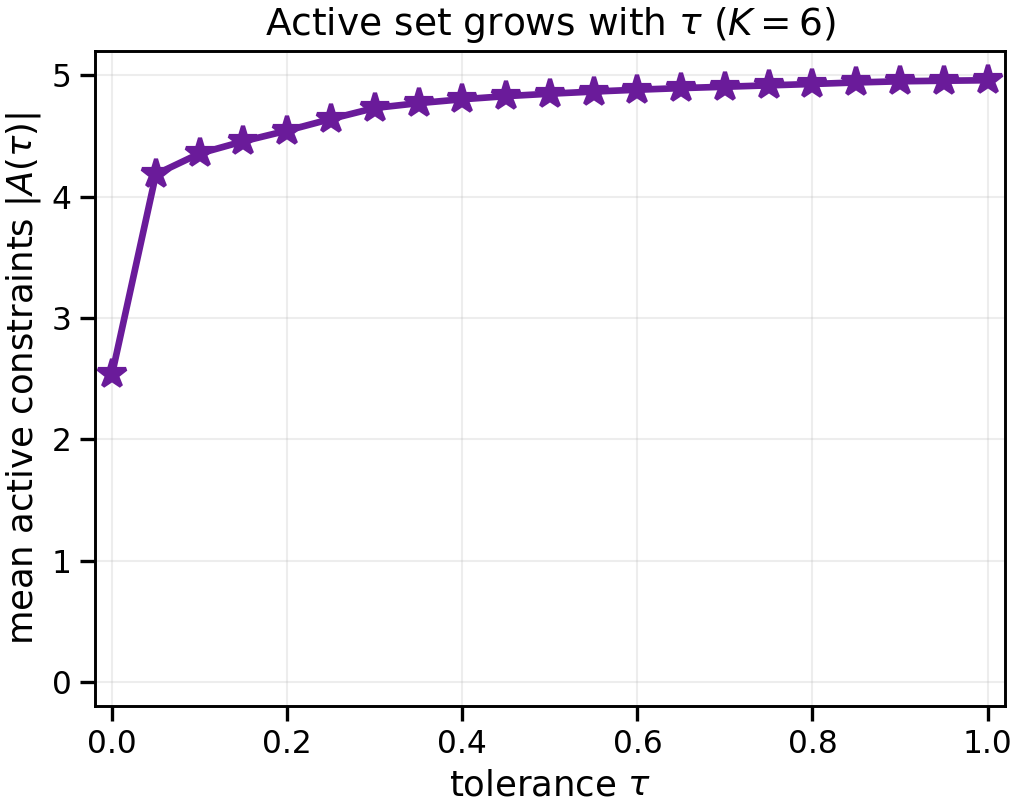}\caption{Active set vs.\ $\tau$ ($K{=}6$)}\end{subfigure}\hfill
  \begin{subfigure}{0.325\textwidth}\includegraphics[width=\linewidth]{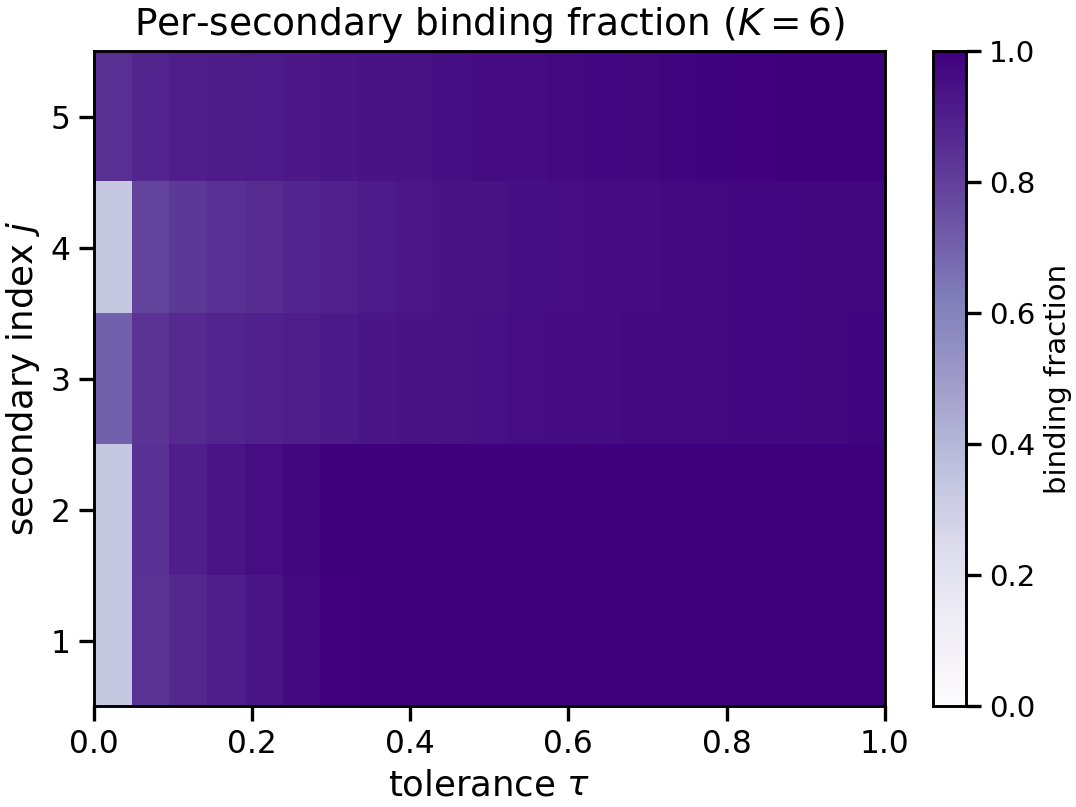}\caption{Per-secondary binding ($K{=}6$)}\end{subfigure}
  \caption{\textbf{Feasibility and active-set behavior at $K>2$.} (a) The Farkas feasibility margin
  decreases with $K$ but stays strictly positive; $0$ of $1785$ configurations were infeasible
  (Cor.~B.3(iii)). (b) The active set grows monotonically with $\tau$ ($2.54\!\to\!4.96$ of $5$). (c)
  Each secondary binds with high frequency for $\tau\gtrsim0.05$.}
  \label{fig:syn-feasibility}
\end{figure}

\subsubsection{Conflict equilibria and CMS escape as the number of objectives grows}
\label{sec:syn-conflict}

The final experiment isolates the property that separates PCD's fixed-point set from that of symmetric methods (\Cref{thm:cms}) and measures how that separation scales with $K$. We construct a \emph{non-convex} primary with a double-well along each of $K-1$ random orthonormal directions $u_j$ (plus a convex bowl in the orthogonal complement), together with $K-1$ secondaries, the $j$-th pulling $\langle u_j,\theta\rangle\to+1$. The secondaries are jointly satisfiable, so a composite multi-stationary point (at which every objective is individually stationary) exists at
$\theta^{\star}=\sum_j u_j$. This is what makes the problem an \emph{escape} problem rather than a trade-off. All methods are initialized in a wrong basin ($\langle u_j,\theta\rangle=-0.6$ for every $j$), where the canceled primary gradient is largest. Concretely, writing $s_j=\langle u_j,\theta\rangle$, the primary is
\[
  \mathcal{L}_1(\theta)=\sum_{j=1}^{K-1}\bigl(s_j^2-1\bigr)^2+\tfrac12\lVert P_\perp\theta\rVert^2,
\]
where $P_\perp$ projects onto the orthogonal complement of $\operatorname{span}\{u_j\}$. Each well $W(s)=(s^2-1)^2$ has minima at $s=\pm1$ and a barrier at $s=0$, and the $j$-th secondary pulls $s_j\to+1$, so the joint optimum $\theta^{\star}=\sum_j u_j$ lies across the barrier from the initialization. At the shared start $s_j=-0.6$ the primary-gradient component along each $u_j$ has magnitude $\lvert W'(-0.6)\rvert=\lvert 4(-0.6)\bigl((-0.6)^2-1\bigr)\rvert=1.536$, near the steepest point of the barrier; since the $u_j$ are orthonormal, these $K-1$ components add in quadrature, giving $\lVert\nabla\mathcal{L}_1\rVert=1.536\sqrt{K-1}$ (the dashed line in Fig.~\ref{fig:syn-conflict}b). We sweep $K\in\{2,\dots,7\}$ over $5$ seeds.

\textbf{Combination methods are pinned at the equilibrium.} A conflict equilibrium is a point at which
$0\in\operatorname{conv}\{g_1,\dots,g_K\}$ while no objective is individually stationary. This halting is not specific to our setting: any method whose update lies in $\operatorname{conv}\{g_i\}$ (MGDA exactly, and CAGrad and WS up to their trust region and weighting) returns $0$ precisely when $0\in\operatorname{conv}\{g_i\}$, which is the defining condition of a conflict equilibrium. MGDA returns the minimum-norm element of $\operatorname{conv}\{g_i\}$, CAGrad anchors on their average within a trust region, and WS forms a fixed combination. What \emph{is} specific to our construction is the rate at which the uncanceled primary gradient grows with $K$: Fig.~\ref{fig:syn-conflict}(a) and (b) shows MGDA and CAGrad halting exactly at the equilibrium, the canceled primary gradient tracking the $1.536\sqrt{K-1}$ law derived above (dashed line) across the entire range. 

The primary loss they leave unconverged grows linearly in $K$ (Fig.~\ref{fig:syn-conflict}(c)). WS is omitted from the sweep because its behavior is weight-dependent and does not reduce to a single curve in $K$. For any primary-respecting weight it traps with the same combination-method geometry.

\textbf{PCD reaches CMS for every $K$.} PCD's update is $\tilde g_1+\sum_{j\ge2}\mu_j\tilde g_j$ with the primary coefficient pinned at one and $\mu_j\ge0$ (\Cref{thm:kkt}) it is structurally \emph{not} a convex combination of the gradients, and \Cref{thm:cms} guarantees it vanishes only when every gradient does. Across all $K$, PCD crosses every barrier and converges to the CMS point for every $\tau >0$: its primary gradient at convergence is at machine precision ($\sim\!10^{-15}$) and its primary loss is $\sim\!10^{-29}$, against the unbounded growth of the combination methods. The learning curve at $K=5$ (Fig.~\ref{fig:syn-conflict}(a)) makes the distinction visible. PCD drives the primary gradient to zero within a few hundred steps while MGDA and CAGrad stay pinned at $3.07$.

\begin{figure}[h]
  \centering
  \begin{subfigure}{0.325\textwidth}\includegraphics[width=\linewidth]{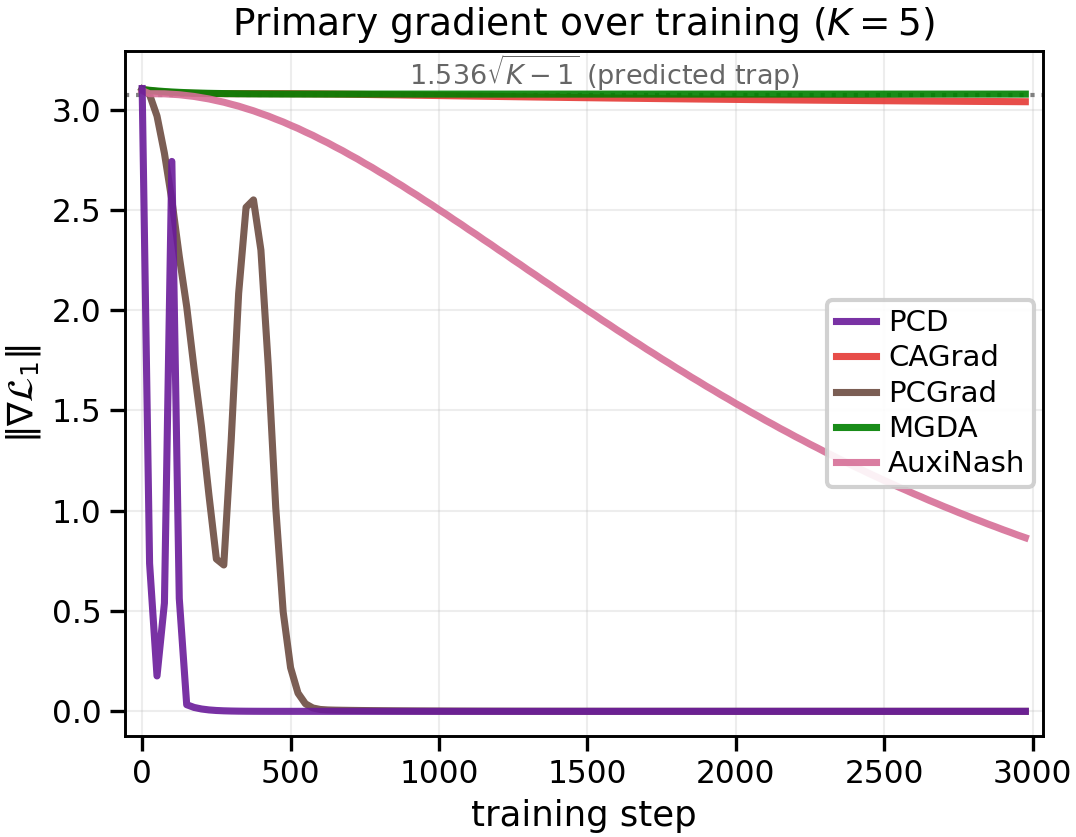}\caption{$\|\nabla\mathcal{L}_1\|$ over training ($K{=}5$)}\end{subfigure}\hfill
  \begin{subfigure}{0.325\textwidth}\includegraphics[width=\linewidth]{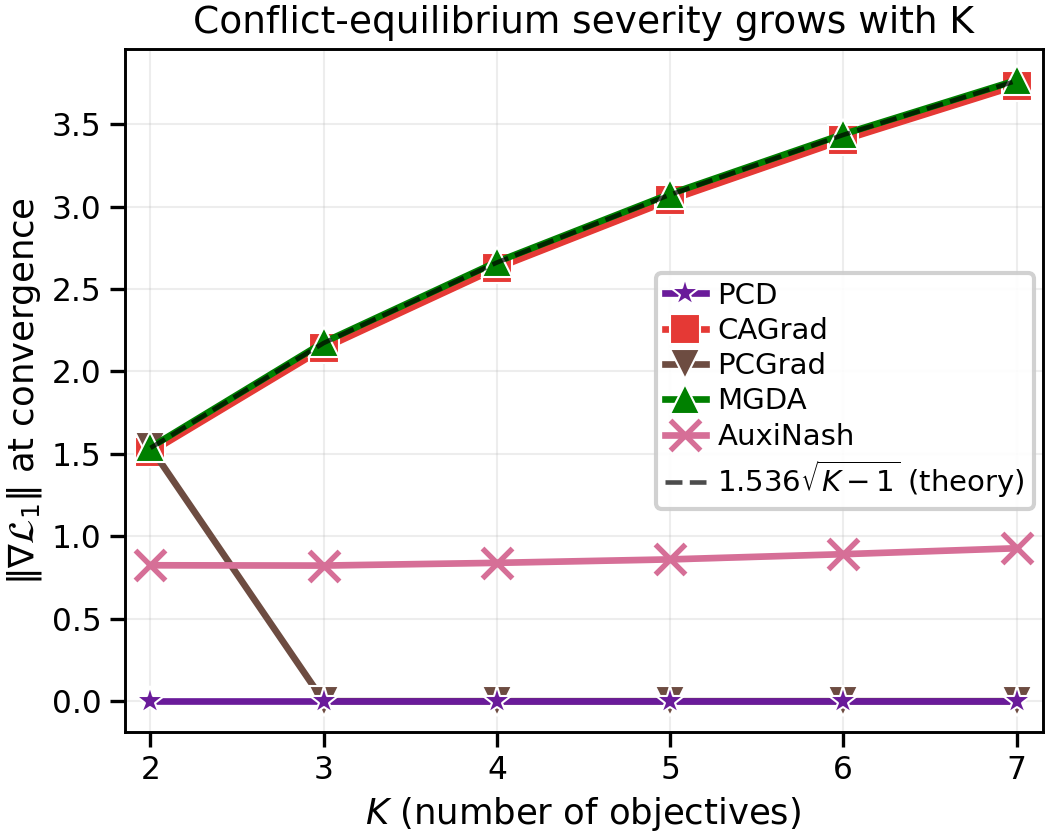}\caption{$\|\nabla\mathcal{L}_1\|$ at convergence vs.\ $K$}\end{subfigure}\hfill
  \begin{subfigure}{0.325\textwidth}\includegraphics[width=\linewidth]{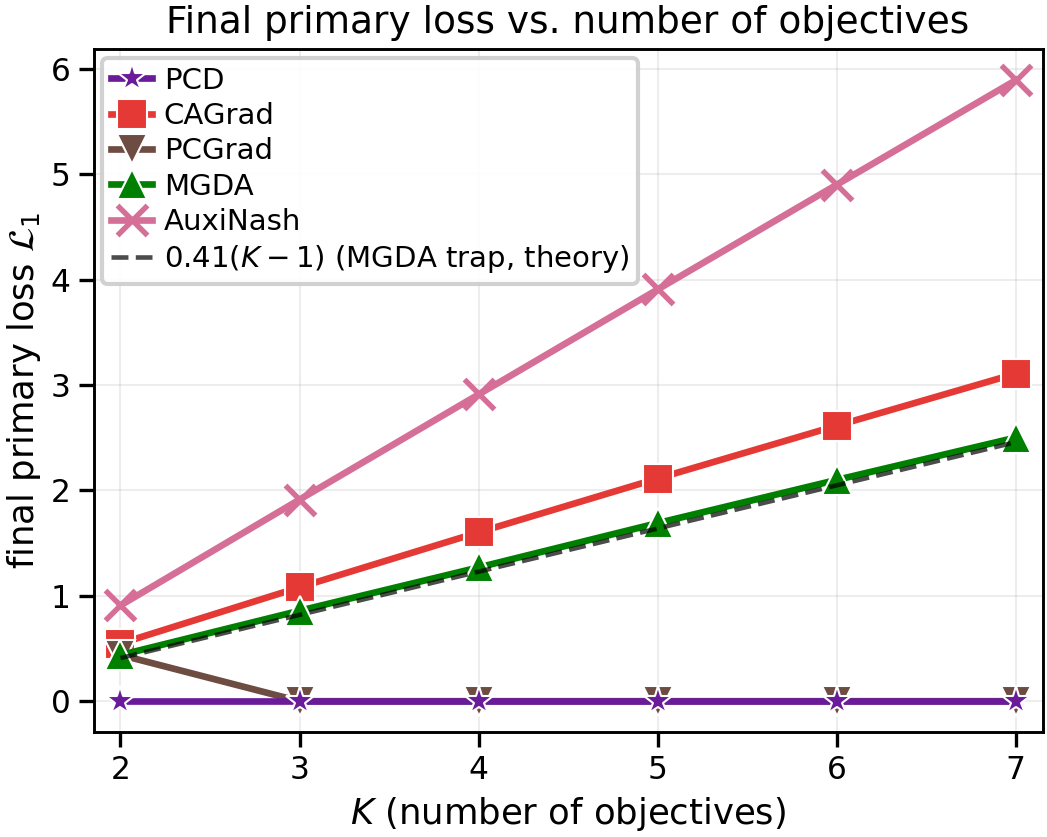}\caption{Primary loss left vs.\ $K$}\end{subfigure}
  \caption{\textbf{Conflict-equilibrium escape as $K$ grows} ($5$ seeds). (a) At $K{=}5$, PCD drives the
  primary gradient to zero while MGDA/CAGrad remain pinned at $1.536\sqrt{K-1}$, PCGrad crosses after an initial stall and AuxiNash drifts slowly. (b) The canceled primary gradient of the
  combination methods (MGDA, CAGrad) tracks $1.536\sqrt{K-1}$ (dashed) across all $K$, while PCD sits at zero. (c) The unconverged primary loss grows linearly in $K$ for the trapped methods. PCGrad
  ($K\ge3$) and PCD reach CMS; AuxiNash leaves the most loss. See text for the combination-vs-surgery
  distinction and the separability caveat.}
  \label{fig:syn-conflict}
\end{figure}

Two qualifications accompany this result. First, PCGrad and AuxiNash are not combination methods and are not bound to the equilibrium: PCGrad traps at $K=2$ but, for $K\ge3$, its pairwise gradient projection lets the orthogonal single-direction secondaries pull through, so it too reaches CMS in this construction. AuxiNash, which learns to down-weight the conflicting auxiliaries, neither sits at the equilibrium nor reaches CMS but drifts into the high-loss barrier region, leaving the largest primary loss of any method. Second, the construction is separable across the $u_j$. This is what makes the $\sqrt{K-1}$ law exactly derivable, and we present it as such. The conflict-equilibrium structure nonetheless requires all $K$ gradients to balance with equal weight. The $\sqrt{K-1}$ law thus
characterizes the combination methods it is derived for, and PCGrad's escape is specific to this clean separable geometry. In the non-separable compression landscapes of Sec.~\ref{sec:structured-pruning}, PCGrad is dominated at every compression target. The conclusion is the one the theory predicts: PCD's only fixed point is CMS, so it is the only method that reaches the joint optimum \emph{by construction} rather than incidentally, and it does so uniformly as the number of competing objectives grows.

\section{Conclusion}

Multi-objective optimization in deep learning has long been limited in the applications by an artificial symmetry. By treating primary tasks and secondary constraints as equals, standard MOO methods routinely trap optimization trajectories in conflict equilibria. \emph{Priority-Constrained Descent} (PCD) dismantles this symmetry. By anchoring updates strictly to the primary gradient and admitting only the minimal Euclidean distortion necessary to maintain secondary progress, PCD guarantees that constraints are satisfied without sacrificing the primary deliverable. Governed by a single, interpretable scalar $\tau$, the framework is scale-invariant, admits exact closed-form solutions for practical objective counts, and vanishes only at composite multi-stationary points. As demonstrated across neural network compression and synthetic experiments, PCD provides exactly what hierarchical pipelines demand: a stable, controllable trade-off curve that consistently dominates symmetric baselines.

The theoretical guarantees of PCD rely on secondary objectives being proper, convex, and lower semi-continuous. Consequently, inherently non-convex constraints such as adversarial losses or hard combinatorial sparsity targets fall outside our formal geometric proofs, which apply locally to the normalized first-order step. Furthermore, while the composite multi-stationarity theorem characterizes the exact fixed-point structure of the deterministic update, the underlying quadratic program could theoretically become infeasible for $K > 2$ if the secondary constraints are strictly and mutually incompatible. Although demonstrated in Sec.~\ref{sec:synthetic} to be improbable, we note this as a possible limitation regardless. While our synthetic experiments scale up to $K = 8$ to rigorously stress-test the framework against traditional MOO baselines, we note that typical asymmetric optimization applications rarely necessitate such a high collection of distinct objectives in practice.

PCD establishes a foundation for hierarchical optimization, opening several immediate avenues for extension. The most direct is the development of \emph{adaptive $\tau$ schedules}, which would dynamically anneal constraint strength based on the training trajectory or current feasibility (as outlined in App.~\ref{app:feasibility}). Beyond the strict primary--secondary dichotomy, extending the formulation to \emph{multi-level priority hierarchies} where an arbitrary number of objectives are strictly ordered with distinct, cascading tolerances, offers a natural mathematical next step. More broadly, the per-objective tolerances of Remark~\ref{rem:per-obj-tau} trace a $(K-1)$-dimensional operating manifold within the tolerance hypercube $[0,1]^{K-1}$, of which scalar $\tau$ is the diagonal slice and cascading hierarchies an ordered sub-family. The open challenge is recovering this expressiveness by setting the $\tau_j$ adaptively or via a learned criterion, rather than by exponential grid search over the cube, so that PCD's single-knob interpretability is extended rather than spent. On the application side, deploying PCD under strict, hardware-aware constraints, or adapting it for structured generative modeling where primary output fidelity must survive heavy structural regularizers, represent highly promising frontiers for the framework. 

\clearpage
\subsubsection*{Broader Impact Statement}
This work contributes a general-purpose optimization framework rather than a deployable system, so its societal impact is mediated by the applications it enables.  The most immediate downstream effect we anticipate is on inference efficiency: PCD drives parameter groups associated with a secondary group-lasso regularizer to exact zero (as demonstrated in our experiments), which can reduce the compute, memory footprint, and energy cost of the resulting compressed networks.  Analogous benefits arise in knowledge-distillation pipelines and continual-learning settings, where treating a primary task loss asymmetrically against auxiliary objectives reflects an auditable design choice rather than a hand-tuned scalar weight buried inside a composite loss.

As a generic optimization procedure, PCD inherits whatever externalities arise from the systems it trains. It does not introduce capabilities that meaningfully shift the risk profile of deep learning relative to standard gradient methods or existing multi-objective baselines. We see no specific dual-use concern raised by the framework itself, but we encourage practitioners deploying it in safety-, fairness-, or privacy-sensitive contexts to evaluate the resulting models against criteria appropriate to that domain rather than relying on convergence guarantees alone.

\subsubsection*{Author Contributions}
D. Varam and M. I. AlHajri contributed equally to this work and share joint first authorship. D. Varam was responsible for software development, formal analysis, investigation, data curation, visualization, and original draft preparation. M. I. AlHajri contributed to conceptualization, resources, and project administration. Methodology, validation, review and editing of the manuscript were performed jointly by both authors.

\subsubsection*{Acknowledgments}
This research was supported in part by the High-Performance Computing (HPC) environment at the American University of Sharjah. The authors acknowledge the AUS-HPC team for providing the computational infrastructure and technical support.

\bibliography{main}
\bibliographystyle{tmlr}

\clearpage
\appendix

\section{Roadmap of results}
\label{app:roadmap}

Fig.~\ref{fig:theory-map} provides a clickable map of the paper's theoretical structure: every node is a hyperlink to the corresponding statement, so the figure doubles as a table of contents for the technical material below.  

\begin{figure*}[h]
\centering
\definecolor{cDef}{RGB}{52,103,152}
\definecolor{cThm}{RGB}{162,49,49}
\definecolor{cProp}{RGB}{196,105,46}
\definecolor{cCor}{RGB}{46,115,68}
\definecolor{cLem}{RGB}{106,55,140}
\definecolor{cAlg}{RGB}{165,130,30}

\resizebox{\textwidth}{!}{%
\begin{tikzpicture}[
  font=\sffamily,
  x=1.0cm, y=1.0cm,
  thmnode/.style={rectangle, draw=cThm, fill=cThm!8, rounded corners=2pt,
                  minimum width=2.3cm, minimum height=0.95cm, text width=2.15cm,
                  align=center, font=\scriptsize, line width=0.7pt, inner sep=2pt},
  thmnodebig/.style={thmnode, minimum width=2.6cm, text width=2.45cm,
                  minimum height=1.1cm, line width=1.1pt, font=\scriptsize\bfseries},
  propnode/.style={thmnode, draw=cProp, fill=cProp!8},
  cornode/.style={thmnode, draw=cCor, fill=cCor!8},
  lemnode/.style={thmnode, draw=cLem, fill=cLem!8},
  defnode/.style={thmnode, draw=cDef, fill=cDef!8},
  algnode/.style={thmnode, draw=cAlg, fill=cAlg!15, minimum width=3.4cm,
                  text width=3.2cm, line width=1.0pt, font=\scriptsize\bfseries,
                  minimum height=1.05cm},
  proves/.style={-{Stealth[length=1.6mm,width=1.4mm]}, line width=0.55pt, color=black!65},
  contrast/.style={-{Stealth[length=1.6mm,width=1.4mm]}, line width=0.55pt,
                  color=black!55, dotted, dash pattern=on 0.7pt off 1.6pt},
  group/.style={rectangle, draw=black!18, dashed, rounded corners=4pt,
                inner sep=7pt, line width=0.4pt},
]

\node[defnode] (ass)        at ( 0,  14.0) {\hyperref[ass:reg]{\textbf{Asm~\ref*{ass:reg}}}\\Objective regularity};
\node[defnode] (def-pareto) at ( 3,  14.0) {\hyperref[def:pareto]{\textbf{Def~\ref*{def:pareto}}}\\Pareto stationarity};
\node[defnode] (def-cms)    at ( 6,  14.0) {\hyperref[def:cms]{\textbf{Def~\ref*{def:cms}}}\\CMS};
\node[defnode] (def-qp)     at (12,  14.0) {\hyperref[def:pcd-qp]{\textbf{Def~\ref*{def:pcd-qp}}}\\PCD QP};
\node[defnode] (def-ema)    at (18,  14.0) {\hyperref[def:ema]{\textbf{Def~\ref*{def:ema}}}\\EMA normalization};

\node[thmnode]    (thm-feas)  at ( 3,  11.5) {\hyperref[thm:feas]{\textbf{Thm~\ref*{thm:feas}}}\\Feasibility (Farkas)};
\node[thmnodebig] (thm-kkt)   at (12,  11.5) {\hyperref[thm:kkt]{Thm~\ref*{thm:kkt}}\\Projection $+$ KKT};
\node[thmnode]    (thm-scale) at (18,  11.5) {\hyperref[thm:scale-inv]{\textbf{Thm~\ref*{thm:scale-inv}}}\\Scale invariance};

\node[cornode] (cor-infeas)   at ( 0,   9.0) {\hyperref[cor:infeas]{\textbf{Cor~\ref*{cor:infeas}}}\\When infeasibility};
\node[propnode](prop-Kfeas)   at ( 3,   9.0) {\hyperref[prop:Kfeas]{\textbf{Prop~\ref*{prop:Kfeas}}}\\Angle suff.\ cond.};
\node[lemnode] (lem-proj)     at ( 6,   9.0) {\hyperref[lem:halfspace]{\textbf{Lem~\ref*{lem:halfspace} / Cor~\ref*{cor:k2-halfspace}}}\\Half-space projection};
\node[cornode] (cor-inactive) at ( 9,   9.0) {\hyperref[cor:inactive]{\textbf{Cor~\ref*{cor:inactive}}}\\Inactive regime};
\node[cornode] (cor-k2)       at (12,   9.0) {\hyperref[cor:k2]{\textbf{Cor~\ref*{cor:k2}}}\\$K{=}2$ closed form};
\node[cornode] (cor-k3)       at (15,   9.0) {\hyperref[cor:k3]{\textbf{Cor~\ref*{cor:k3}}}\\$K{=}3$ closed form};
\node[cornode] (cor-ws)       at (18,   9.0) {\hyperref[corr:ws-not-scale-invariant]{\textbf{Cor~\ref*{corr:ws-not-scale-invariant}}}\\WS \emph{not} scale-inv.};

\node[propnode](prop-prog)    at ( 9,   6.0) {\hyperref[prop:progress]{\textbf{Prop~\ref*{prop:progress}}}\\Per-step sec.\ progress};
\node[propnode](prop-prim)    at (12,   6.0) {\hyperref[prop:primary-preservation]{\textbf{Prop~\ref*{prop:primary-preservation}}}\\Primary preserv.\ ($K{=}2$)};
\node[lemnode] (lem-bdry)     at (15,   6.0) {\hyperref[lem:boundary]{\textbf{Lem~\ref*{lem:boundary}}}\\Boundary cases};
\node[propnode](prop-mgda)    at (18,   6.0) {\hyperref[prop:mgda]{\textbf{Prop~\ref*{prop:mgda}}}\\MGDA failure mode};

\node[thmnodebig] (thm-cms)   at (12,   3.5) {\hyperref[thm:cms]{Thm~\ref*{thm:cms}}\\CMS endpoint ($\tau{>}0$)};
\node[lemnode]    (lem-tau0)  at (16.2, 3.5) {\hyperref[lem:tau-zero]{\textbf{Lem~\ref*{lem:tau-zero}}}\\$\tau{=}0\Rightarrow$ Pareto-stat.};

\node[algnode] (alg) at (12,  1.0) {\hyperref[alg:pcd]{Algorithm~\ref*{alg:pcd}}\\PCD step (full procedure)};

\draw[proves] (def-qp.south) -- (thm-kkt.north);
\draw[proves] (def-ema.south) -- (thm-scale.north);
\draw[proves] (def-qp.south) .. controls (12, 12.5) and (3, 12.5) .. (thm-feas.north);

\draw[proves] (thm-feas.south) to[bend right=20] (cor-infeas.north east);
\draw[proves] (thm-feas.south) -- (prop-Kfeas.north);

\draw[proves] (thm-scale.south) -- (cor-ws.north);

\draw[proves] (thm-kkt.south west) to[bend right=10] (lem-proj.north east);
\draw[proves] (thm-kkt.south west) -- (cor-inactive.north east);
\draw[proves] (thm-kkt.south) -- (cor-k2.north);
\draw[proves] (thm-kkt.south east) -- (cor-k3.north west);

\draw[proves] (thm-kkt.south west) -- (prop-prog.north east);
\draw[proves] (thm-kkt.south east) -- (lem-bdry.north west);

\draw[proves] (cor-k2.south) -- (prop-prim.north);

\draw[proves] (prop-prog.south east) -- (thm-cms.north west);

\draw[proves] (thm-cms.south) -- (alg.north);

\draw[contrast] (prop-mgda.south west) to[out=-150, in=20] (thm-cms.east);

\begin{pgfonlayer}{background}
  \node[group, fit=(ass)(def-ema)] {};
  \node[group, fit=(thm-feas)(thm-scale)] {};
  \node[group, fit=(cor-infeas)(cor-ws)] {};
  \node[group, fit=(prop-prog)(prop-mgda)] {};
  \node[group, fit=(thm-cms)(lem-tau0)] {};
\end{pgfonlayer}

\begin{scope}[shift={(0,-1.9)}]
  \node[font=\scriptsize\bfseries, anchor=west] at (0.0, 0.7) {Node types};
  \node[font=\scriptsize\bfseries, anchor=west] at (12.5, 0.7) {Edges};

  \node[defnode,  minimum width=1.55cm, minimum height=0.45cm, text width=1.4cm,
        font=\tiny, inner sep=1pt] at (0.95, 0.0) {Definition};
  \node[thmnode,  minimum width=1.55cm, minimum height=0.45cm, text width=1.4cm,
        font=\tiny, inner sep=1pt] at (2.70, 0.0) {Theorem};
  \node[propnode, minimum width=1.55cm, minimum height=0.45cm, text width=1.4cm,
        font=\tiny, inner sep=1pt] at (4.45, 0.0) {Proposition};
  \node[cornode,  minimum width=1.55cm, minimum height=0.45cm, text width=1.4cm,
        font=\tiny, inner sep=1pt] at (6.20, 0.0) {Corollary};
  \node[lemnode,  minimum width=1.55cm, minimum height=0.45cm, text width=1.4cm,
        font=\tiny, inner sep=1pt] at (7.95, 0.0) {Lemma};
  \node[algnode,  minimum width=1.55cm, minimum height=0.45cm, text width=1.4cm,
        font=\tiny, inner sep=1pt] at (9.70, 0.0) {Algorithm};

  \draw[proves]   (12.5, 0.15) -- (13.6, 0.15);
  \node[font=\tiny, anchor=west] at (13.7, 0.15) {proves / uses};
  \draw[contrast] (12.5,-0.15) -- (13.6,-0.15);
  \node[font=\tiny, anchor=west] at (13.7,-0.15) {contrasts with};

\end{scope}

\end{tikzpicture}%
}
\caption{Theory navigation map for PCD.  Foundations (definitions and the regularity assumption) sit at the top. The three core theorems (\hyperref[thm:feas]{Thm~\ref*{thm:feas}}: feasibility via Farkas, \hyperref[thm:kkt]{Thm~\ref*{thm:kkt}}: projection + KKT characterization, \hyperref[thm:scale-inv]{Thm~\ref*{thm:scale-inv}}: scale invariance) occupy the second row.  Their direct corollaries, the per-step guarantees, the asymptotic endpoint (\hyperref[thm:cms]{Thm~\ref*{thm:cms}}), and the $\tau{=}0$ corner case (\hyperref[lem:tau-zero]{Lem~\ref*{lem:tau-zero}}) form the lower tiers, culminating in \hyperref[alg:pcd]{Algorithm~\ref*{alg:pcd}}.  The dotted edge from \hyperref[prop:mgda]{Prop~\ref*{prop:mgda}} marks a contrast: under matched magnitudes MGDA's mixing coefficient collapses, whereas PCD avoids this failure.  Every node is a hyperlink to the corresponding statement in the paper.}
\label{fig:theory-map}
\end{figure*}

\section{Full Details and Proofs for Section~\ref{sec:pcd}}
\label{app:pcd-proofs}

\begin{remark}[Per-objective tolerances]\label{rem:per-obj-tau}
The main text and the appendix use a single scalar tolerance $\tau \in [0,1]$ that applies uniformly to every secondary objective. The framework admits a trivial generalization to per-objective $\tau_j \in [0,1]$, in which the constraint of Eq.~\eqref{eq:pcd-qp} becomes $\gt_j^\T \dt \geq \tau_j \norm{\gt_j}^2$. All proofs and the algorithm carry through verbatim under this substitution. We work with scalar $\tau$ throughout because all experiments and ablations in this paper use a single shared tolerance; per-objective $\tau_j$ adds expressive flexibility without changing the underlying analysis.
\end{remark}

\subsection{Feasibility Analysis and Adaptive Tolerance Scaling}\label{app:feasibility}

\paragraph{$K=2$ feasibility.} For $K=2$ with $\gt_2 \neq \vzero$, the point $\dt = \tau \gt_2$ satisfies $\gt_2^\T \dt = \tau \norm{\gt_2}^2$, so $F(\tau) = H_2 \neq \emptyset$ for all $\tau \in [0,1]$.

\paragraph{General $K$ via Farkas' lemma.} Define $\widetilde G \in \R^{(K-1)\times n}$ with rows $\gt_2^\T,\ldots,\gt_K^\T$ and $\vb \in \R^{K-1}$ with $b_j = \tau \norm{\gt_j}^2$. By Farkas' lemma, $F(\tau) = \emptyset$ iff $\exists\, \vy \geq \vzero$, $\vy \neq \vzero$, with $\widetilde G^\T \vy = \vzero$ and $\vb^\T \vy > 0$.

\begin{theorem}[Feasibility characterization]\label{thm:feas}
With $\tau > 0$,
\[
F(\tau) = \emptyset \quad\Longleftrightarrow\quad \vzero \in \cone\{\gt_j : j\in[K]_{-1},\, \gt_j \neq \vzero\} - \{\vzero\}.
\]
\end{theorem}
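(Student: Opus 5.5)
The plan is to reduce the statement directly to the Farkas alternative recorded just above it. First I will fix the meaning of the right-hand side. I read $\vzero \in \cone\{\gt_j : \gt_j \neq \vzero\} - \{\vzero\}$ as saying that $\vzero$ has a \emph{nontrivial} conic representation: there are coefficients $y_j \geq 0$, indexed by $J_+ = \{j \in [K]_{-1} : \gt_j \neq \vzero\}$ and not all zero, with $\sum_{j\in J_+} y_j \gt_j = \vzero$. I will also note that a secondary with $\gt_j = \vzero$ contributes the constraint $0 \geq \tau\cdot 0$. That constraint is vacuous, so it can neither create nor destroy feasibility, and this is why the statement only involves the nonzero gradients.

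For ($\Rightarrow$) I take $F(\tau) = \emptyset$. Farkas' lemma, as stated above, gives $\vy \geq \vzero$, $\vy \neq \vzero$, with $\widetilde G^\T \vy = \sum_{j} y_j \gt_j = \vzero$ and $\vb^\T \vy = \tau \sum_j y_j \norm{\gt_j}^2 > 0$.
\begin{itemize}
\item The strict positivity forces $y_j > 0$ for at least one $j \in J_+$.
\item Discarding the indices with $\gt_j = \vzero$ does not change $\sum_j y_j \gt_j$.
\item So the restriction $(y_j)_{j\in J_+}$ is a nonzero nonnegative vector with $\sum_{j\in J_+} y_j \gt_j = \vzero$, which is exactly the right-hand side.
\end{itemize}

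For ($\Leftarrow$) I would avoid invoking Farkas and argue directly. Suppose $y_j \geq 0$ on $J_+$, not all zero, and $\sum_{J_+} y_j\gt_j = \vzero$. If some $\dt \in F(\tau)$ existed, I would take the inner product with $\dt$:
\[
0 = \dt^\T \sum_{j\in J_+} y_j \gt_j = \sum_{j\in J_+} y_j\, \gt_j^\T \dt \;\geq\; \tau \sum_{j\in J_+} y_j \norm{\gt_j}^2 > 0.
\]
The final strict inequality uses $\tau > 0$, $\norm{\gt_j} > 0$ on $J_+$, and the fact that some $y_j > 0$. This is a contradiction, so $F(\tau) = \emptyset$. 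Equivalently, one can extend $\vy$ by zeros and check the Farkas certificate conditions.

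The only real obstacle I expect is bookkeeping, not mathematics. Two points need care:
\begin{itemize}
\item I must use the correct Farkas variant for the system $\widetilde G \dt \geq \vb$ with free $\dt$; the certificate is $\vy \geq \vzero$, $\widetilde G^\T \vy = \vzero$, $\vb^\T \vy > 0$.
\item I must handle zero gradients, which is exactly where $\tau > 0$ is needed. It guarantees that a certificate must place weight on some nonzero $\gt_j$. It also excludes the trivial case in which all $\gt_j$ vanish, where $F$ is nonempty and the right-hand side fails.
\end{itemize}
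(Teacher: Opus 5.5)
Your proof is correct and takes the same route as the paper. Both arguments reduce infeasibility of $\widetilde G\dt \geq \vb$ to the Farkas certificate $\vy \geq \vzero$, $\widetilde G^\T\vy = \vzero$, $\vb^\T\vy > 0$, and then use $\tau > 0$ to show that $\vb^\T\vy > 0$ holds exactly when $\vy$ puts positive weight on some nonzero $\gt_j$. Your inner-product argument for ($\Leftarrow$) is just the elementary direction of Farkas' lemma written out. Your explicit handling of zero gradients, including the all-zero case, fills in bookkeeping that the paper's one-line proof leaves implicit.
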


\begin{proof}
$\widetilde G^\T \vy = \vzero$ is exactly $\sum_{j\geq 2} y_j \gt_j = \vzero$ (positive dependence). With $\tau > 0$ and $y_j \geq 0$, the condition $\vb^\T\vy = \sum_j y_j \tau \norm{\gt_j}^2 > 0$ is automatically satisfied whenever any $y_j > 0$ corresponds to $\gt_j \neq \vzero$.
\end{proof}

\begin{corollary}[When infeasibility occurs]\label{cor:infeas}
\begin{enumerate}[label=(\roman*),nosep,leftmargin=*]
  \item $K=2$: a single nonzero ray cannot positively combine to zero, so $F(\tau) \neq \emptyset$.
  \item $K=3$: $y_2 \gt_2 + y_3 \gt_3 = \vzero$ with $y_2, y_3 \geq 0$ not both zero requires $\gt_3 = -\lambda \gt_2$ ($\lambda > 0$), i.e.\ exact anti-parallelism.
  \item General $K$, $n\gg K$: $\vzero \in \cone\{\gt_2,\ldots,\gt_K\}$ defines a variety of positive codimension in $(\R^n)^{K-1}$. Infeasibility is Lebesgue measure-zero.
\end{enumerate}
\end{corollary}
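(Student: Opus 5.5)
The plan is to derive all three parts of Cor.~\ref{cor:infeas} from Theorem~\ref{thm:feas}. That theorem reduces infeasibility, for $\tau>0$, to the existence of a nontrivial nonnegative combination $\sum_{j\ge2} y_j \gt_j = \vzero$ in which some $y_j>0$ is attached to a nonzero $\gt_j$. Each part then asks when such a positive dependence can exist among the nonzero secondary gradients.

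For (i), with $K=2$ there is only $\gt_2\neq\vzero$. Any combination $y_2\gt_2=\vzero$ with $y_2\ge0$ forces $y_2=0$, so no positive dependence exists and $F(\tau)\neq\emptyset$. As a cross-check, this agrees with the explicit witness $\dt=\tau\gt_2$ given in the $K=2$ feasibility paragraph.

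For (ii), with $K=3$ I would first dispose of degenerate cases. If one of the gradients vanishes, only a single nonzero ray remains, and (i) applies. If both are nonzero, suppose $y_2\gt_2+y_3\gt_3=\vzero$ with $y_2,y_3\ge0$ not both zero. If exactly one coefficient were zero, the other term would have to vanish on its own, which contradicts nonzeroness. So both coefficients are positive, and $\gt_3=-(y_2/y_3)\gt_2$ with $\lambda=y_2/y_3>0$. Conversely, exact anti-parallelism gives the dependence with $y=(\lambda,1)$. Hence infeasibility is equivalent to exact anti-parallelism.

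For (iii), I would argue that the bad set $B=\{(\gt_2,\ldots,\gt_K): \vzero\in\cone\{\gt_j\}-\{\vzero\}\text{ nontrivially}\}$ is contained in the set of tuples that are linearly dependent. A positive dependence is in particular a linear dependence. When $n\ge K-1$, linear dependence of $K-1$ vectors in $\R^n$ means the $n\times(K-1)$ matrix $\widetilde G^\T$ has rank less than $K-1$. That is the common zero set of all $(K-1)\times(K-1)$ minors, a proper algebraic variety of codimension $n-K+2\ge1$, so it is a Lebesgue-null set in $(\R^n)^{K-1}$. Therefore $B$ is measure zero. Tuples where some $\gt_j=\vzero$ lie in a lower-dimensional linear subspace, which is also null.

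The main obstacle is making (iii) precise, since ``variety of positive codimension'' presumes $B$ itself is algebraic. I would avoid that by bounding $B$ by the rank-deficiency variety rather than characterizing it exactly. Nullity then follows from the standard fact that the zero set of a nonzero polynomial is Lebesgue-null, and it suffices to exhibit one minor that is not identically zero, such as the one at the identity configuration. I would also remark that the measure-zero claim concerns the gradients treated as free vectors. It is not a probabilistic statement about the gradients realized along training.
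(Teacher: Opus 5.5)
Your proposal is correct and follows the paper's route: all three parts are read off from the Farkas alternative of Theorem~\ref{thm:feas}. Your arguments for (i) and (ii) are the paper's one-line justifications, made more careful by setting aside zero gradients; without that, (ii) as literally stated fails, e.g.\ for $\gt_3=\vzero$ with $y=(0,1)$. For (iii) the paper only asserts that the cone condition ``defines a variety'', whereas the bad set is merely semialgebraic, and your containment of the positively dependent tuples in the rank-deficient determinantal locus (codimension $n-K+2$) is what makes the measure-zero claim rigorous. It also sharpens $n\gg K$ to $n\ge K-1$, which cannot be relaxed: for $n<K-1$, configurations that positively span $\R^n$ form a nonempty open, hence non-null, set of infeasible instances.
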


\subsection{KKT Proof and Active-Set Solver}\label{app:active-set}

\begin{proof}[Proof of Theorem~\ref{thm:kkt}]
The Lagrangian of Eq.~\eqref{eq:pcd-qp} is
\[
\mathcal{L}(\dt,\boldsymbol{\mu}) = \tfrac{1}{2}\norm{\dt - \gt_1}^2 - \sum_{j=2}^K \mu_j(\gt_j^\T\dt - \tau\norm{\gt_j}^2),\quad \mu_j \geq 0.
\]
Setting $\nabla_{\dt}\mathcal{L} = \vzero$ gives $\dt - \gt_1 - \sum_j \mu_j \gt_j = \vzero$, i.e.\ stationarity (i). Primal feasibility (ii), dual feasibility (iii), and complementary slackness (iv) are the standard KKT conditions for inequality-constrained convex QPs with affine constraints. For such problems KKT is necessary and sufficient at every primal optimum without further constraint qualification.
\end{proof}

\paragraph{Active-set linear system.} For active set $\mathcal{A}\subseteq [K]_{-1}$, let $G_\mathcal{A} \in \R^{|\mathcal{A}|\times|\mathcal{A}|}$ be the Gram matrix with $(G_\mathcal{A})_{jk} = \gt_j^\T\gt_k$, and $\vb_\mathcal{A} \in \R^{|\mathcal{A}|}$ with $(\vb_\mathcal{A})_j = \tau\norm{\gt_j}^2 - \gt_j^\T\gt_1$. Substituting $\dts = \gt_1 + \sum_{k\in\mathcal{A}}\mu_k^*\gt_k$ into the active equality constraints $\gt_j^\T\dts = \tau\norm{\gt_j}^2$, $j\in\mathcal{A}$:
\begin{equation}\label{eq:active-system}
G_\mathcal{A}\, \boldsymbol{\mu}_\mathcal{A} = \vb_\mathcal{A}.
\end{equation}
$G_\mathcal{A}$ is positive definite whenever $\{\gt_j\}_{j\in\mathcal{A}}$ are linearly independent (generic in $\R^n$ for $|\mathcal{A}|\leq n$).

\paragraph{Algorithm.} Standard primal active-set: initialize $\mathcal{A}=\emptyset$, $\dts = \gt_1$. Repeatedly add the most-violated constraint, solve Eq.~\eqref{eq:active-system}, drop any constraint with $\mu_j < 0$, and re-solve. The procedure terminates in at most $\binom{K-1}{|\mathcal{A}|}$ iterations. Per-step cost is $O(K^2 n)$, dominated by Gram-matrix construction. This matches MGDA and is negligible relative to the $O(Kn)$ cost of the $K$ backpropagations.

\subsection{\texorpdfstring{$K{=}2$ and $K{=}3$}{K=2 and K=3} Closed Forms}\label{app:k3-cf}

\begin{proof}[Proof of Cor.~\ref{cor:k2}]
For $K=2$ the active set is either $\emptyset$ (if $\gt_2^\T\gt_1 \geq \tau\norm{\gt_2}^2$, KKT gives $\dts=\gt_1$) or $\{2\}$ (otherwise). In the latter case, Eq.~\eqref{eq:active-system} reduces to the scalar equation $\norm{\gt_2}^2\mu_2^* = \tau\norm{\gt_2}^2 - \gt_2^\T\gt_1$, giving $\mu_2^* = \tau - \gt_2^\T\gt_1/\norm{\gt_2}^2$.
\end{proof}

\begin{corollary}[Closed form for $K=3$, both constraints active]\label{cor:k3}
For $K=3$ with $\mathcal{A}=\{2,3\}$, the $2\times 2$ Gram system gives
\begin{align*}
\mu_2^* &= \tfrac{\norm{\gt_3}^2(\tau\norm{\gt_2}^2 - \gt_2^\T\gt_1) - (\gt_2^\T\gt_3)(\tau\norm{\gt_3}^2 - \gt_3^\T\gt_1)}{\Delta},\\
\mu_3^* &= \tfrac{\norm{\gt_2}^2(\tau\norm{\gt_3}^2 - \gt_3^\T\gt_1) - (\gt_2^\T\gt_3)(\tau\norm{\gt_2}^2 - \gt_2^\T\gt_1)}{\Delta},
\end{align*}
with $\Delta = \norm{\gt_2}^2\norm{\gt_3}^2 - (\gt_2^\T\gt_3)^2 = \norm{\gt_2}^2\norm{\gt_3}^2 \sin^2\theta_{23}$. Uniquely solvable iff $\gt_2,\gt_3$ are not collinear.
\end{corollary}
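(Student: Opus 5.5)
Since $\mathcal{A}=\{2,3\}$ is the assumed active set, I will start from the KKT characterization of Theorem~\ref{thm:kkt} together with complementary slackness. Both constraints then hold with equality, and the stationarity identity takes the form $\dts=\gt_1+\mu_2^*\gt_2+\mu_3^*\gt_3$.

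First, I substitute this expression into the two active equalities $\gt_j^\T\dts=\tau\norm{\gt_j}^2$ for $j=2,3$. This reproduces the active-set system of Eq.~\eqref{eq:active-system} with
\[
G_\mathcal{A}=\begin{pmatrix}\norm{\gt_2}^2 & \gt_2^\T\gt_3\\ \gt_2^\T\gt_3 & \norm{\gt_3}^2\end{pmatrix},\qquad \vb_\mathcal{A}=\begin{pmatrix}\tau\norm{\gt_2}^2-\gt_2^\T\gt_1\\ \tau\norm{\gt_3}^2-\gt_3^\T\gt_1\end{pmatrix}.
\]

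Second, I solve this $2\times2$ system by Cramer's rule. The determinant is $\Delta=\norm{\gt_2}^2\norm{\gt_3}^2-(\gt_2^\T\gt_3)^2$. Replacing the first and second columns of $G_\mathcal{A}$ by $\vb_\mathcal{A}$ and expanding gives exactly the stated numerators for $\mu_2^*$ and $\mu_3^*$. For the trigonometric form of $\Delta$, I write $\gt_2^\T\gt_3=\norm{\gt_2}\norm{\gt_3}\cos\theta_{23}$, which yields $\Delta=\norm{\gt_2}^2\norm{\gt_3}^2\sin^2\theta_{23}$.

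Third, I prove the uniqueness claim. A $2\times2$ system has a unique solution iff its determinant is nonzero. By the Cauchy--Schwarz equality case, $\Delta=0$ iff $\gt_2$ and $\gt_3$ are linearly dependent, that is, collinear (this includes the case where one of them is zero). So the system is uniquely solvable iff $\gt_2,\gt_3$ are not collinear. For the direction that collinearity destroys uniqueness, note that a singular Gram matrix gives either no solution or a one-parameter family of solutions, so in neither case is there a unique $\boldsymbol{\mu}_\mathcal{A}$.

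The only step needing care is interpretation rather than computation. The corollary presupposes that $\{2,3\}$ is the correct active set, so the formulas give the QP solution only when the resulting $\mu_2^*,\mu_3^*\ge0$; otherwise the active-set method of App.~\ref{app:active-set} drops a constraint. I would add a remark stating this, and noting that in the collinear case the direction $\dts$ is still unique as a projection even though the multipliers are not.
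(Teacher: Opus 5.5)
Your proposal is correct and matches the paper's proof, which is simply Cramer's rule applied to the $2\times 2$ Gram system $G_\mathcal{A}\boldsymbol{\mu}_\mathcal{A}=\vb_\mathcal{A}$ of Eq.~\eqref{eq:active-system}. Your derivation of that system from KKT stationarity, the Cauchy--Schwarz argument for the uniqueness clause, and the caveat that the formulas give the QP solution only when $\mu_2^*,\mu_3^*\ge 0$ spell out what the paper leaves implicit; one small imprecision is that when $\gt_2=\gt_3=\vzero$ both $G_\mathcal{A}$ and $\vb_\mathcal{A}$ vanish, so the solution set is all of $\R^2$ rather than a line, though non-uniqueness still holds.
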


\begin{proof}
Cramer's rule applied to $G_\mathcal{A}\boldsymbol{\mu}_\mathcal{A} = \vb_\mathcal{A}$ with $G_\mathcal{A}=\begin{psmallmatrix}\norm{\gt_2}^2 & \gt_2^\T\gt_3 \\ \gt_2^\T\gt_3 & \norm{\gt_3}^2\end{psmallmatrix}$.
\end{proof}

\subsection{Geometric Interpretation: Half-Space Projection}\label{app:proj}

\begin{lemma}[Projection onto a half-space]\label{lem:halfspace}
For $H = \{\vx \in \R^n : \va^\T\vx \geq c\}$ with $\va \neq \vzero$, the orthogonal projection of $\vp$ onto $H$ is
\[
\mathrm{proj}_H(\vp) = \begin{cases} \vp & \text{if } \va^\T\vp \geq c, \\
\vp + \dfrac{c - \va^\T\vp}{\norm{\va}^2}\,\va & \text{otherwise}.\end{cases}
\]
\end{lemma}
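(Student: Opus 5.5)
The plan is to prove the half-space projection formula directly from the variational characterization of Euclidean projection onto a closed convex set. A point $\vx^\star \in H$ is the projection of $\vp$ exactly when $(\vp - \vx^\star)^\T(\vx - \vx^\star) \leq 0$ for every $\vx \in H$. The set $H$ is nonempty, closed and convex because $\va \neq \vzero$; for instance, $c\,\va/\norm{\va}^2 \in H$. Hence the projection exists and is unique, and it suffices to exhibit a candidate in $H$ and check this inequality. The argument splits into two cases according to the sign of $\va^\T\vp - c$.

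\emph{Case $\va^\T\vp \geq c$.} Here $\vp \in H$, and the distance $\norm{\vp - \vp} = 0$ is trivially minimal, so $\mathrm{proj}_H(\vp) = \vp$.

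\emph{Case $\va^\T\vp < c$.} Set $\lambda = (c - \va^\T\vp)/\norm{\va}^2 > 0$ and $\vx^\star = \vp + \lambda\va$. First check membership: $\va^\T\vx^\star = \va^\T\vp + (c - \va^\T\vp) = c$, so $\vx^\star$ lies on the boundary hyperplane of $H$. Next, $\vp - \vx^\star = -\lambda\va$. For any $\vx \in H$ this gives
\[
(\vp - \vx^\star)^\T(\vx - \vx^\star) = -\lambda\,(\va^\T\vx - c) \leq 0,
\]
since $\lambda > 0$ and $\va^\T\vx \geq c$. The variational inequality holds, so $\vx^\star$ is the projection.

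An alternative route is via the KKT conditions, mirroring Theorem~\ref{thm:kkt} with a single constraint. Stationarity gives $\vx = \vp + \mu\va$ with $\mu \geq 0$. Complementary slackness then forces either $\mu = 0$, which requires $\vp$ to be feasible, or $\va^\T\vx = c$, which yields $\mu = \lambda$. Either route works; I would present the variational one because it is self-contained.

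No step is a real obstacle. The only points needing care are to record nonemptiness of $H$, which uses $\va \neq \vzero$, and to note that $\lambda > 0$ is exactly what makes the sign in the inequality come out right. As a consistency check, substituting $\va = \gt_2$, $c = \tau\norm{\gt_2}^2$ and $\vp = \gt_1$ recovers Cor.~\ref{cor:k2}.
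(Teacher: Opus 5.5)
Your proof is correct. It uses the same case split and the same candidate point as the paper; the difference is in how the exterior case is justified.

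The paper's proof simply asserts that when $\vp \notin H$ the projection lies on the boundary hyperplane $\{\vx : \va^\T\vx = c\}$, and then writes down that hyperplane's nearest point to $\vp$. It leaves two facts implicit:
\begin{itemize}
\item why an exterior point must project onto the boundary (otherwise one could move toward $\vp$ while staying in $H$);
\item why the nearest point of the hyperplane is $\vp + \lambda\va$ (the displacement must be normal to the hyperplane).
\end{itemize}
Your identity $(\vp - \vx^\star)^\T(\vx - \vx^\star) = -\lambda(\va^\T\vx - c) \leq 0$ certifies the candidate directly, so it needs neither fact. The only tool it borrows is the obtuse-angle characterization of projections onto closed convex sets. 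Even that can be dropped: expanding $\norm{\vp - \vx}^2 = \norm{\vp - \vx^\star}^2 + \norm{\vx - \vx^\star}^2 - 2(\vp - \vx^\star)^\T(\vx - \vx^\star)$ gives minimality and uniqueness at once.

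The paper's version is shorter and more geometric, but as written it is a sketch; yours is the self-contained argument. Your KKT alternative is the route the paper actually uses one level up, when it derives Cor.~\ref{cor:k2} from the active-set system.
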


\begin{proof}
If $\vp \in H$, the closest point is $\vp$ itself. If $\vp \notin H$, the projection lies on the boundary hyperplane $\{\vx:\va^\T\vx = c\}$ at $\vp + \tfrac{c-\va^\T\vp}{\norm{\va}^2}\va$.
\end{proof}

\begin{corollary}[\texorpdfstring{$K=2$}{K=2} as half-space projection]\label{cor:k2-halfspace}
$\dts = \mathrm{proj}_{H_2}(\gt_1)$, the orthogonal projection of $\gt_1$ onto $H_2 = \{\dt : \gt_2^\T\dt \geq \tau\norm{\gt_2}^2\}$.
\end{corollary}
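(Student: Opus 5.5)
The plan is to reduce the statement to Lemma~\ref{lem:halfspace}, applied with $\va = \gt_2$, $c = \tau\norm{\gt_2}^2$ and $\vp = \gt_1$. The corollary rests on two facts. First, for $K=2$ the feasible set $F_\tau(\vth)$ of Eq.~\eqref{eq:feas} is the single half-space $H_2$, since the intersection over $j \in [K]_{-1} = \{2\}$ has only one term. Second, Theorem~\ref{thm:kkt} identifies $\dts$ with $\mathrm{Proj}_{F_\tau(\vth)}(\gt_1)$ whenever $F_\tau(\vth)\neq\emptyset$. Nonemptiness for $K=2$ with $\gt_2\neq\vzero$ is already recorded in App.~\ref{app:feasibility}: the point $\tau\gt_2$ lies in $H_2$. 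Theorem~\ref{thm:kkt} therefore applies, and gives $\dts = \mathrm{proj}_{H_2}(\gt_1)$ at once.

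To make the identification concrete and consistent with Cor.~\ref{cor:k2}, I then substitute into Lemma~\ref{lem:halfspace}. The lemma requires $\va \neq \vzero$, which holds by the standing assumption $\gt_2\neq\vzero$.

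\begin{itemize}
\item \emph{Inactive case.} If $\gt_2^\T\gt_1 \geq \tau\norm{\gt_2}^2$, the lemma returns $\gt_1$.
\item \emph{Active case.} Otherwise it returns $\gt_1 + \frac{\tau\norm{\gt_2}^2 - \gt_2^\T\gt_1}{\norm{\gt_2}^2}\gt_2 = \gt_1 + \bigl(\tau - \gt_2^\T\gt_1/\norm{\gt_2}^2\bigr)\gt_2$.
\end{itemize}

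Both expressions match Eq.~\eqref{eq:k2-cf} verbatim, so the two descriptions of $\dts$ agree.

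The only step that needs real justification is the proof of Lemma~\ref{lem:halfspace} itself, since the paper's proof simply asserts the boundary formula. The obstacle is the active case: showing that the projection lies on the boundary hyperplane at exactly that point. I would use the variational characterization of projections onto closed convex sets: $\vx^\star = \mathrm{proj}_H(\vp)$ if and only if $\vx^\star \in H$ and $(\vp-\vx^\star)^\T(\vx-\vx^\star)\le 0$ for all $\vx\in H$.

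Take the candidate $\vx^\star = \vp + \lambda\va$ with $\lambda = (c-\va^\T\vp)/\norm{\va}^2 > 0$. Then $\va^\T\vx^\star = c$, so $\vx^\star \in H$. Moreover $\vp - \vx^\star = -\lambda\va$, and for any $\vx\in H$,
\[
(\vp-\vx^\star)^\T(\vx-\vx^\star) = -\lambda(\va^\T\vx - c) \le 0 .
\]
This verifies the variational inequality, and uniqueness of the projection finishes the argument. Together with the first paragraph, this establishes the corollary.
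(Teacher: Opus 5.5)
Your proof is correct and follows the paper's implicit argument. For $K=2$ the feasible set $F_\tau(\vth)$ is the single nonempty half-space $H_2$, so the corollary is the $K=2$ instance of the projection identity in Theorem~\ref{thm:kkt}, and Lemma~\ref{lem:halfspace} supplies the explicit form that agrees with Cor.~\ref{cor:k2}. Your variational-inequality check of the active-case formula fills in the step that the paper's proof of Lemma~\ref{lem:halfspace} only asserts. As a minor aside, the nonemptiness step needs no assumption on $\gt_2$, since $\tau\gt_2\in H_2$ always.
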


For general $K$, $\dts = \mathrm{proj}_{F(\tau)}(\gt_1)$, the orthogonal projection of $\gt_1$ onto the convex polyhedron $F(\tau)$. The active set $\mathcal{A}$ identifies the faces of $F(\tau)$ containing the projection.

\subsection{Boundary Cases}\label{app:boundary}

\begin{lemma}[Boundary behaviors]\label{lem:boundary}
\begin{enumerate}[label=(\roman*),nosep,leftmargin=*]
    \hfill
  \item All secondary gradients zero ($\gt_j=\vzero$ for all $j\geq 2$): all constraints reduce to $0\geq 0$, the QP is unconstrained, and $\dts = \gt_1$ (gradient descent on $\mathcal{L}_1$).
  \item Primary gradient zero, some $\tau > 0$, some $\gt_j \neq \vzero$ ($\gt_1 = \vzero$): the QP becomes $\min \norm{\dt}^2$ subject to the secondary constraints. The solution $\dts = \sum_{j\in\mathcal{A}}\mu_j^*\gt_j$ is a non-negative combination of secondary gradients—PCD continues to make progress on secondary objectives even though the primary is stationary.
  \item All gradients zero: $\dts = \vzero$ (fixed point).
\end{enumerate}
\end{lemma}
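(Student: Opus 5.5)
I will prove the three parts of Lemma~\ref{lem:boundary} in turn, each time by specializing the QP of Def.~\ref{def:pcd-qp} and then reading off the solution from the projection/KKT characterization of Theorem~\ref{thm:kkt}. In every case, feasibility of $F_\tau(\vth)$ is checked first, so that Theorem~\ref{thm:kkt} applies.

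For (i), substitute $\gt_j=\vzero$ for all $j\geq 2$ into the constraints. Each constraint becomes $0\geq \tau\cdot 0$, which holds for every $\dt$. Hence $F_\tau(\vth)=\R^n$, and the projection of $\gt_1$ onto $\R^n$ is $\gt_1$ itself. Equivalently, the condition of Cor.~\ref{cor:inactive} holds trivially, so $\dts=\gt_1$.

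For (iii), all gradients, including $\gt_1$, are zero. By the same reasoning $F_\tau(\vth)=\R^n$, so $\dts=\gt_1=\vzero$. This is also the ($\Leftarrow$) direction of Theorem~\ref{thm:cms}.

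For (ii), set $\gt_1=\vzero$. The objective becomes $\norm{\dt}^2$, so $\dts$ is the minimum-norm point of $F_\tau(\vth)$. Feasibility is the only delicate point, and I handle it by cases on $K$:
\begin{itemize}
\item For $K=2$ with $\gt_2\neq\vzero$, the point $\tau\gt_2$ is feasible, as noted in App.~\ref{app:feasibility}.
\item For general $K$, I assume $F_\tau(\vth)\neq\emptyset$, as in the standing hypothesis of Theorem~\ref{thm:kkt}, or equivalently that the Farkas condition of Theorem~\ref{thm:feas} fails. I will state this assumption explicitly.
\end{itemize}

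With feasibility in hand, Theorem~\ref{thm:kkt} gives $\dts=\gt_1+\sum_j\mu_j^*\gt_j=\sum_{j}\mu_j^*\gt_j$ with $\mu_j^*\geq 0$. By complementary slackness, $\mu_j^*=0$ off the active set $\mathcal{A}$, so the sum runs over $\mathcal{A}$ only. It remains to show the update is nontrivial:
\begin{itemize}
\item Since some $\gt_j\neq\vzero$ and $\tau>0$, the point $\vzero$ violates $\gt_j^\T\dt\geq\tau\norm{\gt_j}^2>0$. Hence $\dts\neq\vzero$.
\item Proposition~\ref{prop:progress} then gives $\gt_j^\T\dts\geq\tau\norm{\gt_j}^2>0$ for every nonzero secondary, which is first-order descent on each non-stationary $\mathcal{L}_j$.
\end{itemize}

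The main obstacle is therefore only the feasibility caveat in (ii) for $K>2$. The rest is direct substitution.
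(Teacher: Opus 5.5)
Your proof is correct and follows the paper's argument essentially line for line. Specializing the QP, KKT stationarity with $\gt_1=\vzero$ gives $\dts=\sum_j\mu_j^*\gt_j$, and the strictly positive threshold $\tau\norm{\gt_j}^2$ rules out $\dts=\vzero$; parts (i) and (iii) are immediate because the feasible set is all of $\R^n$. Your explicit feasibility hypothesis in (ii) for $K>2$ is a small but genuine sharpening that the paper's proof leaves implicit: without it the statement can fail, e.g.\ if $\gt_3=-\gt_2\neq\vzero$ the QP has no feasible point.
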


\begin{proof}
(i) Trivial. (ii) With $\gt_1=\vzero$, KKT stationarity gives $\dts = \sum_j \mu_j^* \gt_j$. The objective $\norm{\dt - \vzero}^2 = \norm{\dt}^2$ is minimized at the closest feasible point to the origin. Since $\tau>0$ and $\gt_j \neq \vzero$ for some $j$, the constraint $\gt_j^\T\dt \geq \tau\norm{\gt_j}^2 > 0$ forces $\dts \neq \vzero$. (iii) Immediate.
\end{proof}

Lemma~\ref{lem:boundary}(ii) is the geometric mechanism by which PCD avoids conflict equilibria: at near-primary-stationary points, MGDA outputs $\vd_{\mathrm{MGDA}}\approx\vzero$ (since $\vzero \in \conv\{\vzero,\vg_2,\ldots,\vg_K\}$), halting all progress, while PCD continues to drive secondary objectives toward stationarity.

\subsection{Scale Invariance Proof}\label{app:scale}

We formally state the scale-invariance result whose paragraph-level summary appears in Sec.~\ref{sec:pcd-k2}.

\begin{theorem}[Asymptotic scale invariance]\label{thm:scale-inv}
Let $c_1, \ldots, c_K > 0$ and define $\mathcal{L}_i'(\vth) = c_i \mathcal{L}_i(\vth)$. As $\varepsilon \to 0$ (or once $\hat v_{i,t} \gg \varepsilon/c_i^2$ for all $i$), the PCD direction computed from $(\mathcal{L}_1', \ldots, \mathcal{L}_K')$ coincides with that computed from $(\mathcal{L}_1, \ldots, \mathcal{L}_K)$. With the alternative normalization $s_{i,t} = (\hat v_{i,t}(1+\varepsilon))^{-1/2}$, scale invariance is exact for all $\varepsilon \geq 0$.
\end{theorem}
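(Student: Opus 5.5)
The plan is to reduce everything to the claim that the normalized gradients $\gt_{i,t}$ are unchanged under the rescaling. Once that holds, the QP of Definition~\ref{def:pcd-qp} is unchanged, since its objective and constraints depend on the data only through $\gt_1,\ldots,\gt_K$ and $\tau$. Its unique solution $\dts$, the Euclidean projection from Theorem~\ref{thm:kkt}, is therefore the same. So the proof is a bookkeeping argument on the EMA recursion of Definition~\ref{def:ema}, followed by one line invoking uniqueness of the projection. Positive rescaling in Algorithm~\ref{alg:pcd} only affects magnitude; it is addressed by noting that $\norm{\vg_1'}/\norm{\dts}$ scales by $c_1$, so the direction is preserved.

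\textbf{Step 1.} Gradients and subgradients scale linearly. We have $\nabla(c_1\mathcal{L}_1) = c_1\vg_1$. For $j\ge 2$, $\partial(c_j\mathcal{L}_j) = c_j\,\partial\mathcal{L}_j$ because $c_j>0$ and $\mathcal{L}_j$ is convex (Assumption~\ref{ass:reg}), so we take the selection $\vg_j' = c_j\vg_j$. Hence $\vg_{i,t}' = c_i\vg_{i,t}$ for all $i$ and $t$, provided the same subgradient selection rule is used along the trajectory.

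\textbf{Step 2.} Induct on the EMA recursion. Starting from $v_{i,0}' = v_{i,0} = 0$, the linear recursion with inputs $\norm{\vg_{i,t}'}^2 = c_i^2\norm{\vg_{i,t}}^2$ gives $v_{i,t}' = c_i^2 v_{i,t}$, and hence $\hat v_{i,t}' = c_i^2\hat v_{i,t}$ since the bias correction $1/(1-\beta^t)$ is scale-free. The scaling of the normalizer then follows:
\[
s_{i,t}' = (c_i^2\hat v_{i,t}+\varepsilon)^{-1/2} = c_i^{-1}\,(\hat v_{i,t}+\varepsilon/c_i^2)^{-1/2},
\]
so $\gt_{i,t}' = (\hat v_{i,t}+\varepsilon/c_i^2)^{-1/2}\,\vg_{i,t}$.

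\textbf{Step 3.} Take the limit. As $\varepsilon\to 0$, or in the regime $\hat v_{i,t}\gg\varepsilon/c_i^2$, this tends to $\hat v_{i,t}^{-1/2}\vg_{i,t}$, which is also the limit of $\gt_{i,t}$. For the alternative normalization, $s_{i,t}' = (c_i^2\hat v_{i,t}(1+\varepsilon))^{-1/2} = c_i^{-1}s_{i,t}$ exactly, so $\gt_{i,t}' = \gt_{i,t}$ for every $\varepsilon\ge 0$. In both cases the QP data coincide, and the conclusion follows from Step~1's reduction.

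\textbf{Main obstacle.} The step needing care is Step~3. To make ``coincides as $\varepsilon\to 0$'' precise, I would show that $\dts$ is a continuous function of the normalized gradients. The cleanest route is that projection onto $F_\tau$ is nonexpansive in the point, but $F_\tau$ itself moves with the $\gt_j$. So I would instead use continuity of the QP solution under perturbation of its constraint data, valid when $F_\tau$ has nonempty interior (Slater). For $K=2$, continuity is immediate from the closed form in Corollary~\ref{cor:k2}. A second subtlety is the case $\hat v_{i,t}=0$, where all gradients so far have been zero. Here $\gt_{i,t}=\vzero$ in both problems, so invariance holds trivially, and the alternative normalization should be read with the convention $0\cdot\infty=0$.
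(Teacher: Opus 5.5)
Your proposal is correct and follows the same route as the paper. (Sub)gradients scale as $c_i\vg_i$, so $\hat v_{i,t}\mapsto c_i^2\hat v_{i,t}$ and $s_{i,t}\mapsto s_{i,t}/c_i$ (asymptotically for $(\hat v_{i,t}+\varepsilon)^{-1/2}$, exactly for $(\hat v_{i,t}(1+\varepsilon))^{-1/2}$), which leaves the normalized gradients, and hence the QP and its solution, unchanged. Your added care (the induction over the EMA recursion, the continuity argument that makes the $\varepsilon\to 0$ limit precise, and the remarks on $\hat v_{i,t}=0$ and the output rescaling) goes beyond the paper's sketch, which simply writes $\approx$. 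The only caveat, which you share with the paper, is that the non-limit ``regime'' version also tacitly needs $\hat v_{i,t}\gg\varepsilon$ for the unscaled normalizer, not only $\hat v_{i,t}\gg\varepsilon/c_i^2$.
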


\begin{proof}
Under $\mathcal{L}_i \to c_i \mathcal{L}_i$, $\vg_i \to c_i \vg_i$. The EMA buffer transforms as $v_{i,t} \to c_i^2 v_{i,t}$, so $\hat v_{i,t} \to c_i^2 \hat v_{i,t}$ and
\[
s_{i,t} = (\hat v_{i,t} + \varepsilon)^{-1/2} \;\to\; (c_i^2 \hat v_{i,t} + \varepsilon)^{-1/2}.
\]
For $\varepsilon \ll c_i^2 \hat v_{i,t}$ (which holds once the EMA stabilizes), $s_{i,t} \approx 1/(c_i\sqrt{\hat v_{i,t}}) = s_{i,t}^{\mathrm{orig}}/c_i$. Therefore
\[
\gt_i' = s_{i,t}'\,\vg_i' \;\approx\; \tfrac{s_{i,t}^{\mathrm{orig}}}{c_i}\cdot c_i\vg_i \;=\; s_{i,t}^{\mathrm{orig}}\vg_i \;=\; \gt_i.
\]
Since $\gt_i' = \gt_i$ for all $i$, the QP in Eq.~\eqref{eq:pcd-qp} is identical, and $\dts$ is unchanged. For the alternative normalization, the factor $(1+\varepsilon)$ is invariant under rescaling, so $s_{i,t}'/s_{i,t} = 1/c_i$ exactly, and $\gt_i' = \gt_i$ holds for every $\varepsilon \geq 0$.
\end{proof}

\begin{corollary}[Weighted-sum methods are not scale invariant]
\label{corr:ws-not-scale-invariant}
$\vd_{\mathrm{WS}} = \sum_i w_i \vg_i$ transforms to $\sum_i w_i c_i \vg_i$, which generally differs from $\vd_{\mathrm{WS}}$. The WS direction depends on the relative magnitudes of the objectives---an arbitrary, task-dependent quantity.
\end{corollary}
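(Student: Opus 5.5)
We prove the statement by substituting the rescaled gradients into the weighted-sum direction and exhibiting a configuration in which the result is not a positive multiple of the original. Under $\mathcal{L}_i \to c_i\mathcal{L}_i$ with $c_i>0$, linearity of differentiation (and positive homogeneity of the subdifferential, $\partial(c_i\mathcal{L}_j) = c_i\,\partial\mathcal{L}_j$, for the convex secondaries of Assumption~\ref{ass:reg}) gives $\vg_i \to c_i\vg_i$. Since the weights $w_i$ are fixed hyperparameters that do not see the objective scale, the direction becomes
\[
\vd_{\mathrm{WS}}' = \sum_i w_i c_i \vg_i .
\]
This establishes the transformation claim.

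For the ``generally differs'' part, we show that the direction changes, not merely the magnitude. If $c_1=\cdots=c_K=c$, then $\vd_{\mathrm{WS}}' = c\,\vd_{\mathrm{WS}}$, so the direction is preserved. Now take $K=2$ with $\vg_1,\vg_2$ linearly independent and $w_1,w_2>0$. Suppose $w_1c_1\vg_1 + w_2c_2\vg_2 = \lambda(w_1\vg_1+w_2\vg_2)$ for some $\lambda>0$. Linear independence forces $c_1=\lambda=c_2$. Hence any non-uniform rescaling changes the direction of the weighted-sum update.

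Equivalently, the effective weight ratio $w_1c_1/(w_2c_2)$ is moved by the arbitrary factor $c_1/c_2$. A fixed $w$ therefore encodes a priority that depends on the raw objective magnitudes. This contrasts with Theorem~\ref{thm:scale-inv}, where the EMA factors $s_{i,t}$ absorb $c_i$.

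There is no genuine obstacle; the only care needed is to state ``generally'' precisely, via the linear-independence and non-uniform-$c_i$ condition above.
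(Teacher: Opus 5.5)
Your proposal is correct and follows the paper's argument: substitute $\vg_i \to c_i \vg_i$ into $\sum_i w_i \vg_i$ (the paper gives no proof beyond this one-line substitution in the statement itself). Your linear-independence argument for $K=2$ goes slightly further. It shows that the \emph{direction} changes exactly when $c_1 \neq c_2$, whereas uniform rescaling changes only the magnitude, which makes the paper's ``generally differs'' precise and ties the failure to the relative factor $c_1/c_2$.
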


\subsection{Per-Step Progress and MGDA Failure}\label{app:per-step}

\begin{proof}[Proof of Prop.~\ref{prop:progress}]
Immediate from primal feasibility (Theorem~\ref{thm:kkt}(ii)) with $\tau > 0$ and $\norm{\gt_j} > 0$.
\end{proof}

\begin{proposition}[MGDA failure mode]\label{prop:mgda}
For $K\geq 2$, $\vd_{\mathrm{MGDA}} = \arg\min_{\vd \in \conv\{\vg_1,\ldots,\vg_K\}}\norm{\vd}^2$. If $\vg_1 = \vzero$, then $\vd_{\mathrm{MGDA}} = \vzero$, regardless of $\vg_2,\ldots,\vg_K$.
\end{proposition}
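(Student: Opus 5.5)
The plan is to argue directly from the definition of $\vd_{\mathrm{MGDA}}$ as the minimum-norm element of the convex hull. No KKT machinery is needed, although the same conclusion can also be read off the dual (simplex-weight) form of MGDA.

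\emph{Step 1 (membership).} Under the hypothesis $\vg_1 = \vzero$, the vector $\vzero$ belongs to $\conv\{\vg_1,\ldots,\vg_K\}$. It is the convex combination with weights $\alpha_1 = 1$ and $\alpha_2 = \cdots = \alpha_K = 0$, since $\sum_i \alpha_i \vg_i = \vg_1 = \vzero$. The remaining gradients $\vg_2,\ldots,\vg_K$ play no role in this construction, and this is exactly why the conclusion holds ``regardless of $\vg_2,\ldots,\vg_K$.''

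\emph{Step 2 (minimality and uniqueness).} The objective $\norm{\vd}^2$ is nonnegative on the hull and attains the value $0$ at $\vd = \vzero$, so $\vzero$ is a minimizer. It is the unique minimizer, because $\norm{\vd}^2 = 0$ forces $\vd = \vzero$. Alternatively, one can note that the hull is a compact convex set and $\norm{\cdot}^2$ is strictly convex, so the $\arg\min$ is well defined and single-valued. Either way $\vd_{\mathrm{MGDA}} = \vzero$.

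\emph{Step 3 (weight formulation, optional).} If MGDA is read as $\min_{\alpha \in \Delta_K} \norm{\sum_i \alpha_i \vg_i}^2$, the weight vector $\alpha = e_1$ attains objective value $0$. The optimal weights may then be non-unique, for instance when some other $\vg_j$ also vanishes or the $\vg_j$ themselves cancel. However, every optimal $\alpha$ yields the same direction $\sum_i \alpha_i \vg_i = \vzero$. The statement concerns only the direction $\vd_{\mathrm{MGDA}}$, so this non-uniqueness is harmless.

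There is no genuine obstacle here. The only point requiring care is this well-definedness issue: the proof must be phrased in terms of the direction, which is unique, rather than the mixing weights, which may not be. For contrast, it is worth remarking that Lemma~\ref{lem:boundary}(ii) shows PCD with $\tau > 0$ returns a nonzero direction in the same situation.
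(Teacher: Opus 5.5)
Your proposal is correct and follows essentially the same argument as the paper: $\vzero = \vg_1$ lies in $\conv\{\vg_1,\ldots,\vg_K\}$ and attains the global minimum value $0$ of $\norm{\cdot}^2$. Your extra remarks, that the minimizing direction is unique even when the simplex weights are not, are a small refinement that the paper's one-line proof leaves implicit.
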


\begin{proof}
$\vzero = \vg_1 \in \conv\{\vg_1,\ldots,\vg_K\}$, and $\norm{\vzero}^2 = 0$ is the global minimum. Hence $\vd_{\mathrm{MGDA}}=\vzero$.
\end{proof}

This makes precise the structural advantage of PCD over MGDA: MGDA halts whenever \emph{any} single gradient vanishes (even if $K-1$ secondary objectives remain non-stationary), whereas PCD with $\tau > 0$ continues to drive every non-stationary secondary toward stationarity (Lemma~\ref{lem:boundary}(ii)).

\section{Additional Convergence Properties}
\label{app:additional-conv}

\subsection{Proof of Theorem~\ref{thm:cms}}
\label{app:cms-proof}

We restate Theorem~\ref{thm:cms} for convenience and give the full proof.

\begin{theorem*}[Theorem~\ref{thm:cms}, restated]
Let $\tau > 0$ and suppose $F_\tau(\vth) \neq \emptyset$. Then
\[
  \dts(\vth) \;=\; \vzero \quad\Longleftrightarrow\quad \gt_i(\vth) \;=\; \vzero \;\;\text{for all } i \in [K].
\]
\end{theorem*}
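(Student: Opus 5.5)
The plan is to prove the two implications separately, treating the QP of Def.~\ref{def:pcd-qp} as a strictly convex projection problem. Uniqueness of $\dts$ comes from Theorem~\ref{thm:kkt}: $F_\tau(\vth)$ is a nonempty closed convex polyhedron and $\dts = \mathrm{Proj}_{F_\tau(\vth)}(\gt_1)$, so $\dts$ is well defined and unique. The hypothesis $F_\tau(\vth)\neq\emptyset$ is used only here, to guarantee the minimizer exists.

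\emph{($\Leftarrow$).} Suppose $\gt_i=\vzero$ for every $i\in[K]$. Then every constraint reads $0\geq 0$, so $F_\tau(\vth)=\R^n$ and the QP is the unconstrained minimization of $\norm{\dt}^2$. Its unique minimizer is $\vzero$; equivalently, this is Lemma~\ref{lem:boundary}(iii). Hence $\dts=\vzero$.

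\emph{($\Rightarrow$).} Suppose $\dts=\vzero$. The argument has two steps, and the order matters.
\begin{enumerate}[label=(\roman*),nosep,leftmargin=*]
  \item Primal feasibility of $\dts$ gives $\gt_j^\T\vzero\geq\tau\norm{\gt_j}^2$ for each $j\in[K]_{-1}$, that is, $0\geq\tau\norm{\gt_j}^2$. Since $\tau>0$, this forces $\gt_j=\vzero$ for every secondary.
  \item Now invoke the KKT stationarity identity Eq.~\eqref{eq:kkt-stationarity}: $\vzero=\dts=\gt_1+\sum_{j\geq2}\mu_j^*\gt_j=\gt_1$, since every $\gt_j$ with $j\geq 2$ vanishes. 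Therefore $\gt_1=\vzero$.
\end{enumerate}
The point is that Eq.~\eqref{eq:kkt-stationarity} alone cannot force $\gt_1=\vzero$, because nonnegative secondary terms could cancel it. Step (i) kills those terms first, and only then does stationarity pin $\gt_1$.

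There is also a more elementary route to step (ii) that avoids multipliers. Once all $\gt_j=\vzero$ for $j\geq2$, the feasible set is all of $\R^n$, so the projection is $\gt_1$ itself. Hence $\dts=\gt_1$, and $\dts=\vzero$ gives $\gt_1=\vzero$. I would present this as the main line, since it needs no constraint qualification, and mention the KKT argument as an equivalent reading.

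The main obstacle is only bookkeeping. I would note that the statement concerns the \emph{normalized} gradients. Since each $s_{i,t}>0$ in Def.~\ref{def:ema}, $\gt_i=\vzero$ if and only if $\vg_i=\vzero$. This links the conclusion to CMS (Def.~\ref{def:cms}) for subgradient selections, where $\vg_j=\vzero$ gives $\vzero\in\partial\mathcal{L}_j$.
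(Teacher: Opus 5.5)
Your proposal is correct and follows the paper's proof essentially step for step. The reverse direction uses the constraints degenerating to $0 \geq 0$, and the forward direction first uses primal feasibility with $\tau>0$ to force every secondary $\gt_j=\vzero$, then KKT stationarity to force $\gt_1=\vzero$, closing with the same remark that $s_{i,t}>0$ carries the conclusion over to CMS. Your multiplier-free variant of the last step (once the secondaries vanish, $F_\tau(\vth)=\R^n$, so $\dts=\gt_1$) is an equivalent reading of that step, not a different argument.
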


\begin{proof}
($\Leftarrow$) If $\gt_i = \vzero$ for all $i \in [K]$, then $\dt = \vzero$ achieves $\norm{\vzero - \vzero}^2 = 0$ (the global minimum of the QP objective) and trivially satisfies every constraint $\vzero^\T \vzero = 0 \geq 0$. Hence $\dts = \vzero$.

($\Rightarrow$) Suppose $\dts = \vzero$. By primal feasibility (Theorem~\ref{thm:kkt}),
\[
  \gt_j^\T \dts \;\geq\; \tau \norm{\gt_j}^2 \quad \forall\, j \in [K]_{-1}.
\]
Substituting $\dts = \vzero$ gives $0 \geq \tau \norm{\gt_j}^2$. Since $\tau > 0$ and $\norm{\gt_j}^2 \geq 0$, this forces $\gt_j = \vzero$ for all $j \in [K]_{-1}$. By KKT stationarity (Theorem~\ref{thm:kkt}, Eq.~\eqref{eq:kkt-stationarity}),
\[
  \vzero \;=\; \dts \;=\; \gt_1 + \sum_{j=2}^K \mu_j^* \, \gt_j \;=\; \gt_1 + \sum_{j=2}^K \mu_j^* \, \vzero \;=\; \gt_1 ,
\]
so $\gt_1 = \vzero$. Since $\gt_i = s_i \vg_i$ with $s_i > 0$, this yields $\nabla \mathcal{L}_1(\vth) = \vzero$ and $\vzero \in \partial \mathcal{L}_j(\vth)$ for $j \in [K]_{-1}$, i.e.\ $\vth$ is composite multi-stationary.
\end{proof}

The proof uses only primal feasibility with $\tau > 0$ and KKT stationarity. It does not depend on $K$, on $n$, or on algebraic structure of the multipliers or Gram matrix, which is what makes the result hold uniformly across problem sizes.

\subsection{Fixed Points with \texorpdfstring{$\boldsymbol{\tau}=0$}{tau=0}}\label{app:tau-zero}

\begin{lemma}\label{lem:tau-zero}
If $\tau = 0$, then $\dts = \vzero$ if and only if $-\gt_1 \in \cone\{\gt_2,\ldots,\gt_K\}$. Equivalently, $\dts = \vzero$ iff $\vzero \in \conv\{\gt_1,\gt_2,\ldots,\gt_K\}$ \emph{with strictly positive weight on} $\gt_1$. In particular, $\dts = \vzero \Rightarrow \vth$ is Pareto-stationary in normalized-gradient space.
\end{lemma}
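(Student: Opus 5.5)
The plan is to read both directions directly off the KKT characterization of Theorem~\ref{thm:kkt}, specialized to $\tau = 0$. First I would note that at $\tau = 0$ the feasible set $F_0(\vth) = \{\dt : \gt_j^\T \dt \geq 0,\ j \in [K]_{-1}\}$ is a closed convex polyhedral cone. It contains $\vzero$, so it is nonempty and Theorem~\ref{thm:kkt} applies with no feasibility hypothesis. Because the QP is strictly convex with affine constraints, the KKT conditions are both necessary and sufficient. The minimizer is therefore $\dts$ if and only if there exist $\mu_j \geq 0$ satisfying stationarity, primal feasibility and complementary slackness.

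($\Rightarrow$) Suppose $\dts = \vzero$. Stationarity (Eq.~\eqref{eq:kkt-stationarity}) gives $\vzero = \gt_1 + \sum_{j\geq 2}\mu_j^*\gt_j$ with $\mu_j^* \geq 0$. Hence $-\gt_1 = \sum_j \mu_j^* \gt_j \in \cone\{\gt_2,\ldots,\gt_K\}$.

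($\Leftarrow$) Conversely, suppose $-\gt_1 = \sum_j \mu_j \gt_j$ with $\mu_j \geq 0$. I would check that the candidate $\dt = \vzero$ together with these $\mu_j$ satisfies every KKT condition:
\begin{itemize}
\item stationarity holds by construction;
\item primal feasibility holds because $\gt_j^\T \vzero = 0 \geq 0 = \tau\norm{\gt_j}^2$;
\item complementary slackness holds because every constraint is tight at $\vzero$ when $\tau = 0$.
\end{itemize}
Sufficiency of KKT for convex problems then yields $\dts = \vzero$. The only delicate point is to invoke sufficiency explicitly rather than just the necessity direction stated in Theorem~\ref{thm:kkt}. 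This is where $\tau = 0$ matters: for $\tau > 0$ the point $\vzero$ would fail primal feasibility.

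For the convex-hull reformulation, I would normalize.
\begin{itemize}
\item From $\gt_1 + \sum_j \mu_j\gt_j = \vzero$, set $Z = 1 + \sum_j \mu_j > 0$ and define weights $\alpha_1 = 1/Z > 0$ and $\alpha_j = \mu_j/Z$. These give $\vzero \in \conv\{\gt_1,\ldots,\gt_K\}$ with strictly positive weight on $\gt_1$.
\item Conversely, given $\sum_i \alpha_i \gt_i = \vzero$ with $\alpha_1 > 0$, divide by $\alpha_1$ and take $\mu_j = \alpha_j/\alpha_1 \geq 0$.
\end{itemize}

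The final claim is then immediate: $\vzero \in \conv\{\gt_i\}$ is exactly Pareto stationarity (Def.~\ref{def:pareto}) for the normalized gradients. Since $\gt_i = s_i\vg_i$ with $s_i > 0$, rescaling the weights by $s_i$ and renormalizing also transfers this to $\vzero \in \conv\{\vg_i\}$. The degenerate case $\gt_1 = \vzero$ is covered automatically, by taking all $\mu_j = 0$.
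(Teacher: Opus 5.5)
Your proposal is correct and follows essentially the same route as the paper. Both arguments read the two directions off the KKT conditions at $\tau=0$: you check that $\dt=\vzero$ together with the cone multipliers satisfies every KKT condition, you read $-\gt_1=\sum_j\mu_j^*\gt_j$ off stationarity, and you normalize by $1+\sum_j\mu_j$ to get the convex-hull form. Your version is slightly more complete than the paper's proof, which leaves implicit the converse of the convex-hull equivalence (dividing by $\alpha_1$), the explicit nonemptiness $\vzero\in F_0(\vth)$, and the transfer of Pareto stationarity to the raw gradients via $s_i>0$.
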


\begin{remark}[On the strict-positivity condition]\label{rem:tau-zero-strict}
The condition $\vzero \in \conv\{\gt_1,\ldots,\gt_K\}$ alone (Pareto stationarity) is \emph{not} equivalent to $\dts = \vzero$. Counter-example: in $\R^2$ with $\gt_1=(1,0)$, $\gt_2=(0,1)$, $\gt_3=(0,-1)$, the QP forces $d_2 = 0$ and yields $\dts = (1,0) = \gt_1 \neq \vzero$, while $\vzero = 0\cdot\gt_1 + \tfrac12\gt_2 + \tfrac12\gt_3$ (Pareto-stationary, but with zero weight on $\gt_1$). Lemma~\ref{lem:tau-zero} uses only the implication $\dts = \vzero \Rightarrow$ Pareto-stationary, which holds without further qualification.
\end{remark}

\begin{proof}
The QP becomes $\min \norm{\dt - \gt_1}^2$ s.t.\ $\gt_j^\T\dt \geq 0$ for $j \geq 2$.

($\Leftarrow$) If $-\gt_1 = \sum_{j\geq 2}\mu_j \gt_j$ with $\mu_j \geq 0$, set $\dt = \vzero$. Primal feasibility: $\gt_j^\T\vzero = 0 \geq 0$. Dual feasibility: $\mu_j \geq 0$. Complementary slackness: $\mu_j \cdot 0 = 0$. Stationarity: $\vzero - \gt_1 - \sum_j \mu_j \gt_j = -\gt_1 - (-\gt_1) = \vzero$. Hence $\dts = \vzero$ by KKT.

($\Rightarrow$) If $\dts = \vzero$, KKT stationarity gives $\vzero = \gt_1 + \sum_j \mu_j^* \gt_j$ with $\mu_j^* \geq 0$, i.e.\ $-\gt_1 = \sum_j \mu_j^* \gt_j \in \cone\{\gt_j\}_{j\geq 2}$.

\emph{Equivalence with $\vzero \in \conv$ (with $\alpha_1 > 0$) and Pareto stationarity.} Given $-\gt_1 = \sum_j\mu_j^*\gt_j$ with $\mu_j^* \geq 0$, dividing by $1+\sum_j\mu_j^* > 0$ gives
\[
\vzero = \tfrac{1}{1 + \sum_j \mu_j^*}\,\gt_1 + \sum_j \tfrac{\mu_j^*}{1 + \sum_j \mu_j^*}\,\gt_j \in \conv\{\gt_1,\ldots,\gt_K\},
\]
with weight $\alpha_1 = 1/(1+\sum\mu_j^*) > 0$ on $\gt_1$, hence Pareto-stationary.
\end{proof}

This recovers the MGDA-style guarantee at $\tau = 0$ and contrasts sharply with Theorem~\ref{thm:cms}: positive tolerances collapse the fixed-point set from Pareto-stat to CMS only.

\subsection{Sufficient-Decrease Sketch}\label{app:sufficient-decrease}

Theorem~\ref{thm:cms} characterizes the endpoints of the deterministic, normalized QP but does not by itself certify that the iterates of Algorithm~\ref{alg:pcd} approach such an endpoint. A standard sufficient-decrease argument applied to the primary loss yields the following partial convergence statement, sketched here for completeness; we do not claim it as a contribution of this paper.

Suppose $\mathcal{L}_1$ is $\ell_1$-smooth and bounded below, the QP is feasible at every iterate, the iterate sequence is bounded, and there exists $\mu_0 > 0$ such that the cosine alignment $\widetilde\alpha_t := \gt_{1,t}^\T\dts_t/(\norm{\gt_{1,t}}\norm{\dts_t}) \geq \mu_0$ for all $t$. Then for constant step size $\eta \leq \mu_0/\ell_1$, the descent lemma applied to $\mathcal{L}_1$ together with the rescaling identity $\norm{\vd_t^*} = \norm{\vg_{1,t}}$ yields $L_1(\vth_{t+1}) \leq L_1(\vth_t) - \tfrac{\eta\mu_0}{2}\norm{\vg_{1,t}}^2$, and telescoping over $t < T$ gives $\min_{t<T}\norm{\vg_{1,t}}^2 = O(1/T)$. Two caveats are worth noting. First, the rate certifies primary stationarity only: by Theorem~\ref{thm:cms}, CMS corresponds to $\dts(\vth)=\vzero$, while the rescaling halts the algorithm whenever $\vd^*(\vth) = \dts\cdot\norm{\vg_1}/\norm{\dts} = \vzero$, which admits both the CMS branch and a primary-stationary non-CMS branch in which $\vg_1 = \vzero$ but $\dts \neq \vzero$. The asymmetric/regularizer regime of Eq.~\eqref{eq:objectives}, in which secondaries are jointly stationary at primary minima, makes the second branch non-generic. Second, the alignment hypothesis $\widetilde\alpha_t \geq \mu_0$ holds with $\mu_0 = 1$ in the conflict-free regime but can fail in severe conflict, so the result is conditional rather than unconditional. Strengthening to a single-limit-point statement via a Kurdyka--{\L}ojasiewicz argument is non-trivial: the natural Lyapunov candidate $\Phi = \sum_i \mathcal{L}_i$ has critical points at weighted-sum stationarity rather than at CMS, so a merit function whose critical points coincide with CMS would need to be constructed first. We leave this to future work.

\section{Feasibility at \texorpdfstring{$K > 2$}{K>2}}
\label{app:Kgt2}

The theory of Sec.~\ref{sec:pcd} applies uniformly for all $K \geq 2$. The one phenomenon specific to $K > 2$ is that the feasible set $F(\tau)$ can become empty when the secondary gradients are mutually incompatible. The Farkas characterization of Theorem~\ref{thm:feas} gives the tight condition; the following angle-based sufficient bound is useful when only pairwise cosines are available.

\begin{proposition}[Sufficient condition for feasibility]\label{prop:Kfeas}
Let $\theta_{\max} = \max_{j\neq k\in[K]_{-1}}\theta_{jk}$ be the maximum pairwise angle among the (nonzero, normalized) secondary gradients, with the convention $\arccos(1/0) = 0$ for $K=3$. If
\[
\theta_{\max} \;<\; \pi - \arccos\!\left(\tfrac{1}{K-2}\right) ,
\]
then $\vzero \notin \cone\{\gt_2,\ldots,\gt_K\}$, hence $F(\tau)\neq\emptyset$ for all $\tau\in[0,1]$.
\end{proposition}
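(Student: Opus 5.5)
Assume $\vzero \in \cone\{\gt_2,\ldots,\gt_K\} - \{\vzero\}$ and derive a contradiction with the angle bound. The contradiction will come from expanding the squared norm of a vanishing nonnegative combination and bounding it below using the pairwise cosines. Once the cone condition is excluded, Theorem~\ref{thm:feas} gives $F(\tau) \neq \emptyset$ for every $\tau \in (0,1]$. For $\tau = 0$, $\dt = \vzero$ satisfies every constraint $\gt_j^\T \vzero = 0 \geq 0$, so feasibility is trivial.

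\textbf{Translating the hypothesis.} Since normalization does not change membership in the cone, we may work with unit vectors $\mathbf{u}_j = \gt_j/\norm{\gt_j}$ for $j \in [K]_{-1}$. There are $m = K-1$ of them. Write $c_{jk} = \mathbf{u}_j^\T \mathbf{u}_k = \cos\theta_{jk}$. The hypothesis $\theta_{\max} < \pi - \arccos\bigl(\tfrac{1}{K-2}\bigr)$ and the monotonicity of $\cos$ on $[0,\pi]$ give
\[
c_{jk} \;\geq\; \cos\theta_{\max} \;>\; -\tfrac{1}{K-2} \qquad \text{for all } j \neq k .
\]
For $K=3$ this says $c_{23} > -1$, i.e.\ the two vectors are not antiparallel, which matches Cor.~\ref{cor:infeas}(ii).

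\textbf{Norm expansion.} Suppose $\sum_j y_j \mathbf{u}_j = \vzero$ with $\mathbf{y} \geq \vzero$ and $\mathbf{y} \neq \vzero$.

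If exactly one $y_j$ is positive, the combination equals $y_j \mathbf{u}_j \neq \vzero$, a contradiction.

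Otherwise some pair has $y_j y_k > 0$. Expanding the squared norm,
\[
0 \;=\; \Bigl\|\sum_j y_j \mathbf{u}_j\Bigr\|^2 \;=\; \sum_j y_j^2 + \sum_{j\neq k} y_j y_k c_{jk} .
\]
Every $y_j y_k \geq 0$, at least one is strictly positive, and each $c_{jk} > -\tfrac{1}{K-2}$. Hence
\[
0 \;>\; \sum_j y_j^2 - \tfrac{1}{K-2}\sum_{j\neq k} y_j y_k .
\]
By Cauchy--Schwarz, $\bigl(\sum_j y_j\bigr)^2 \leq m \sum_j y_j^2$. It follows that
\[
\sum_{j\neq k} y_j y_k \;=\; \Bigl(\sum_j y_j\Bigr)^2 - \sum_j y_j^2 \;\leq\; (m-1)\sum_j y_j^2 \;=\; (K-2)\sum_j y_j^2 .
\]
Substituting this bound gives $0 > \sum_j y_j^2 - \sum_j y_j^2 = 0$, a contradiction.

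\textbf{Main point of care.} The delicate step is keeping the inequality strict. The Cauchy--Schwarz bound is tight at uniform weights, so the strictness must come entirely from the strict cosine bound. That bound contributes strictly only through a pair with $y_j y_k > 0$, which is why the single-support case is handled separately. I also need the secondary gradients to be nonzero, which the statement assumes, so that the $\mathbf{u}_j$ are well defined.
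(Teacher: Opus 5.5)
Your proof is correct and follows essentially the same route as the paper's: expand $\norm{\sum_j y_j \mathbf{u}_j}^2 = 0$, apply the strict bound $\cos\theta_{jk} > -1/(K-2)$, and contradict Cauchy--Schwarz in the form $\bigl(\sum_j y_j\bigr)^2 \le (K-1)\sum_j y_j^2$. You also treat the single-support case separately, which the strict inequality genuinely needs, and you handle $\tau = 0$ directly, since Theorem~\ref{thm:feas} does not apply there but $\dt = \vzero$ is trivially feasible; both are points the paper's proof passes over silently.
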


\begin{proof}
Suppose for contradiction $\vzero \in \cone\{\gt_j\}_{j\geq 2}$ with each $\gt_j$ a unit vector. Then $\vzero = \sum_{j=2}^K \alpha_j \gt_j$ for some $\alpha_j \geq 0$, not all zero. Set $S = \sum_j \alpha_j > 0$. Computing $\norm{\sum_j \alpha_j \gt_j}^2 = 0$ and expanding,
\[
\sum_j \alpha_j^2 \;=\; -2\sum_{j<k}\alpha_j\alpha_k \cos\theta_{jk} .
\]
By hypothesis $\cos\theta_{jk} > -1/(K-2)$ for all $j<k$, so $-\cos\theta_{jk} < 1/(K-2)$ and
\[
\sum_j \alpha_j^2 \;<\; \tfrac{2}{K-2}\sum_{j<k}\alpha_j\alpha_k \;=\; \tfrac{1}{K-2}\Big(S^2 - \sum_j \alpha_j^2\Big),
\]
i.e.\ $(K-1)\sum_j \alpha_j^2 < S^2$. But Cauchy--Schwarz on $S = \sum_j 1\cdot\alpha_j$ gives $S^2 \leq (K-1)\sum_j \alpha_j^2$ --- contradiction. Feasibility then follows from the Farkas characterization (Theorem~\ref{thm:feas}).
\end{proof}

For $K=3$, the bound reduces to $\theta_{\max} < \pi$: feasibility holds unless the two secondaries are exactly anti-parallel, recovering Cor.~\ref{cor:infeas}(ii). For $K\geq 4$, the bound is tight at the equilateral configuration. The condition is sufficient but not necessary --- for instance, with $K=4$ and three planar secondaries at angles $0^\circ,\,100^\circ,\,170^\circ$, $\theta_{\max} = 170^\circ > 120^\circ$ but all three lie in a common open half-plane, so $\vzero \notin \cone$. The tight characterization remains the Farkas alternative.

\section{Expanded Results: Structured Pruning}
\label{app:expanded-structured-pruning}

We report full per-configuration results omitted from the main body, complementing Tabs.~\ref{tab:densenet-cifar10} and~\ref{tab:resnet-cifar100}. Tab.~\ref{tab:pruning-accuracy-all} reports test accuracy at every sparsity level for all six architecture/dataset configurations and all baselines. Because the pruning target uniquely determines deployment cost, Tab.~\ref{tab:pruning-efficiency-all} reports the unpruned baselines together with the achieved efficiency at the 80\% and 99\% operating points; intermediate points are visualized in the Pareto frontiers (Fig.~\ref{fig:pareto-all}), and $\tau$-sweeps appear in Fig.~\ref{fig:tau-sweep-all}.

\begin{table}[!htbp]
\centering\footnotesize
\setlength{\tabcolsep}{4pt}
\renewcommand{\arraystretch}{1.05}
\begin{tabular}{ll | ccccccc}
\toprule
\textbf{Configuration} & \textbf{Target} & \textbf{PCD (Ours)} & \textbf{WS} & \textbf{CAGrad} & \textbf{MGDA} & \textbf{PCGrad} & \textbf{FAMO} & \textbf{AuxiNash} \\
\midrule
\multirow{6}{*}{\shortstack[l]{DenseNet-121\\CIFAR-100}}
 & Unpruned & \textbf{75.2} & \textbf{75.2} & \textbf{75.2} & \textbf{75.2} & \textbf{75.2} & \textbf{75.2} & \textbf{75.2} \\
 & 80\%     & \textbf{74.8} & 38.5 & 28.0 & 12.1 & 21.1 & 2.3 & 35.8 \\
 & 85\%     & \textbf{74.8} & 38.0 & 27.4 & 12.1 & 21.0 & 2.4 & 35.8 \\
 & 90\%     & \textbf{72.7} & 36.8 & 26.7 & 11.8 & 20.7 & 2.3 & 35.7 \\
 & 95\%     & \textbf{71.6} & 36.0 & 26.7 & 11.7 & 19.9 & 2.2 & 35.7 \\
 & 99\%     & \textbf{46.5} & 34.1 & 26.7 & 11.5 & 19.0 & 2.1 & 35.7 \\
\midrule
\multirow{6}{*}{\shortstack[l]{ResNet-34\\CIFAR-10}}
 & Unpruned & \textbf{94.4} & \textbf{94.4} & \textbf{94.4} & \textbf{94.4} & \textbf{94.4} & \textbf{94.4} & \textbf{94.4} \\
 & 80\%     & \textbf{93.0} & 70.8 & 61.6 & 10.3 & 51.6 & 23.8 & 73.7 \\
 & 85\%     & \textbf{93.0} & 70.8 & 61.6 & 10.3 & 51.6 & 23.8 & 73.7 \\
 & 90\%     & \textbf{93.0} & 70.8 & 61.6 & 10.3 & 51.6 & 23.8 & 73.7 \\
 & 95\%     & \textbf{92.0} & 70.8 & 61.6 & 10.3 & 51.6 & 23.8 & 73.7 \\
 & 99\%     & \textbf{75.8} & 63.5 & 53.8 & 10.2 & 51.2 & 23.8 & 61.3 \\
\midrule
\multirow{6}{*}{\shortstack[l]{Inception\\CIFAR-10}}
 & Unpruned & \textbf{91.0} & \textbf{91.0} & \textbf{91.0} & \textbf{91.0} & \textbf{91.0} & \textbf{91.0} & \textbf{91.0} \\
 & 80\%     & \textbf{89.6} & 64.2 & 33.8 & 19.1 & 32.3 & 10.0 & 56.3 \\
 & 85\%     & \textbf{89.6} & 64.2 & 33.8 & 18.9 & 32.3 & 10.0 & 56.3 \\
 & 90\%     & \textbf{89.6} & 64.2 & 33.8 & 18.5 & 32.3 & 10.0 & 56.3 \\
 & 95\%     & \textbf{86.2} & 64.2 & 33.8 & 18.5 & 32.3 & 10.0 & 56.3 \\
 & 99\%     & \textbf{78.4} & 64.2 & 33.8 & 18.2 & 32.3 & 10.0 & 56.3 \\
\midrule
\multirow{6}{*}{\shortstack[l]{Inception\\CIFAR-100}}
 & Unpruned & \textbf{68.5} & \textbf{68.5} & \textbf{68.5} & \textbf{68.5} & \textbf{68.5} & \textbf{68.5} & \textbf{68.5} \\
 & 80\%     & \textbf{68.5} & 14.0 & 8.5 & 2.4 & 8.9 & 1.0 & 20.4 \\
 & 85\%     & \textbf{61.4} & 14.0 & 8.5 & 2.4 & 8.9 & 1.0 & 20.5 \\
 & 90\%     & \textbf{61.6} & 14.0 & 8.5 & 2.4 & 8.9 & 1.0 & 20.5 \\
 & 95\%     & \textbf{57.2} & 14.0 & 8.5 & 2.4 & 8.9 & 1.0 & 20.5 \\
 & 99\%     & \textbf{28.6} & 13.4 & 8.5 & 2.4 & 9.4 & 1.0 & 17.3 \\
\midrule
\multirow{6}{*}{\shortstack[l]{MobileNetV2\\CIFAR-10}}
 & Unpruned & \textbf{93.1} & \textbf{93.1} & \textbf{93.1} & \textbf{93.1} & \textbf{93.1} & \textbf{93.1} & \textbf{93.1} \\
 & 80\%     & \textbf{93.0} & 41.7 & 26.2 & 11.7 & 25.7 & 10.0 & 60.2 \\
 & 85\%     & \textbf{93.0} & 41.6 & 26.2 & 11.7 & 25.6 & 10.0 & 60.2 \\
 & 90\%     & \textbf{92.2} & 41.6 & 29.2 & 11.7 & 25.6 & 10.0 & 60.0 \\
 & 95\%     & \textbf{88.2} & 41.6 & 26.1 & 12.7 & 5.5 & 10.0 & 60.0 \\
 & 99\%     & \textbf{18.3} & 16.7 & 16.7 & 12.7 & 11.4 & 10.0 & 10.9 \\
\midrule
\multirow{6}{*}{\shortstack[l]{MobileNetV2\\CIFAR-100}}
 & Unpruned & \textbf{72.0} & \textbf{72.0} & \textbf{72.0} & \textbf{72.0} & \textbf{72.0} & \textbf{72.0} & \textbf{72.0} \\
 & 80\%     & \textbf{68.0} & 15.3 & 6.2 & 1.2 & 4.6 & 1.0 & 17.0 \\
 & 85\%     & \textbf{64.8} & 15.3 & 6.2 & 1.2 & 4.5 & 1.0 & 16.9 \\
 & 90\%     & \textbf{55.3} & 15.2 & 6.2 & 1.2 & 4.5 & 1.0 & 16.9 \\
 & 95\%     & \textbf{24.6} & 15.2 & 6.3 & 1.2 & 4.5 & 1.0 & 16.9 \\
 & 99\%     & 1.2 & \textbf{1.9} & 1.2 & 1.1 & 1.1 & 1.0 & 1.0 \\
\bottomrule
\end{tabular}
\caption{Test accuracy (\%) on the structured-pruning task across all six architecture/dataset configurations, all baselines, and all sparsity targets. Means over 5 seeds, best values bolded. AuxiNash shares the same unpruned base model as the other methods, so its Unpruned cell reports the shared base accuracy. Deployment costs at the same target sparsity are method-independent and reported separately in Tab.~\ref{tab:pruning-efficiency-all}.}
\label{tab:pruning-accuracy-all}
\end{table}

\begin{table}[!htbp]
\centering\scriptsize
\setlength{\tabcolsep}{3pt}
\begin{tabular}{l | ccc | ccc | ccc}
\toprule
& \multicolumn{3}{c|}{\textbf{Unpruned baseline}} & \multicolumn{3}{c|}{\textbf{@ 80\% sparsity}} & \multicolumn{3}{c}{\textbf{@ 99\% sparsity}} \\
\textbf{Configuration} & FLOPs (G) & Lat. (ms) & Size (MB)& FLOPs (G) & Lat. (ms) & Size (MB) & FLOPs (G) & Lat. (ms) & Size (MB) \\
\midrule
DenseNet-121 / C-100 & 1.82 & 61.0 & 27.6
  & \makecell{0.26 \\ ($7.0\times$)}   & \makecell{26.5 \\ ($2.3\times$)} & \makecell{6.13 \\ ($4.5\times$)}
  & \makecell{0.013 \\ ($140.9\times$)} & \makecell{8.5 \\ ($7.2\times$)}  & \makecell{0.47 \\ ($58.6\times$)} \\
\midrule
ResNet-34 / C-10     & 2.33 & 42.7 & 81.3
  & \makecell{0.67 \\ ($3.5\times$)}   & \makecell{25.1 \\ ($1.7\times$)} & \makecell{16.59 \\ ($4.9\times$)}
  & \makecell{0.067 \\ ($34.6\times$)} & \makecell{11.0 \\ ($3.9\times$)} & \makecell{1.34 \\ ($60.8\times$)} \\
\midrule
Inception / C-10     & 0.53 & 13.0 & 5.4
  & \makecell{0.063 \\ ($8.4\times$)}  & \makecell{5.9 \\ ($2.2\times$)}  & \makecell{1.06 \\ ($5.1\times$)}
  & \makecell{0.006 \\ ($88.6\times$)} & \makecell{3.5 \\ ($3.7\times$)}  & \makecell{0.10 \\ ($53.9\times$)} \\
\midrule
Inception / C-100    & 0.53 & 13.0 & 5.6
  & \makecell{0.082 \\ ($6.5\times$)}  & \makecell{6.8 \\ ($1.9\times$)}  & \makecell{1.08 \\ ($5.2\times$)}
  & \makecell{0.002 \\ ($265.9\times$)} & \makecell{2.9 \\ ($4.5\times$)} & \makecell{0.089 \\ ($62.8\times$)} \\
\midrule
MobileNetV2 / C-10   & 0.19 & 9.3  & 8.8
  & \makecell{0.083 \\ ($2.3\times$)}  & \makecell{6.6 \\ ($1.4\times$)}  & \makecell{1.54 \\ ($5.7\times$)}
  & \makecell{0.005 \\ ($37.8\times$)} & \makecell{3.1 \\ ($3.0\times$)}  & \makecell{0.18 \\ ($48.9\times$)} \\
\midrule
MobileNetV2 / C-100  & 0.19 & 9.3  & 9.2
  & \makecell{0.050 \\ ($3.8\times$)}  & \makecell{4.9 \\ ($1.9\times$)}  & \makecell{1.61 \\ ($5.7\times$)}
  & \makecell{0.005 \\ ($37.9\times$)} & \makecell{3.0 \\ ($3.1\times$)}  & \makecell{0.24 \\ ($38.4\times$)} \\
\bottomrule
\end{tabular}
\caption{Deployment efficiency at the boundary operating points. ``Unpruned baseline'' lists raw FLOPs, latency, and file size. The 80\% and 99\% sparsity columns list the absolute value at that operating point with the reduction factor relative to the baseline in parentheses. Identical for any method achieving the same target sparsity. Intermediate 85--95\% points are visualized in Fig.~\ref{fig:pareto-all}.}
\label{tab:pruning-efficiency-all}
\end{table}

\begin{figure}[!htbp]
\centering
\begin{subfigure}[t]{0.48\textwidth}
  \includegraphics[width=\linewidth]{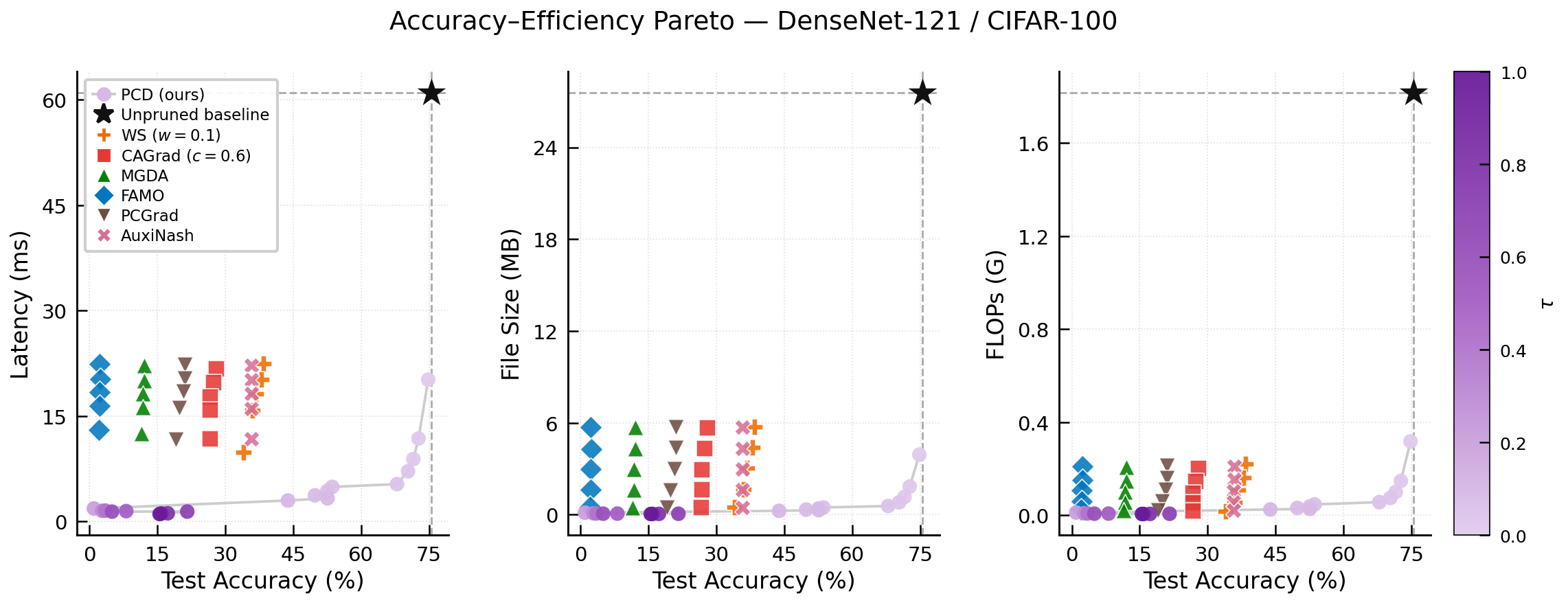}
  \caption{DenseNet-121 / CIFAR-100}
\end{subfigure}\hfill
\begin{subfigure}[t]{0.48\textwidth}
  \includegraphics[width=\linewidth]{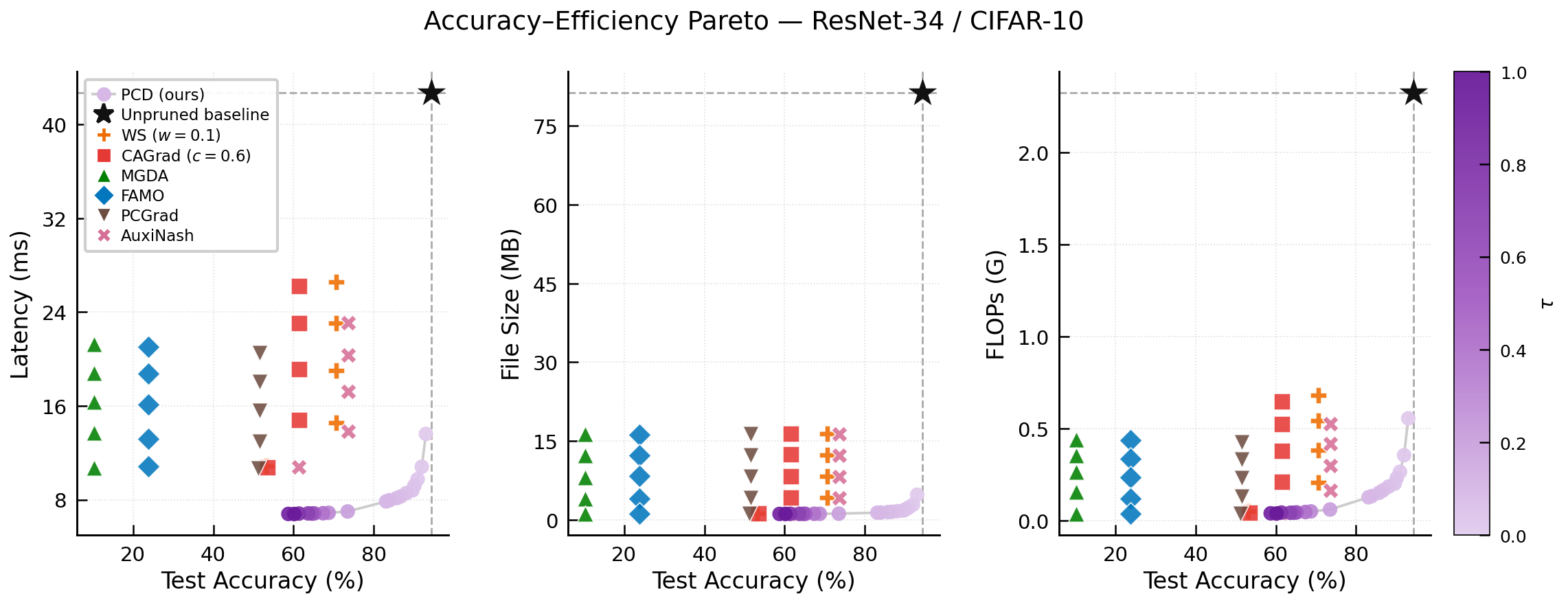}
  \caption{ResNet-34 / CIFAR-10}
\end{subfigure}

\vspace{1.2em}

\begin{subfigure}[t]{0.48\textwidth}
  \includegraphics[width=\linewidth]{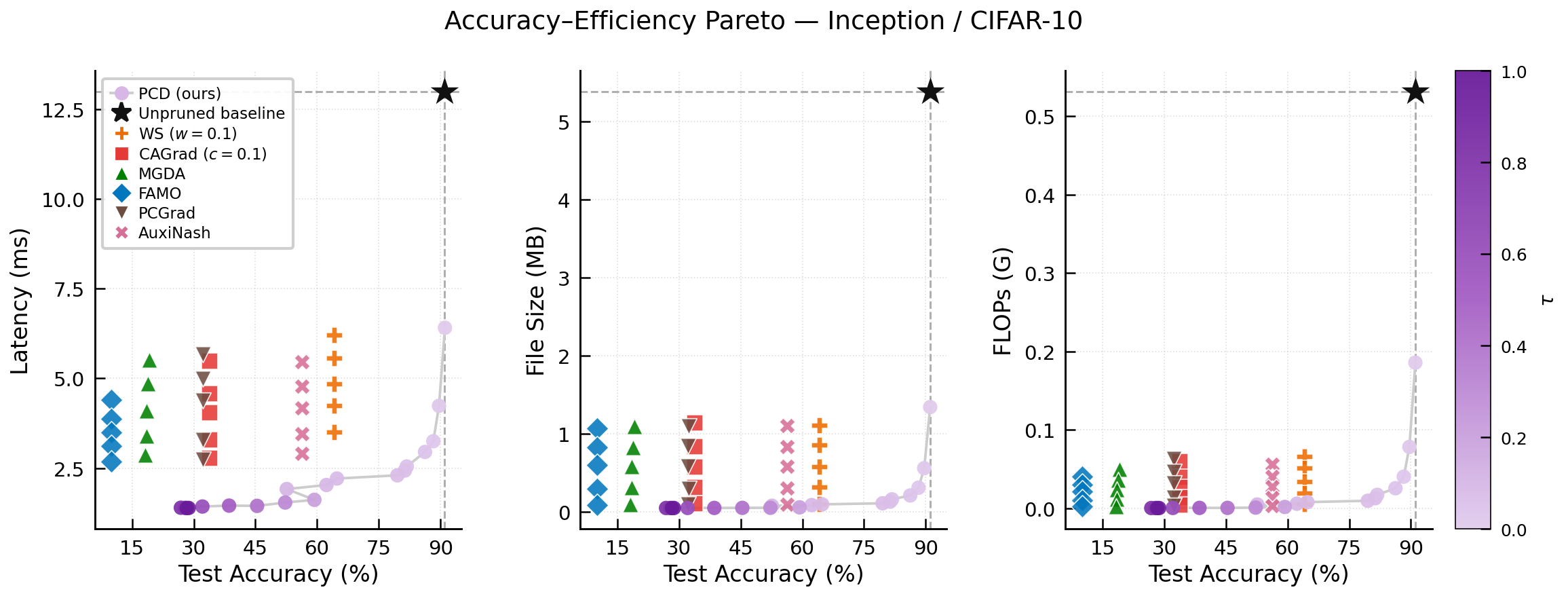}
  \caption{Inception / CIFAR-10}
\end{subfigure}\hfill
\begin{subfigure}[t]{0.48\textwidth}
  \includegraphics[width=\linewidth]{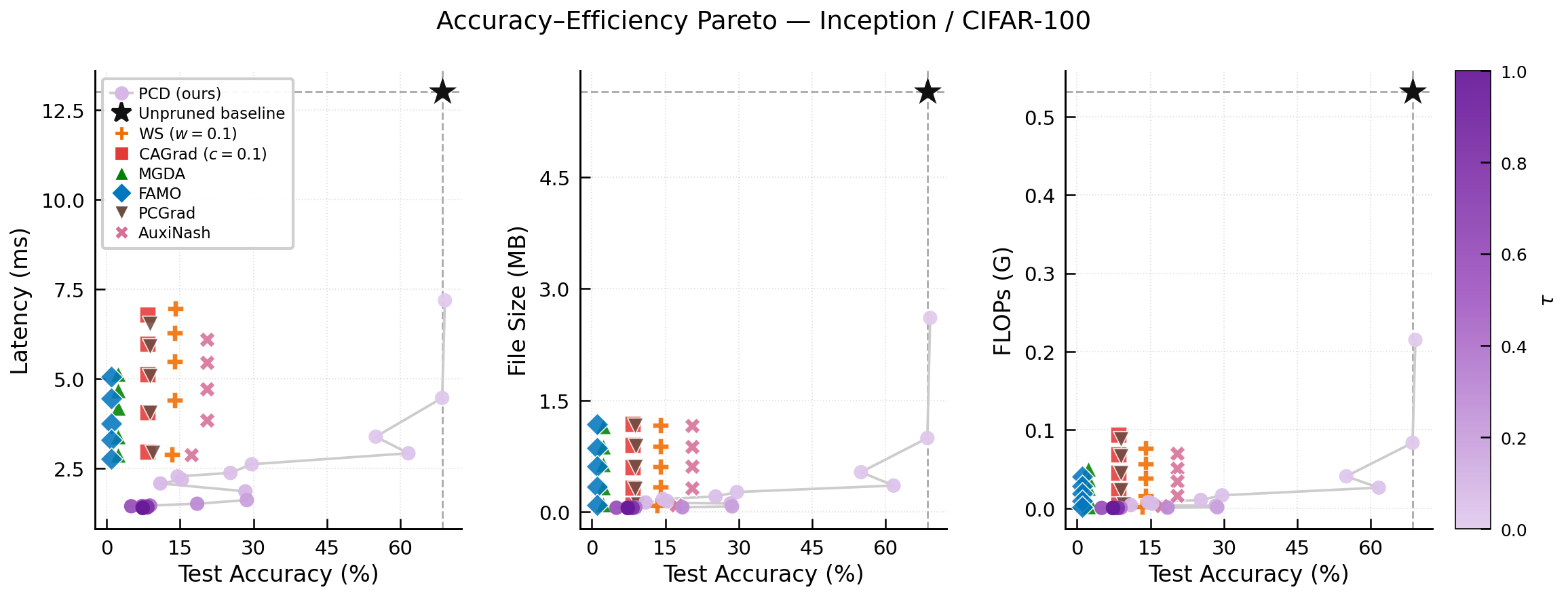}
  \caption{Inception / CIFAR-100}
\end{subfigure}

\vspace{1.2em}

\begin{subfigure}[t]{0.48\textwidth}
  \includegraphics[width=\linewidth]{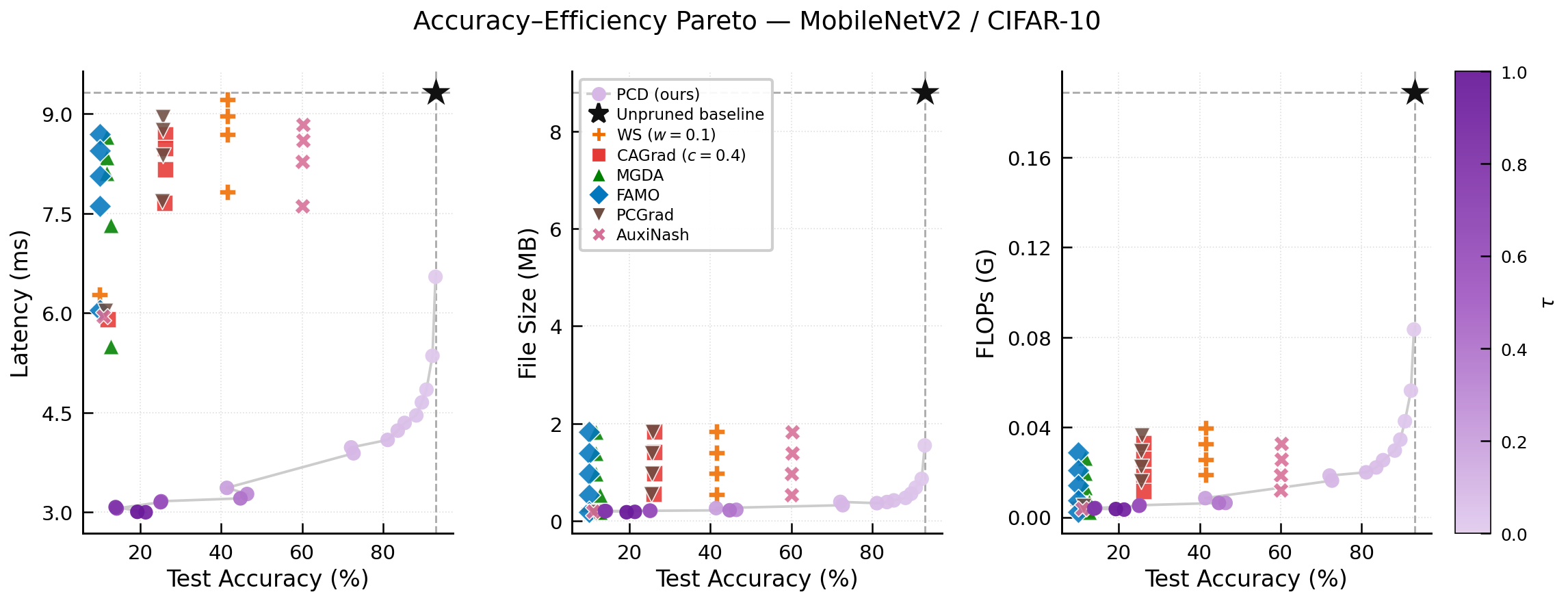}
  \caption{MobileNetV2 / CIFAR-10}
\end{subfigure}\hfill
\begin{subfigure}[t]{0.48\textwidth}
  \includegraphics[width=\linewidth]{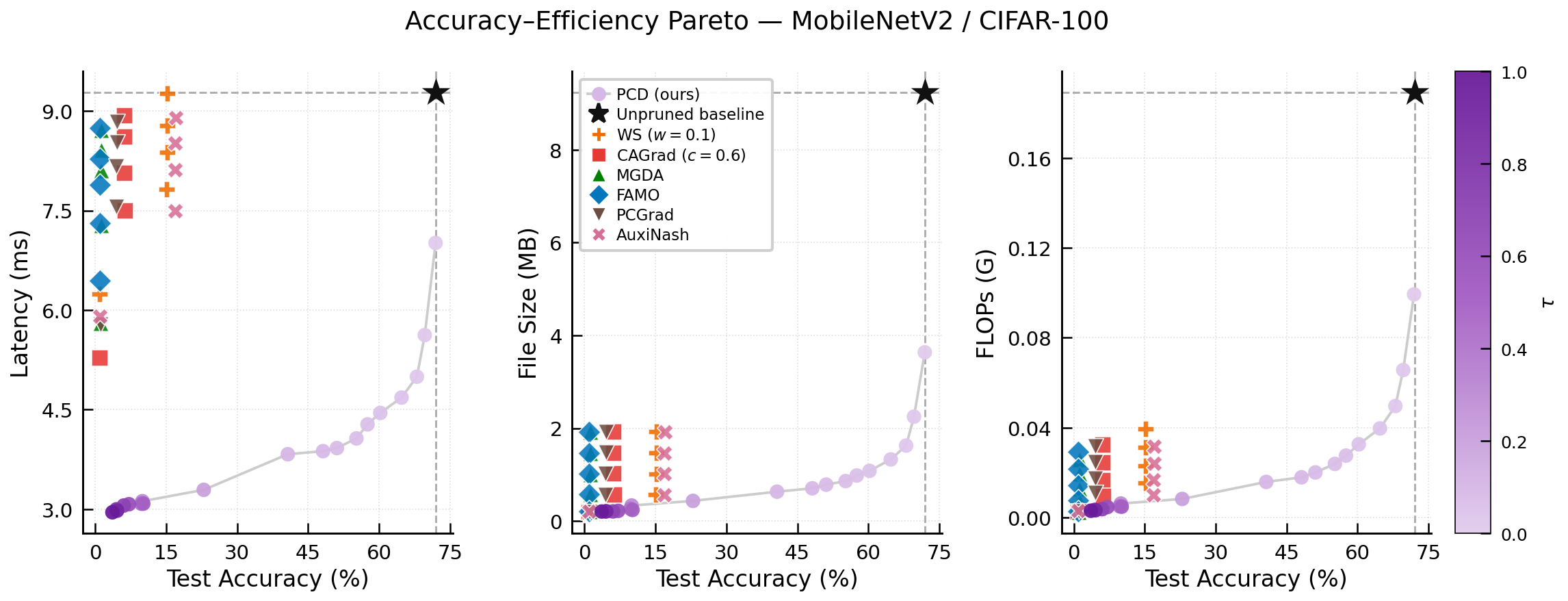}
  \caption{MobileNetV2 / CIFAR-100}
\end{subfigure}

\caption{Accuracy--efficiency Pareto frontiers for PCD across all six configurations, plotted against inference latency (ms), file size (MB), and FLOPs (G). Unpruned baselines marked with $\star$.}
\label{fig:pareto-all}
\end{figure}

\begin{figure}[!htbp]
\centering
\begin{subfigure}[t]{0.48\textwidth}
  \includegraphics[width=\linewidth]{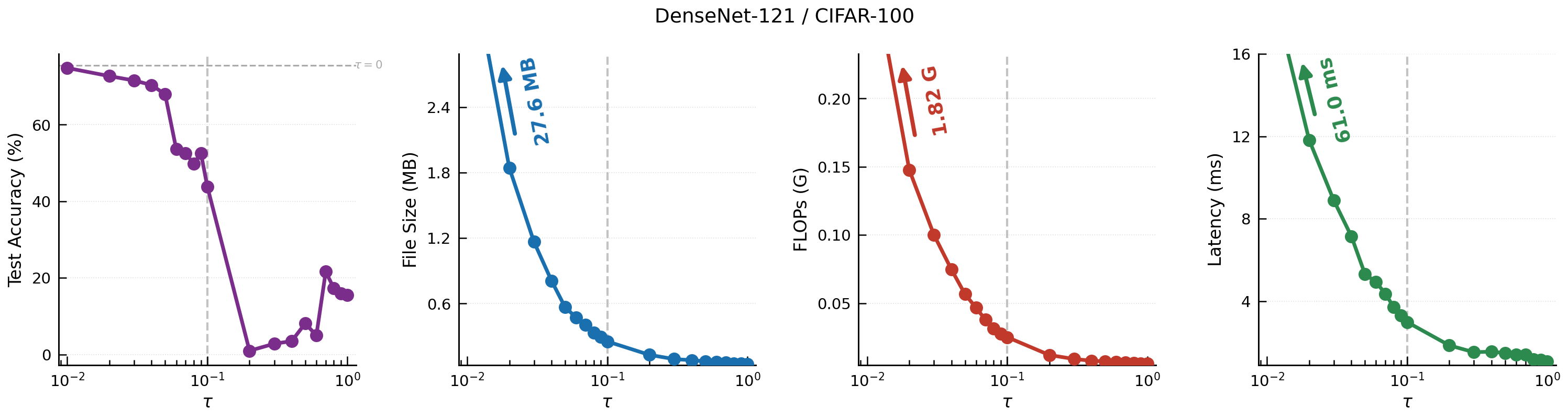}
  \caption{DenseNet-121 / CIFAR-100}
\end{subfigure}\hfill
\begin{subfigure}[t]{0.48\textwidth}
  \includegraphics[width=\linewidth]{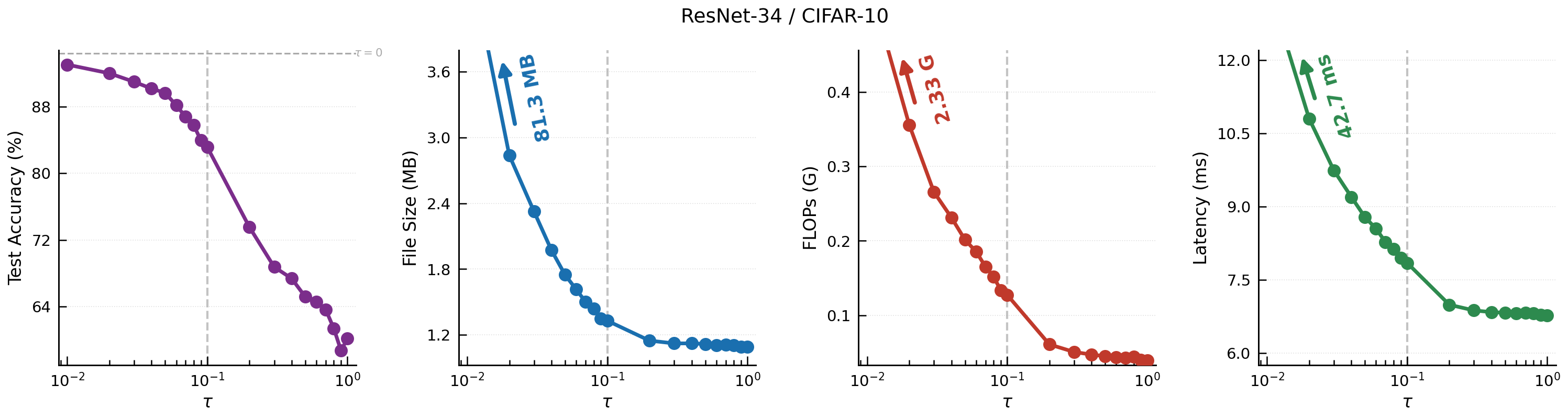}
  \caption{ResNet-34 / CIFAR-10}
\end{subfigure}

\vspace{1em}

\begin{subfigure}[t]{0.48\textwidth}
  \includegraphics[width=\linewidth]{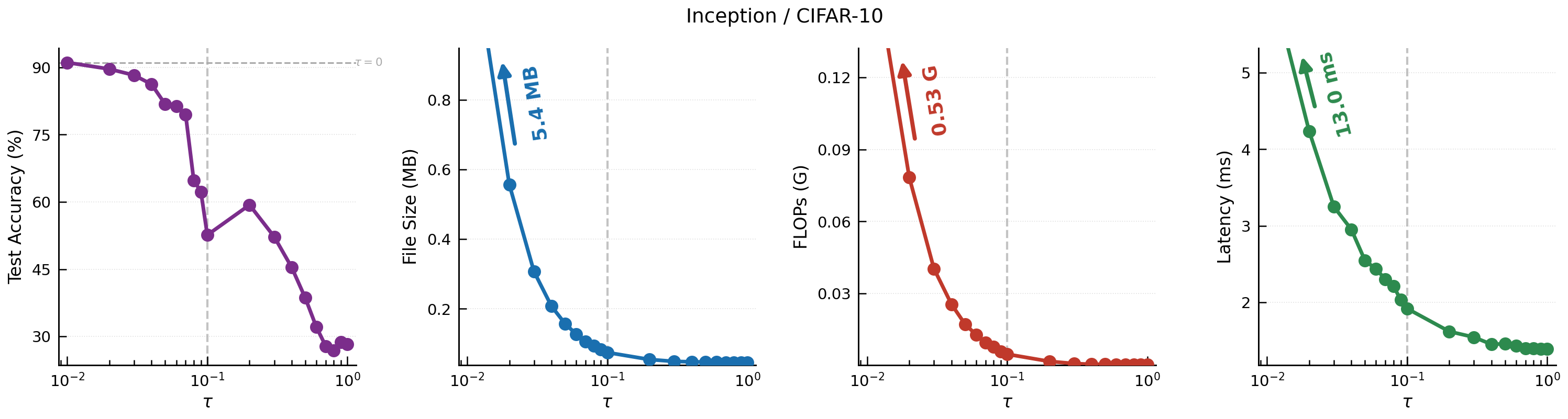}
  \caption{Inception / CIFAR-10}
\end{subfigure}\hfill
\begin{subfigure}[t]{0.48\textwidth}
  \includegraphics[width=\linewidth]{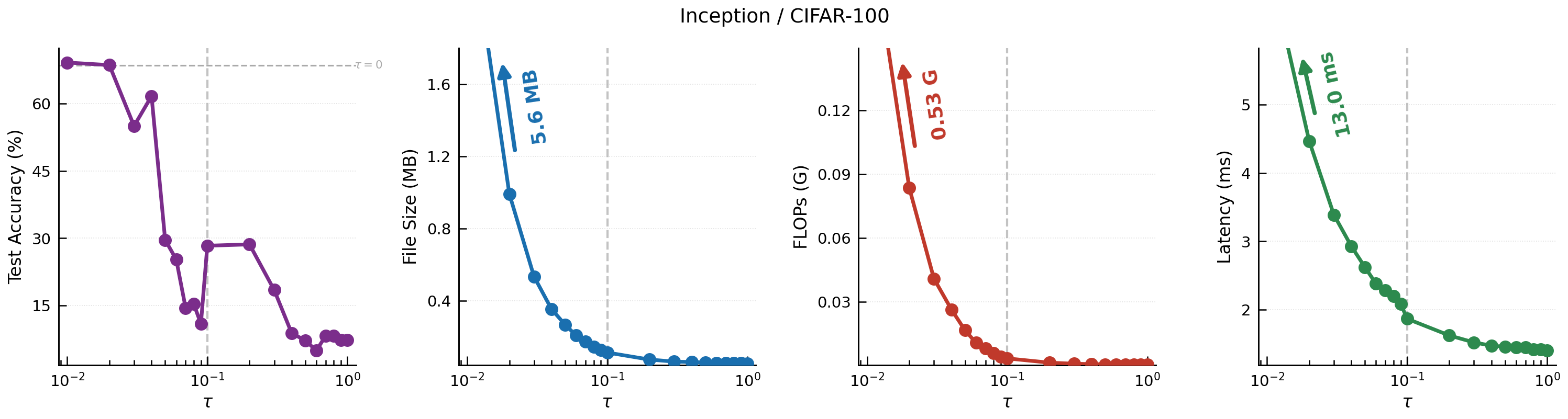}
  \caption{Inception / CIFAR-100}
\end{subfigure}

\vspace{1em}

\begin{subfigure}[t]{0.48\textwidth}
  \includegraphics[width=\linewidth]{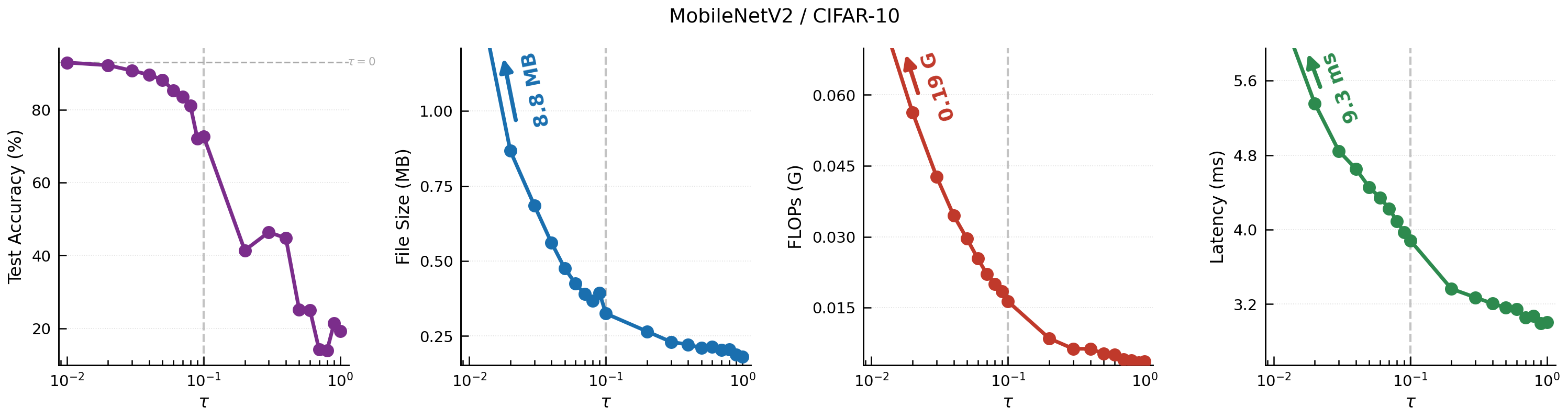}
  \caption{MobileNetV2 / CIFAR-10}
\end{subfigure}\hfill
\begin{subfigure}[t]{0.48\textwidth}
  \includegraphics[width=\linewidth]{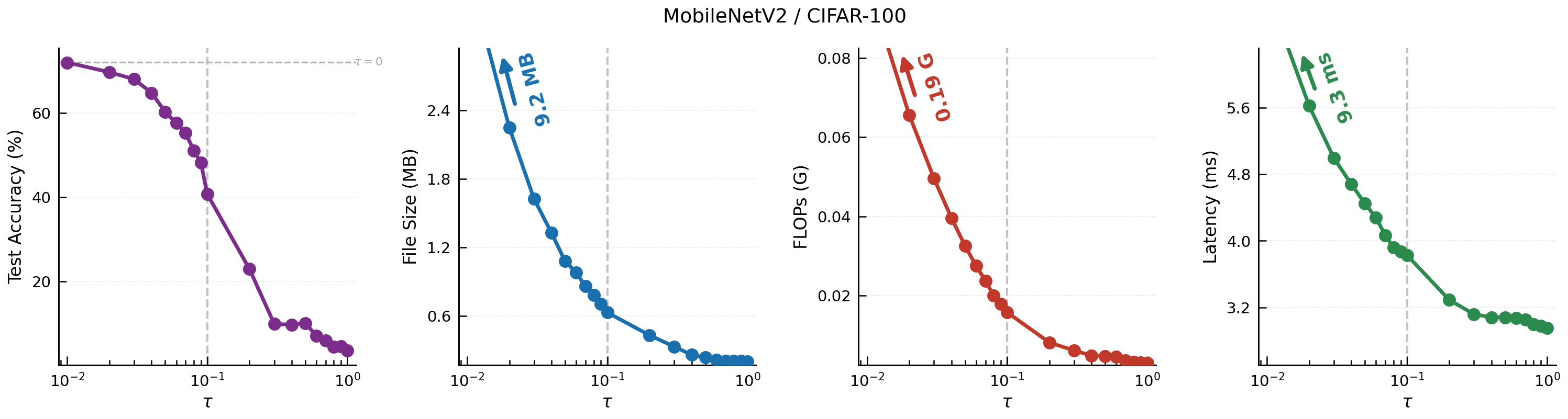}
  \caption{MobileNetV2 / CIFAR-100}
\end{subfigure}

\caption{Accuracy, file size (MB), FLOPs (G), and latency (ms) as a function of the priority tolerance $\tau$ for all six configurations not included in the main body of the paper. Consistently, the majority of compression benefit is realized for $\tau \in (0, 0.1]$. Arrows pointing up represent efficiency metric values at $\tau = 0$ (unpruned baseline).}
\label{fig:tau-sweep-all}
\end{figure}

\subsection{Ablation: Gradient Normalization in PCD}
\label{app:norm-ablation}
 
We ablate the normalization scheme applied to the gradient pair $(\vg_1, \vg_2)$ before the PCD quadratic program. Five conditions are compared: no normalization ($s_i = 1$), instantaneous unit-norm normalization ($s_i = 1/\|g_i\|$), and EMA normalization with $\beta \in \{0.9,\,0.99,\,0.999\}$ (effective gradient-step memory windows of approximately 10, 100, and 1{,}000 steps, respectively). All conditions use DenseNet-121 on CIFAR-10, $\tau=0.01$, $\lambda=10^{-3}$, and the same $\|\vg_1\|$ output rescaling throughout.
 
\begin{table}[h]
\centering
\caption{Normalization ablation results (DenseNet-121, CIFAR-10,
         $\tau{=}0.01$, $\lambda{=}10^{-3}$, seed 42).
         Direction deviation is $\|d^*-g_1\|/\|g_1\|$;
         CE efficiency is $\cos(d^*, \tilde{g}_1)$, both epoch-averaged.}
\label{tab:norm-ablation}
\setlength{\tabcolsep}{6pt}
\begin{tabular}{lcccc}
\toprule
\textbf{Normalization} & \textbf{Acc.\ (\%)} & \textbf{Group Spar.\ (\%)}
  & \textbf{Dir.\ Dev.} & \textbf{CE Eff.} \\
\midrule
None                       & 83.25 & 98.14 & 0.082 & 0.996 \\
Standard (unit norm)       & 93.89 & 79.94 & 0.010 & 1.000 \\
EMA $\beta{=}0.9$          & 94.24 & 81.29 & 0.044 & 0.993 \\
EMA $\beta{=}0.99$         & 94.05 & 81.34 & 0.048 & 0.992 \\
EMA $\beta{=}0.999$        & 94.08 & 81.72 & 0.051 & 0.991 \\
\bottomrule
\end{tabular}
\end{table}

We observe that no normalization produces the worst outcome. The raw gradient magnitudes are severely mismatched ($\|\vg_2\|\approx 100.9$ vs.\ $\|\vg_1\|\approx 15.2$), causing the constraint threshold $\tau\|\vg_2\|^2\approx 101.8$ to dwarf the inner product $\vg_1^\top \vg_2\approx 0$, so the constraint slack is $\approx{-101.8}$ throughout yet the actual correction to the descent direction is negligible. Standard normalization operates as expected: the normalized constraint slack is exactly $-\tau$ at every epoch, and CE efficiency is 1.000 throughout, confirming the constraint costs essentially nothing in CE alignment. However, group sparsity in this case drops in comparison with EMA-normalized alternatives. EMA normalization achieves the best accuracy across all three $\beta$ values (94.05--94.24\%) while matching standard normalization on sparsity. EMA keeps the scale ratio $s_\mathrm{CE}/s_\mathrm{GL}$ moderate throughout, so $\tau$ retains its intended meaning as a fractional progress requirement. The three EMA variants are nearly indistinguishable (final metrics differ by $<0.2$\% in accuracy, $<0.5$\% in sparsity), suggesting that window size is not a critical factor at $\tau{=}0.01$. We adopt $\beta{=}0.999$ as the default: it is the most conservative choice and matches the established convention in adaptive gradient methods.

\end{document}